\def\eqref#1{equation~\ref{#1}}
\def\1{\bm{1}}
\DeclareMathAlphabet{\mathsfit}{\encodingdefault}{\sfdefault}{m}{sl}
\SetMathAlphabet{\mathsfit}{bold}{\encodingdefault}{\sfdefault}{bx}{n}
\newtheorem{definition}{Definition}
\newtheorem{proposition}{Proposition}
\newtheorem{theorem}{Theorem}
\newtheorem{corollary}{Corollary}
\newtheorem{lemma}{Lemma}
\providecommand{\customgenericname}{}
\newcommand{\newcustomtheorem}[2]{%
  \newenvironment{#1}[1]
  {%
   \renewcommand\customgenericname{#2}%
   \renewcommand\theinnercustomgeneric{##1}%
   \innercustomgeneric
  }
  {\endinnercustomgeneric}
}
\def\expandafter\normalsize\expandafter{%
    \normalsize
    \setlength\abovedisplayskip{1pt}
    \setlength\belowdisplayskip{1pt}
    \setlength\abovedisplayshortskip{1pt}
    \setlength\belowdisplayshortskip{1pt}
}
\newcommand*{\Scale}[2][4]{\scalebox{#1}{$#2$}}%
\renewcommand{\AB@affilsepx}{\\}
\renewcommand{\AB@authnote}[1]{\textsuperscript{#1}}
\renewcommand{\AB@affilnote}[1]{\textsuperscript{#1}}
\title{Theory on Mixture-of-Experts in \\ Continual Learning}
\author[1,3]{Hongbo Li}
\author[2]{Sen Lin}
\author[1]{Lingjie Duan}
\author[3]{Yingbin Liang}
\author[3,4]{Ness B.~Shroff}
\affil[1]{Engineering Systems and Design Pillar, Singapore University of Technology and Design\\
\texttt{hongbo\_li@mymail.sutd.edu.sg, lingjie\_duan@sutd.edu.sg}}
\affil[2]{Department of Computer Science, University of Houston\\
\texttt{slin50@central.uh.edu}}
\affil[3]{Department of ECE, The Ohio State University\\
\texttt{\{li.15242, liang.889, shroff.11\}@osu.edu}}
\affil[4]{Department of CSE, The Ohio State University}
\begin{document}

\maketitle

\begin{abstract}
Continual learning (CL) has garnered significant attention because of its ability to adapt to new tasks that arrive over time. Catastrophic forgetting (of old tasks) has been identified as a major issue in CL, as the model adapts to new tasks.
The Mixture-of-Experts (MoE) model has recently been shown to effectively mitigate catastrophic forgetting in CL, by employing a gating network to sparsify and distribute diverse tasks among multiple experts.
However, there is a lack of theoretical analysis of MoE and its impact on the learning performance in CL. 
This paper provides the first theoretical results to characterize the impact of MoE in CL via the lens of overparameterized linear regression tasks. 
We establish the benefit of MoE over a single expert by proving that the MoE model can diversify its experts to specialize in different tasks, while its router learns to select the right expert for each task and balance the loads across all experts. Our study further suggests an intriguing fact that the MoE in CL needs to terminate the update of the gating network after sufficient training rounds to attain system convergence, which is not needed in the existing MoE studies that do not consider the continual task arrival. 
Furthermore, we provide explicit expressions for the expected forgetting and overall generalization error to characterize the benefit of MoE in the learning performance in CL. Interestingly, adding more experts requires additional rounds before convergence, which may not enhance the learning performance. Finally, we conduct experiments on both synthetic and real datasets to extend these insights from linear models to deep neural networks (DNNs), which also shed light on the practical algorithm design for MoE in CL.
\end{abstract}

\section{Introduction}\label{Section1}
Continual Learning (CL) has emerged as an important paradigm in machine learning (\cite{parisi2019continual,wang2024comprehensive}), in which an expert aims to learn a sequence of tasks one by one over time. 
The expert is anticipated to leverage the knowledge gained from old tasks to facilitate learning new tasks, while simultaneously enhancing the performance of old tasks via the knowledge obtained from new ones. 
Given the dynamic nature of CL, one major challenge herein is known as \emph{catastrophic forgetting} (\cite{mccloskey1989catastrophic,kirkpatrick2017overcoming}), where the expert can perform poorly on (i.e.,
easily forget) the previous tasks when learning new tasks if data distributions change largely across tasks. This becomes a more serious issue if a single expert continues to serve an increasing number of tasks. 

Recently, the sparsely-gated Mixture-of-Experts (MoE) model has achieved astonishing successes in deep learning, especially in the development of large language models (LLMs) (e.g., \cite{du2022glam,li2024locmoe,lin2024moe,xue2024openmoe}). By adaptively routing different input data to one of the multiple experts through a gating network (\cite{eigen2013learning,shazeer2016outrageously}), 
different experts in the MoE model will be specialized to grasp different knowledge in the data. Thus inspired, there have emerged several attempts to leverage MoE in mitigating the forgetting issue in CL (e.g., \cite{hihn2021mixture,wang2022coscl,doan2023continual,rypesc2023divide,yu2024boosting}) by training each expert to handle a particular set of tasks. However, these studies have primarily focused on experimental investigations, whereas the theoretical understanding of MoE and its impact on the learning performance in CL is still lacking. In this paper, we aim to fill this gap by providing the first explicit theoretical results to comprehensively understand MoE in CL.

To this end, we study
the sparsely-gated MoE model with $M$ experts in CL  through the lens of overparameterized linear regression tasks (\cite{viele2002modeling,anandkumar2014tensor,zhong2016mixed}). In our setting of CL, one learning task arrives in each round and its 
dataset is generated with the ground truth randomly drawn from a shared pool encompassing $N$ unknown linear models.
Subsequently, the data is fed into a parameterized gating network, guided by which a softmax-based router will route the task to one of the $M$ experts for model training. 
The trained MoE model then further updates the gating network by gradient descent (GD). After that, a new task arrives and the above process repeats until the end of CL.
It is worth noting that analyzing linear models is an important first step towards understanding the performance of deep neural networks (DNNs), as shown in many recent studies (e.g., \cite{evron2022catastrophic,lin2023theory,chen2022towards}). 

Our main contributions are summarized as follows.

We provide \emph{the first theoretical analysis to understand the behavior of MoE in CL}, through the lens of overparameterized linear regression tasks. By updating the gating network with a carefully designed loss function, we show that after sufficient training rounds (on the order of $\mathcal{O}(M)$) in CL for expert exploration and router learning, the MoE model will diversify and move into a balanced system state: Each expert will specialize either in a specific task (if $M>N$) or in a cluster of similar tasks (if $M<N$), and the router will consistently select the right expert for each task. Another interesting finding is that, unlike existing studies in MoE (e.g., \cite{fedus2022switch,chen2022towards,li2024locmoe}), it is necessary to terminate the update of the gating network for MoE due to the dynamics of task arrival in CL. This will ensure that the learning system eventually converges to a stable state with balanced loads among all experts.


We provide \emph{explicit expressions of the expected forgetting and generalization error to characterize the benefit of MoE on the performance of CL}.
1) Compared to the single expert case ($M=1$), where tasks are learned by a single diverged model, the MoE model with diversified experts significantly enhances the learning performance, especially with large changes in data distributions across tasks. 2) Regardless of whether there are more experts ($M>N$) or fewer experts ($M<N$), both forgetting and generalization error converge to a small constant. This occurs because the router consistently selects the right expert for each task after the MoE model converges, {efficiently minimizing model errors caused by switching tasks.} 
3) In MoE, initially adding more experts requires additional exploration rounds before convergence, which does not necessarily improve learning performance.

Finally, we conduct extensive experiments to verify our theoretical results. Specifically, our experimental results on synthetic data with linear models not only support our above-mentioned theoretical findings, but also show that load balancing reduces the average generalization error. This effectively improves the capacity of the MoE model compared to the unbalanced case. More importantly, the experiments on real datasets suggest that our theoretical findings can be further carried over beyond linear models to DNNs, which also provides insights on practical algorithm design for MoE in CL.

\section{Related work}\label{Section2}

\textbf{Continual learning.} In the past decade, various empirical approaches have been proposed to tackle catastrophic forgetting in CL, which generally fall into three categories: 1) Regularization-based approaches (e.g., \cite{kirkpatrick2017overcoming,ritter2018online,gou2021knowledge,liu2021continual}), which introduce explicit regularization terms on key model parameters trained by previous tasks to balance old and new tasks. 2) Parameter-isolation-based approaches (e.g., \cite{chaudhry2018efficient,serra2018overcoming,jerfel2019reconciling,yoon2019scalable,konishi2023parameter}), which isolate parameters associated with different tasks to prevent interference between parameters. 3) Memory-based approaches (e.g., \cite{farajtabar2020orthogonal,jin2021gradient,lin2021trgp,saha2020gradient,tang2021layerwise,gao2023ddgr}), which store data or gradient information from old tasks and replay them during training of new tasks.

On the other hand, theoretical studies on CL are very limited. Among them, \cite{doan2021theoretical} and \cite{bennani2020generalisation} introduce NTK overlap matrix to measure the task similarity and propose variants of the orthogonal gradient descent approach to address catastrophic forgetting. 
\cite{lee2021continual} consider a teacher-student framework to examine the impact of task similarity on learning performance. \cite{peng2023ideal} propose an ideal CL framework that can achieve no forgetting by assuming i.i.d. data distributions for all tasks. \cite{evron2022catastrophic} provide forgetting bounds in overparameterized linear models on different task orders. Further, \cite{lin2023theory} provide explicit forms of forgetting and overall generalization error based on the testing error. These works collectively suggest that learning performance with a {\em single} expert tends to deteriorate when subsequent tasks exhibit significant diversification. In contrast, our work is the first to conduct theoretical analysis to understand the benefit of {\em multiple} experts on CL.

\textbf{Mixture-of-Experts model.} The MoE model has been extensively studied over the years for enhancing model capacity in deep learning (e.g., \cite{eigen2013learning,riquelme2021scaling,wang2022learning,zhou2022mixture,chi2022representation,zadouri2023pushing}). Recently, it has found widespread applications in emerging fields such as LLMs (e.g., \cite{du2022glam,li2024locmoe,lin2024moe,xue2024openmoe}). To improve training stability and simplify the MoE structure, \cite{shazeer2016outrageously} propose to sparsify the output of the gating network.
Subsequently, \cite{fedus2022switch} suggest routing each data sample to a single expert instead of multiple experts. 
{For theoretical studies, \cite{nguyen2018practical} propose a maximum quasi-likelihood method for estimating MoE parameters, while \cite{chen2022towards} analyze MoE mechanisms in deep learning for single-task classification. Unlike these works, which do not address sequential task training in CL, our study focuses on MoE in CL, introducing distinct training phases for the gating network. Additionally, we derive explicit expressions for forgetting and generalization errors.}


\textbf{MoE in CL.} Recently, the MoE model has been applied to reducing catastrophic forgetting in CL (\cite{lee2020neural,hihn2021mixture,wang2022coscl,doan2023continual,rypesc2023divide,yu2024boosting}). {For example, \cite{lee2020neural} expand the number of experts using the Bayesian nonparametric framework to address task-free CL.} \cite{rypesc2023divide} propose to diverse experts by routing data with minimal distribution overlap to each expert and then combine experts' knowledge during task predictions to enhance learning stability. Additionally, \cite{yu2024boosting} apply MoE to expand the capacity of vision-language models, alleviating forgetting in CL. However, these works solely focus on empirical methods, lacking theoretical analysis of how the MoE performs in CL.

\section{Problem setting and MoE model design}\label{Section3}

\textbf{Notations.} For a vector $\bm{w}$, let $\|\bm{w}\|_2$ and $\|\bm{w}\|_{\infty}$ denote its $\ell$-$2$ and $\ell$-${\infty}$ norms, respectively. For some positive constant $c_1$ and $c_2$, we define $x=\Omega(y)$ if $x>c_2|y|$, $x=\Theta(y)$ if $c_1|y|<x<c_2|y|$, and $x=\mathcal{O}(y)$ if $x<c_1|y|$. We also denote by $x=o(y)$ if $x/y\rightarrow 0$.

\subsection{CL in linear models}

\textbf{General setting.} We consider the CL setting with $T$ training rounds. In each round $t\in[T] $, one out of $N$ tasks randomly arrives to be learned by the MoE model with $M$ experts. {For each task, we follow most theoretical work on CL by fitting a linear model $f(\mathbf{X})=\mathbf{X}^\top \bm{w}$ with ground truth $\bm{w}\in \mathbb{R}^d$ (e.g., \cite{evron2022catastrophic,lin2023theory}), which serves as a foundation for understanding DNN generalization performance (\cite{belkin2018understand,ju2020overfitting}).} Then, for the task arrival in the $t$-th training round, it corresponds to a linear regression problem, where the training dataset is denoted by $\mathcal{D}_t=(\mathbf{X}_t,\mathbf{y}_t)$. Here $\mathbf{X}_t\in\mathbb{R}^{d\times s_t}$ is the feature matrix with $s_t$ samples of $d$-dimensional vectors, and $\mathbf{y}_t\in\mathbb{R}^{s_t}$ is the output vector. In this study, we focus on the overparameterized regime, where $s_t<d$. Consequently, there exist numerous linear models that can perfectly fit the data.

\textbf{Ground truth and dataset.} Let $\mathcal{W}=\{\bm{w}_1,\cdots,\bm{w}_N\}$ represent the collection of ground truth vectors of all $N$ tasks. For any two tasks $n,n'\in[N]$, we assume $\|\bm{w}_n-\bm{w}_{n'}\|_{\infty}=\mathcal{O}(\sigma_0)$, where $\sigma_0\in(0,1)$ denotes the variance. Moreover, we assume that task~$n$ possesses a unique feature signal $\bm{v}_n\in\mathbb{R}^d$ with $\|\bm{v}_n\|_{\infty}=\mathcal{O}(1)$ (\cite{chen2022towards,huang2024context}).

In each training round $t\in[T]$, let $n_t\in[N]$ denote the index of the current task arrival with ground truth $\bm{w}_{n_t}\in\mathcal{W}$. In the following, we formally define the generation of dataset per training round.
\begin{definition}\label{def:feature_structure}
At the beginning of each training round $t\in[T]$, the dataset $\mathcal{D}_t=(\mathbf{X}_{t},\mathbf{y}_{t})$ of the new task arrival $n_t$ is generated by the following steps:

1) Uniformly draw a ground truth $\bm{w}_{n}$ from ground-truth pool $\mathcal{W}$ and let $\bm{w}_{n_t}=\bm{w}_{n}$.

2) Independently generate a random variable $\beta_t\in(0,C]$, where $C$ is a constant satisfying $C=\mathcal{O}(1)$.

3) Generate $\mathbf{X}_t$ as a collection of $s_t$ samples, where one sample is given by $\beta_t \bm{v}_{n_t}$ and the  rest of the $s_t-1$ samples are drawn from normal distribution $\mathcal{N}(\bm{0},\sigma_t^2\bm{I}_d)$, where $\sigma_t\geq 0$ is the noise level.

4) Generate the output to be $\mathbf{y}_t=\mathbf{X}_t^\top\bm{w}_{n_t}$.
\end{definition}

In any training round $t\in[T]$, the actual ground truth $\bm{w}_{n_t}$ of task arrival $n_t$ is unknown. However, according to \Cref{def:feature_structure}, task $n_t$ can be classified into one of $N$ clusters based on its feature signal $\bm{v}_{n_t}$. Although the position of $\bm{v}_{n_t}$ in feature matrix $\mathbf{X}_t$ is not specified for each task $n_t$, we can address this binary classification sub-problem over $\mathbf{X}_t$ using a single gating network in MoE (\cite{shazeer2016outrageously,fedus2022switch,chen2022towards}). 
In this context, we aim to investigate whether the MoE model can enhance the learning performance in CL. For ease of exposition, we assume $s_t=s$ for all $t\in[T]$ in this paper. Then we will introduce the MoE model in the following subsections.

\subsection{Structure of the MoE model}

\begin{wrapfigure}{r}{0.35\textwidth}
    \vspace{-0.8cm}
 \includegraphics[width=0.35\textwidth]{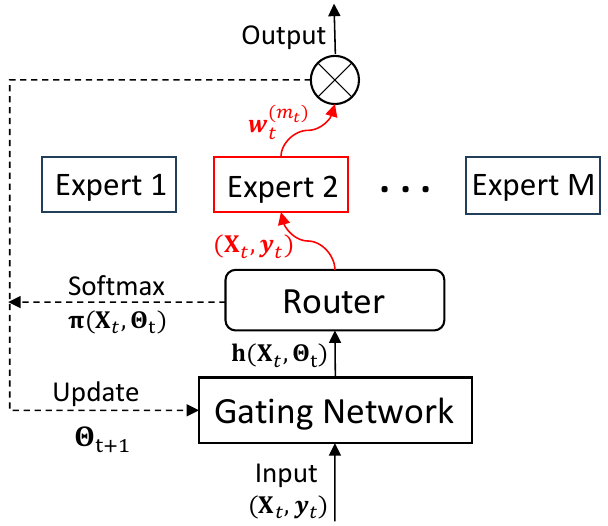}
 \vspace{-0.6cm}
    \caption{An illustration of the MoE model.} 
    \label{fig:MoE}
    \vspace{-1cm}
\end{wrapfigure}
As shown in \Cref{fig:MoE}, an MoE model comprises a collection of $M$ experts, a router, and a gating network which is typically set to be linear (\cite{shazeer2016outrageously,fedus2022switch,chen2022towards}). 
In the $t$-th round, upon the arrival of task $n_t$ and input of its data $\mathcal{D}_t=(\mathbf{X}_t,\mathbf{y}_t)$, the gating network computes its linear output $h_{m}(\mathbf{X}_t,\bm{\theta}^{(m)}_t)$ for each expert $m\in[M]$, where $\bm{\theta}^{(m)}_t\in \mathbb{R}^{d}$ is gating network parameter for expert~$m$. 
Define $\mathbf{h}(\mathbf{X}_t,\mathbf{\Theta}_t):=[h_1(\mathbf{X}_t,\bm{\theta}_{t}^{(1)})\ \cdots\ h_M(\mathbf{X}_t,\bm{\theta}_{t}^{(M)})]$ and $\mathbf{\Theta}_t:=[\bm{\theta}_{t}^{(1)}\ \cdots \ \bm{\theta}_{t}^{(M)}]$ as the outputs and the parameters of the gating network for all experts, respectively. Then we obtain 
$\mathbf{h}(\mathbf{X}_t,\mathbf{\Theta}_t)=\sum_{i\in[s_t]}\mathbf{\Theta}_t^\top\mathbf{X}_{t,i}\label{h_X_theta}$,
where $\mathbf{X}_{t,i}$ is the $i$-th sample of the feature matrix $\mathbf{X}_{t}$. 

{To sparsify the gating network and reduce the computation cost, we employ top-$1$ ``switch routing", which maintains model quality while lowering routing computation, as demonstrated by \cite{fedus2022switch,chen2022towards,yang2021m6}. Although this top-$1$ gating model is simple, it is fundamental gaining a theoretical understanding of the behavior of MoE in CL, and its theoretical analysis is already non-trivial.
Extending to the top-$k$ routing strategy (as introduced by \cite{shazeer2016outrageously}) is nontrivial and falls outside the scope of this work.
However, we still provide a discussion on the learning performance of the top-$k$ routing strategy later in \Cref{section5-2}.}

In each round $t$, as depicted in \Cref{fig:MoE}, for task $n_t$, the router selects the expert with the maximum gate output $h_{m}(\mathbf{X}_t,\bm{\theta}_t^{(m)})$, denoted as $m_t$, from the $M$ experts (\cite{chen2022towards,nguyen2024statistical}).
In practice, to encourage exploration across experts and stabilize MoE training, we add perturbations to the router (\cite{shazeer2016outrageously,fedus2022switch,chen2022towards}). Specifically, task $n_t$ will be routed to the expert that satisfies
\begin{align}
\textstyle    m_t=\arg\max_m\{h_{m}(\mathbf{X}_t,\bm{\theta}_{t}^{(m)})+r_t^{(m)}\},\label{routing_strategy}
\end{align}
where $r_t^{(m)}$ for any $m\in[M]$ is drawn independently from the uniform distribution $\text{Unif}[0,\lambda]$. We analyze in \Cref{Appendix_smooth_router} that this routing strategy \cref{routing_strategy} ensures continuous and stable transitions for tasks.
Additionally, the router calculates the softmax gate outputs, derived by
\begin{align}
\textstyle    \pi_m(\mathbf{X}_t,\mathbf{\Theta}_t)=\frac{\text{exp}(h_m(\mathbf{X}_t,\bm{\theta}^{(m)}_{t}))}{\sum_{m'=1}^M\text{exp}(h_{m'}(\mathbf{X}_t,\bm{\theta}^{(m)}_{t}))},\quad \forall m\in[M],\label{softmax}
\end{align}
for the MoE to update the gating network parameter $\mathbf{\Theta}_{t+1}$ for all experts.

\subsection{Training of the MoE model with key designs}\label{Section3-3}

\textbf{Expert model.} Let $\bm{w}_t^{(m)}$ denote the model of expert $m$ in the $t$-th training round, where each model is initialized from zero, i.e.,  $\bm{w}_0^{(m)}=0$ for any $m\in[M]$.
After the router determines the expert $m_t$ by \cref{routing_strategy}, it transfers the dataset $\mathcal{D}_t=(\mathbf{X}_t,y_t)$ to this expert for updating $\bm{w}^{(m_t)}_t$. For any other expert $m\in [M]$ not selected ( i.e., $m\neq m_t$), its model $\bm{w}_t^{(m)}$ remains unchanged from $\bm{w}_{t-1}^{(m)}$.
In each round $t$, the training loss is defined by the mean-squared error (MSE) relative to dataset $\mathcal{D}_t$:
\begin{align}
\textstyle    \mathcal{L}_t^{tr}(\bm{w}_t^{(m_t)},\mathcal{D}_t)=\frac{1}{s_t}\|(\mathbf{X}_t)^{\top}\bm{w}_t^{(m_t)}-\mathbf{y}_t\|_2^2.\label{training_loss}
\end{align}
Since we focus on the overparameterized regime, there exist infinitely many solutions that perfectly satisfy $\mathcal{L}_t^{tr}(\bm{w}_t^{(m_t)},\mathcal{D}_t)=0$ in \cref{training_loss}.
Among these solutions, gradient descent (GD) starting from the previous expert model $\bm{w}_{t-1}^{(m_t)}$ at the convergent point provides a unique solution for minimizing $\mathcal{L}_t^{tr}(\bm{w}_t^{(m_t)},\mathcal{D}_t)$ in \cref{training_loss}, which is determined by the following optimization problem (\cite{evron2022catastrophic,gunasekar2018characterizing,lin2023theory}):
\begin{align}
    \min_{\bm{w}_t}\ \ \|\bm{w}_t-\bm{w}_{t-1}^{(m_t)}\|_2, \quad
    \text{s.t.}\ \ \mathbf{X}_t^\top \bm{w}_t=\mathbf{y}_t. \label{optimization_SGD}
\end{align}
Solving \cref{optimization_SGD}, we update the selected expert in the MoE model for the current task arrival $n_t$ as follows, while keeping the other experts unchanged:
\begin{align}
    \bm{w}_t^{(m_t)}=\bm{w}_{t-1}^{(m_t)}+\mathbf{X}_t(\mathbf{X}_t^\top \mathbf{X}_t)^{-1}(\mathbf{y}_t-\mathbf{X}_t^{\top}\bm{w}_{t-1}^{(m_t)}).\label{update_wt}
\end{align}


\textbf{Gating network parameters.}
After obtaining $\bm{w}_t^{(m_t)}$ in \cref{update_wt}, the MoE updates the gating network parameter from $\bm{\Theta}_{t}$ to $\bm{\Theta}_{t+1}$ using GD for the next training round. On one hand, we aim for $\bm{\theta}_{t+1}^{(m)}$ of each expert $m$ to specialize in a specific task, which helps mitigate learning loss caused by the incorrect routing of distinct tasks. On the other hand, the router needs to balance the load among all experts (\cite{fedus2022switch,shazeer2016outrageously,li2024locmoe}) to reduce the risk of model overfitting and enhance the learning performance in CL. To achieve this, {we introduce our first key design of multi-objective training loss for gating network updates}.

\underline{\em Key design I: Multi-objective training loss.} First, based on the updated expert model $\bm{w}_t^{(m_t)}$ in \cref{update_wt}, we propose the following locality loss function for updating $\mathbf{\Theta}_{t}$:
\begin{align}
\textstyle    \mathcal{L}_t^{loc}(\bm{\Theta}_t,\mathcal{D}_t)=\sum_{m\in[M]}\pi_{m}(\mathbf{X}_t,\bm{\Theta}_t)\|\bm{w}_t^{(m)}-\bm{w}_{t-1}^{(m)}\|_2,\label{loc_loss}
\end{align}
where $\pi_{m}(\mathbf{X}_t,\bm{\Theta}_t)$ is the softmax output defined in \cref{softmax}.
Since our designed locality loss in \cref{loc_loss} is minimized when the tasks with similar ground truths are routed to the same expert $m$ (e.g., $\bm{w}_t^{(m)}=\bm{w}_{t-1}^{(m)}$), it enjoys several benefits as shown later in our theoretical results: each expert will specialize in a particular set of tasks which leads to fast convergence of expert model $\bm{w}^{(m)}_t$, and the performance of CL in terms of forgetting and generalization error will be improved.
{Note that in \cref{loc_loss}, we only need to calculate the locality loss for the single expert $m_t$, as $\|\bm{w}_t^{(m)} - \bm{w}_{t-1}^{(m)}\|_2 = 0$ for any expert $m \neq m_t$ that has not updated its model, leading to low computational complexity.}

In addition to the {novel} locality loss in \cref{loc_loss}, we follow the existing MoE literature (e.g., \cite{fedus2022switch,shazeer2016outrageously,li2024locmoe}) where an auxiliary loss is typically defined to characterize load balance among the experts:
\begin{align}
\textstyle    \mathcal{L}_t^{aux}(\mathbf{\Theta}_t, \mathcal{D}_t)=\alpha\cdot M\cdot \sum_{m\in[M]}f_t^{(m)}\cdot P_t^{(m)},\label{auxiliary_loss}
\end{align}
\begin{wrapfigure}{r}{.53\textwidth}
    \vspace{-2em}
    \begin{minipage}[t]{.53\textwidth}
\begin{algorithm}[H]
\caption{Training of the MoE model for CL}
\small
\label{algo:update_MoE}
\begin{algorithmic}[1]
\STATE \textbf{Input:} $T,\sigma_0,\Gamma=\mathcal{O}(\sigma_0^{1.25}),\lambda=\Theta(\sigma_0^{1.25}),
I^{(m)}=0,\alpha=\mathcal{O}(\sigma_0^{0.5}),\eta=\mathcal{O}(\sigma_0^{0.5}), T_1=\lceil \eta^{-1}M\rceil$;\
 \STATE Initialize $\bm{\theta}^{(m)}_0=\bm{0}$ and $\bm{w}_0^{(m)}=\bm{0}$, $\forall m\in[M]$;\
\FOR{$t=1,\cdots, T$}
\STATE Generate $r_t^{(m)}$ for any $m\in[M]$;\
\STATE Select $m_t$ in \cref{routing_strategy} and update $\bm{w}^{(m_t)}_t$ in \cref{update_wt};\ 
\IF{$t>T_1$} 
\FOR{$\forall m\in[M]$ with $|h_{m}-h_{m_t}|<\Gamma$}\label{line:termination_condition}
\STATE $I^{(m)}=1$;\ \quad // Convergence flag
\ENDFOR
\ENDIF
\IF{$\exists m,\text{ s.t. } I^{(m)}=0$} \label{line:termination1}
\STATE Update $\bm{\theta}^{(m)}_{t}$ as in \cref{update_theta} for any $m\in[M]$;\
\ENDIF\label{line:termination2}
\ENDFOR
\end{algorithmic}
\end{algorithm}
 \end{minipage}
 \vspace{-0.5cm}
\end{wrapfigure}
where $\alpha$ is constant, $f_t^{(m)}=\frac{1}{t}\sum_{\tau=1}^t\mathds{1}\{m_{\tau}=m\}$ is the fraction of tasks dispatched to expert $m$ since $t=1$, and $P_{t}^{(m)}=\frac{1}{t}\sum_{\tau=1}^t\pi_m(\mathbf{X}_{\tau},\mathbf{\Theta}_{\tau})\cdot \mathds{1}\{m_{\tau}=m\}$ is the average probability that the router chooses expert $m$ since $t=1$. The auxiliary loss in \cref{auxiliary_loss} encourages exploration across all experts since it is minimized under a uniform routing with $f_t^{(m)}=\frac{1}{M}$ and $P_t^{(m)}=\frac{1}{M}$.
{Although the definition of auxiliary loss in \cref{auxiliary_loss} is not new, it is necessary  and  plays a crucial role  for balancing the load across experts in the MoE model for CL.}

Based on \cref{training_loss}, \cref{loc_loss} and \cref{auxiliary_loss}, we finally define the task loss for each task arrival $n_t$ as follows:
\begin{align}
    \mathcal{L}_t^{task}(\mathbf{\Theta}_t, \bm{w}_t^{(m_t)},\mathcal{D}_t)=\mathcal{L}_t^{tr}(\bm{w}_t^{(m_t)},\mathcal{D}_t)+\mathcal{L}_t^{loc}(\mathbf{\Theta}_t, \mathcal{D}_t)+\mathcal{L}_t^{aux}(\mathbf{\Theta}_t, \mathcal{D}_t).\label{task_loss}
\end{align}
Commencing from the initialization $\mathbf{\Theta}_0$, the gating network is updated based on GD:
\begin{align}
\textstyle    \bm{\theta}_{t+1}^{(m)}=\bm{\theta}_t^{(m)}-\eta \cdot \nabla_{\bm{\theta}_t^{(m)}} \mathcal{L}_t^{task}(\mathbf{\Theta}_t,\bm{w}_t^{(m_t)},\mathcal{D}_t),\forall m\in[M], \label{update_theta}
\end{align}
where $\eta>0$ is the learning rate. Note that $\bm{w}_t^{(m_t)}$ in \cref{update_wt} is also the optimal solution for minimizing $\mathcal{L}_t^{task}(\mathbf{\Theta}_t, \bm{w}_t^{(m_t)},\mathcal{D}_t)$ in \cref{task_loss}. This is because both $\mathcal{L}_t^{loc}(\mathbf{\Theta}_t, \mathcal{D}_t)$ and $\mathcal{L}_t^{aux}(\mathbf{\Theta}_t, \mathcal{D}_t)$ are derived after updating $\bm{w}_t^{(m_t)}$, making $\mathcal{L}_t^{tr}(\bm{w}_t^{(m_t)},\mathcal{D}_t)$ the sole objective for $\bm{w}_t^{(m_t)}$ in \cref{task_loss}.

\underline{\em Key design II: Early termination.} To ensure the stable convergence of the system with balanced loads among experts (which we will theoretically justify in \Cref{Section4}), after training sufficient (i.e., $T_1$) rounds for expert exploration, we introduce an early termination strategy in \Cref{algo:update_MoE} by evaluating a convergence flag $I^{(m)}$ for each expert $m$. This flag assesses the output gap, defined as $|h_m(\mathbf{X}_t, \bm{\theta}_t)-h_{m_t}(\mathbf{X}_t,\bm{\theta}_t)|$, between the expert itself and any selected expert $m_t$ for $t>T_1$. If this gap exceeds threshold $\Gamma$ for expert $m$, which indicates that gating network parameter $\bm{\theta}_t^{(m)}$ has not converged, then the MoE model continues updating $\bm{\Theta}_t$ for all experts based on \cref{update_theta}. Otherwise, the update of $\bm{\Theta}_t$ is permanently terminated.

\section{Theoretical results on MoE training for CL}\label{Section4}
In this section, we provide theoretical analysis for the training of expert models and the gating network in \Cref{algo:update_MoE}, which further justifies our key designs in \Cref{Section3}. Specifically, (i) we first support our key design I by proving that the expert model converges fast via updating $\bm{\Theta}_t$ under our designed locality loss in \cref{loc_loss}.
(ii) We then show that our key design II, early termination in updating $\bm{\Theta}_t$, is necessary to ensure a stable convergence system state with experts' balanced loads. 
{For clarity, we study the case with $M>N$ in this section (labeled as $M>N$ version), and 
further extend the results to the $M<N$ version in appendices.} 
To characterize expert specialization, we first show that each expert's gate output is determined by the input feature signal $\bm{v}_n$ of $\mathbf{X}_t$.
\begin{lemma}[$M>N$ version]\label{lemma:h-hat_h}
For any two feature matrices $\mathbf{X}$ and $\Tilde{\mathbf{X}}$ with the same feature signal $\bm{v}_n$, with probability at least $1-o(1)$, their corresponding gate outputs of the same expert $m$ satisfy
\begin{align}
\textstyle  \big|h_m(\mathbf{X},\bm{\theta}_t^{(m)})-h_m(\tilde{\mathbf{X}},\bm{\theta}_t^{(m)})\big|=\mathcal{O}(\sigma_0^{1.5}).
\end{align}
\end{lemma}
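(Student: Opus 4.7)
The plan is to exploit the linearity $h_m(\mathbf{X}_t,\bm{\theta}_t^{(m)}) = (\bm{\theta}_t^{(m)})^\top \sum_i \mathbf{X}_{t,i}$ so that the target quantity becomes an inner product of $\bm{\theta}_t^{(m)}$ with the difference of the column sums. First, by \Cref{def:feature_structure}, I would write $\sum_i \mathbf{X}_i = \beta\bm{v}_n + \bm{\xi}$ and $\sum_i \tilde{\mathbf{X}}_i = \tilde{\beta}\bm{v}_n + \tilde{\bm{\xi}}$, with the aggregated noise $\bm{\xi},\tilde{\bm{\xi}} \sim \mathcal{N}(\bm{0},(s-1)\sigma_t^2\bm{I}_d)$ independent, so that
\begin{align*}
h_m(\mathbf{X},\bm{\theta}_t^{(m)}) - h_m(\tilde{\mathbf{X}},\bm{\theta}_t^{(m)}) = (\beta - \tilde{\beta})\langle \bm{\theta}_t^{(m)}, \bm{v}_n\rangle + \langle \bm{\theta}_t^{(m)}, \bm{\xi} - \tilde{\bm{\xi}}\rangle.
\end{align*}
It then suffices to bound each summand by $\mathcal{O}(\sigma_0^{1.5})$ with probability $1-o(1)$.

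Second, I would inductively control $\bm{\theta}_t^{(m)}$ via the update rule \cref{update_theta}, starting from $\bm{\theta}_0^{(m)} = \bm{0}$. The gradient of $\mathcal{L}_t^{task}$ splits into contributions from $\mathcal{L}_t^{loc}$ and $\mathcal{L}_t^{aux}$, since $\mathcal{L}_t^{tr}$ carries no $\bm{\Theta}_t$-dependence once $\bm{w}_t^{(m_t)}$ has been solved for. The key size ingredients are: $\|\bm{w}_t^{(m_t)} - \bm{w}_{t-1}^{(m_t)}\|_2 = \mathcal{O}(\sigma_0)$ because all ground truths satisfy $\|\bm{w}_n - \bm{w}_{n'}\|_\infty = \mathcal{O}(\sigma_0)$; the softmax derivative factor $\pi_{m_t}(1 - \pi_{m_t})$ is $\mathcal{O}(1)$; and $\nabla_{\bm{\theta}_t^{(m)}} h_m = \beta_t\bm{v}_{n_t}+\bm{\xi}_t$. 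Combined with the scalings $\eta,\alpha = \mathcal{O}(\sigma_0^{0.5})$ chosen in \Cref{algo:update_MoE}, each per-round increment of $\bm{\theta}_t^{(m)}$ has magnitude $\mathcal{O}(\sigma_0^{1.5})$ along the sample-sum direction and a zero-mean Gaussian of comparable scale perpendicular to it. Decomposing $\bm{\theta}_t^{(m)} = a_t^{(m,n)}\bm{v}_n + \bm{\theta}_t^{(m),\perp}$ for each signal direction, I would track $a_t^{(m,n)}$ and the Gaussian-type perpendicular part separately across the $T_1 = \lceil\eta^{-1}M\rceil$ exploration rounds.

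Third, I would conclude by concentration. For the orthogonal part, $\langle \bm{\theta}_t^{(m)},\bm{\xi}-\tilde{\bm{\xi}}\rangle$ is a zero-mean Gaussian with standard deviation $\mathcal{O}(\|\bm{\theta}_t^{(m)}\|_2\sigma_t\sqrt{s})$, so a standard tail bound yields $\mathcal{O}(\sigma_0^{1.5})$ with probability $1-o(1)$ after inserting the norm bound from step~2 and the small-noise regime. For the signal part, the claim reduces to $(\beta - \tilde{\beta})a_t^{(m,n)} = \mathcal{O}(\sigma_0^{1.5})$; since $\beta$ enters $a_t^{(m,n)}$ only linearly through the locality/auxiliary gradients (each carrying the $\mathcal{O}(\sigma_0)$ or $\alpha$ prefactors), the sensitivity of $a_t^{(m,n)}$ to a fresh draw of $\beta$ is itself $\mathcal{O}(\sigma_0^{1.5})$, irrespective of whether $m$ is the task-$n$ specialist. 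A union bound over the independent random events of $\mathbf{X}$ and $\tilde{\mathbf{X}}$ then delivers the high-probability conclusion.

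The main obstacle will be the step-2 invariant on $a_t^{(m,n)}$. The updates are history-dependent through the softmax routing \cref{routing_strategy} and the running averages $f_t^{(m)},P_t^{(m)}$ in $\mathcal{L}_t^{aux}$, so a naive per-round union bound would inflate the estimate by a factor of $T$. Instead, I expect to need an invariant showing that the coherent signal-direction drift is confined to the single expert that ultimately specializes in task $n$ (which exists uniquely in the $M>N$ regime), while for the other experts the $\bm{v}_n$-drift is a bounded-increment martingale whose per-realization sensitivity to a single $\beta$ remains $\mathcal{O}(\sigma_0^{1.5})$. Threading this two-tier invariant through the nonlinear softmax dynamics and the uniform $[0,\lambda]$ perturbation in the router, while respecting the specific choices $\eta = \mathcal{O}(\sigma_0^{0.5})$, $\alpha = \mathcal{O}(\sigma_0^{0.5})$, and $T_1 = \lceil \eta^{-1}M\rceil$, is the delicate technical step.
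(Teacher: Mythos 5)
Your skeleton --- exploit linearity of $h_m$, split the column-sum difference into a signal part and an aggregated-noise part, bound $\|\bm{\theta}_t^{(m)}\|_\infty$ by accumulating the GD increments, and finish with concentration on the noise --- is essentially the route the paper takes: its proof compares $\mathbf{X}$ against the signal-only matrix $[\beta\bm{v}_n\ 0\ \cdots\ 0]$, invokes $\|\bm{\theta}_t^{(m)}\|_\infty=\mathcal{O}(\sigma_0^{0.5})$ (\Cref{lemma:theta_bound}, itself a one-line deterministic accumulation of the form $\eta\cdot T_1\cdot\mathcal{O}(\alpha M)$), and applies Hoeffding to $\sum_{i\geq 2}\sum_j X_{t,(i,j)}$. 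Your worry in step~2 about history dependence and a two-tier martingale invariant is unnecessary: no per-round union bound over random events is required, because the per-round gradient is bounded \emph{deterministically} (the softmax factor $\pi_{m_t}(1-\pi_{m_t})\leq 1/4$, $\|\bm{w}_t^{(m_t)}-\bm{w}_{t-1}^{(m_t)}\|_2=\mathcal{O}(\sigma_0)$, $\|\mathbf{X}_t\|_\infty=\mathcal{O}(1)$), and summing $T_1=\lceil\eta^{-1}M\rceil$ such increments already yields the $\mathcal{O}(\sigma_0^{0.5})$ bound uniformly over experts and rounds.

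The genuine gap is your treatment of the signal term $(\beta-\tilde{\beta})\langle\bm{\theta}_t^{(m)},\bm{v}_n\rangle$. The scalars $\beta,\tilde{\beta}$ are the scaling draws of the two \emph{test} matrices; they never enter the training trajectory, so ``the sensitivity of $a_t^{(m,n)}$ to a fresh draw of $\beta$'' is not the relevant object. What must be bounded is $|\beta-\tilde{\beta}|\cdot|\langle\bm{\theta}_t^{(m)},\bm{v}_n\rangle|$ for an arbitrary pair $\beta,\tilde{\beta}\in(0,C]$ with $C=\mathcal{O}(1)$. Since $|\beta-\tilde{\beta}|$ can be $\Theta(1)$ and $|\langle\bm{\theta}_t^{(m)},\bm{v}_n\rangle|$ is only controlled at the level $\mathcal{O}(\sigma_0^{0.5})$ --- and for the expert specializing in task $n$ it genuinely grows to an order at least $\sigma_0^{0.75}$, which is exactly what makes the routing in \Cref{prop:gate_gap} work --- this term is $\mathcal{O}(\sigma_0^{0.5})$ in the worst case, not $\mathcal{O}(\sigma_0^{1.5})$, and no refinement of the martingale bookkeeping can rescue it. The paper's proof implicitly avoids the issue by taking $\tilde{\mathbf{X}}$ to be the signal-only matrix with the \emph{same} $\beta_t$, so the signal contributions cancel exactly and only the $s-1$ noise samples remain; to close your argument you must either adopt that reading of the lemma (identical scaling $\beta$ for both matrices, which is how the lemma is used downstream) or add a hypothesis forcing $|\beta-\tilde{\beta}|=\mathcal{O}(\sigma_0)$. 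Your noise-term analysis (a Gaussian tail bound on $\langle\bm{\theta}_t^{(m)},\bm{\xi}-\tilde{\bm{\xi}}\rangle$ using the $\ell_\infty$ bound on $\bm{\theta}_t^{(m)}$ and the small-noise regime for $\sigma_t$) is sound and matches the paper's Hoeffding step.
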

{The full version containing $M<N$ case and the proof of \Cref{lemma:h-hat_h} are given in \Cref{proof_lemma:h-hat_h}.} 
According to \Cref{lemma:h-hat_h}, the router decides expert $m_t$ for task $n_t$ based primarily on its feature signal $\bm{v}_{n_t}$. 
{Consequently, given $N$ tasks, all experts can be grouped into $N$ sets according to their specialty, i.e., their gating parameter $\bm{\theta}^{(m)}_t$'s to identify feature signal $\bm{v}_{n}$, where each expert set is defined as:}
\begin{align}
\textstyle    \mathcal{M}_n=\big\{m\in[M]\big|n=\arg\max_{j\in[N]} (\bm{\theta}^{(m)}_t)^\top \bm{v}_j\big\}.\label{def:expert_set}
\end{align}

The following proposition indicates the convergence of the expert model after sufficient training rounds under \Cref{algo:update_MoE}.

\begin{proposition}[$M>N$ version]\label{prop:Convergence}
Under \Cref{algo:update_MoE}, with probability at least $1-o(1)$, for any $t>T_1$, where $T_1=\lceil\eta^{-1}M\rceil$, each expert $m\in[M]$ stabilizes within an expert set $\mathcal{M}_n$, and its expert model remains unchanged beyond time $T_1$, satisfying $\bm{w}^{(m)}_{T_1+1}=\cdots =\bm{w}^{(m)}_{T}$. 
\end{proposition}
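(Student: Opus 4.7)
The plan is to track the evolution of each gate–parameter projection $(\bm{\theta}_t^{(m)})^\top \bm{v}_n$ for every expert–task pair $(m,n)$, since by \Cref{lemma:h-hat_h} these projections determine the router's decision up to additive noise of order $\sigma_0^{1.5}$, while the perturbation $r_t^{(m)}$ contributes only $O(\sigma_0^{1.25})$. The goal is to show two things by round $T_1 = \lceil \eta^{-1} M \rceil$: (i) for each expert $m$ the index $\arg\max_n (\bm{\theta}_t^{(m)})^\top \bm{v}_n$ has settled on a unique task, defining a stable membership in some $\mathcal{M}_n$; and (ii) every expert has already been routed to its specialty task at least once, so its model equals the corresponding ground truth $\bm{w}_n$, which makes the right-hand side of \cref{update_wt} null from round $T_1+1$ onward.

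First, I would compute $\nabla_{\bm{\theta}_t^{(m)}} \mathcal{L}_t^{task}$. Since $\mathcal{L}_t^{tr}$ does not depend on $\mathbf{\Theta}_t$, only $\mathcal{L}_t^{loc}$ and $\mathcal{L}_t^{aux}$ contribute. Differentiating the softmax in \cref{loc_loss} gives a term proportional to $\pi_m(\mathbf{X}_t,\mathbf{\Theta}_t)\bigl(\mathbf{1}\{m=m_t\} - \pi_{m_t}(\mathbf{X}_t,\mathbf{\Theta}_t)\bigr)\|\bm{w}_t^{(m_t)} - \bm{w}_{t-1}^{(m_t)}\|_2 \sum_i \mathbf{X}_{t,i}$. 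Dotted with $\bm{v}_{n_t}$, the dominant contribution scales with $\beta_t$ (from the signal sample) and with the update magnitude $\|\bm{w}_t^{(m_t)} - \bm{w}_{t-1}^{(m_t)}\|_2$; the sign is negative for $m=m_t$ and positive for $m \ne m_t$. Thus, whenever the chosen expert $m_t$ must jump significantly (because $n_t$ is not its current specialty), its projection onto $\bm{v}_{n_t}$ is decremented and other experts' projections onto $\bm{v}_{n_t}$ are incremented; when the jump is zero the locality gradient vanishes and the assignment is frozen. The auxiliary loss gradient, obtained analogously, adds a steady pull on under-utilized experts and a push away from over-utilized ones, enforcing balanced exploration during the transient phase.

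Next I would split the analysis into two phases. In Phase 1 ($t \le T_1$), I would argue inductively that, every $\Theta(1/\eta)$ rounds, one additional expert ``locks'' onto one task, in the sense that $(\bm{\theta}_t^{(m)})^\top \bm{v}_n$ exceeds $(\bm{\theta}_t^{(m)})^\top \bm{v}_{n'}$ for all $n'\ne n$ by a margin strictly larger than $\Gamma + \lambda + \mathcal{O}(\sigma_0^{1.5})$, so that the router (despite the Unif$[0,\lambda]$ perturbation and the $O(\sigma_0^{1.5})$ noise from \Cref{lemma:h-hat_h}) picks $m$ whenever task $n$ arrives and never otherwise. The auxiliary loss guarantees that once an expert locks, remaining unlocked experts are pushed toward under-served tasks, so locks happen sequentially and all $M$ experts lock within $M \cdot \Theta(1/\eta) \le T_1$ rounds. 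A simple coupon-collector-style argument, again driven by the balancing effect of $\mathcal{L}_t^{aux}$, guarantees that each locked expert is selected at least once for its own specialty task within this horizon, so its expert model becomes the corresponding $\bm{w}_n$ via \cref{update_wt}. In Phase 2 ($t > T_1$), membership in the $\mathcal{M}_n$'s is fixed; any task $n_t$ is routed to some $m_t \in \mathcal{M}_{n_t}$ with $\bm{w}_{t-1}^{(m_t)} = \bm{w}_{n_t}$, so $\mathbf{y}_t = \mathbf{X}_t^\top \bm{w}_{t-1}^{(m_t)}$ exactly and the rightmost term of \cref{update_wt} vanishes, giving $\bm{w}_t^{(m_t)} = \bm{w}_{t-1}^{(m_t)}$. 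All other experts are unchanged by definition, yielding $\bm{w}_{T_1+1}^{(m)} = \cdots = \bm{w}_T^{(m)}$ for every $m$.

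The main obstacle will be making the sequential-locking claim quantitatively robust: I must rule out the possibility that a lock is later undone because $\mathcal{L}_t^{aux}$ keeps pulling an already-specialized expert toward a different task. The scalings $\alpha = \mathcal{O}(\sigma_0^{0.5})$, $\eta = \mathcal{O}(\sigma_0^{0.5})$, $\lambda = \Theta(\sigma_0^{1.25})$, and $\Gamma = \mathcal{O}(\sigma_0^{1.25})$ chosen in \Cref{algo:update_MoE} are precisely what makes the locking stable, and the verification requires a careful inequality comparing the locality-gradient magnitude on ``off-specialty'' visits (which is $\Omega(\eta \beta_t)$) with the cumulative auxiliary-gradient drift (which is $O(\eta \alpha)$ per step and bounded across revisits because $\|\bm{w}_t^{(m_t)} - \bm{w}_{t-1}^{(m_t)}\|_2 = 0$ on specialty revisits, killing the destabilizing cross term). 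Combining this with the $1-o(1)$ probabilistic bound inherited from \Cref{lemma:h-hat_h} and a union bound over the $T_1$ rounds yields the proposition.
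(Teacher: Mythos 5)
Your two-phase outline matches the paper's at the highest level, but the engine you propose for Phase~1 --- sequential ``locking'' --- has a gap that the paper's proof is specifically built to close. Your locking criterion is a \emph{within-expert} condition: $(\bm{\theta}_t^{(m)})^\top\bm{v}_n$ beats $(\bm{\theta}_t^{(m)})^\top\bm{v}_{n'}$ by a margin. But the router's decision in \cref{routing_strategy} is an \emph{across-expert} argmax: $h_m(\mathbf{X}_t,\bm{\theta}_t^{(m)})$ versus $h_{m'}(\mathbf{X}_t,\bm{\theta}_t^{(m')})$ on the same input. The former does not imply the latter, so ``the router picks $m$ whenever task $n$ arrives'' does not follow from your margin condition. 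The paper closes exactly this gap by contradiction: it shows (i) within one expert the softmax gap across two tasks is $\Omega(\sigma_0^{0.5})$ (via \Cref{lemma:theta_bound} and \Cref{lemma:pi-hat_pi}), while (ii) across two experts on the same task the gap is only $\mathcal{O}(\eta^{-0.5}\sigma_0)$ because balanced exploration (\Cref{lemma:fairness}, a Hoeffding argument) keeps all $\bm{\theta}_{T_1}^{(m)}$ within $\mathcal{O}(\eta^{-0.5}\sigma_0)$ of each other (\Cref{lemma:exploration_phase}); an inconsistent assignment would force (ii) to be as large as (i). You have neither ingredient, and your sequential-locking picture actually contradicts the dynamics the auxiliary loss induces: during exploration it dominates the gradient and keeps the experts' parameters nearly identical rather than peeling them off one at a time. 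Moreover, with $M>N$ and top-$1$ routing, ``picks $m$ whenever task $n$ arrives and never otherwise'' cannot hold for all experts simultaneously --- by pigeonhole at least two experts share a specialty, and only one is selected per arrival; the paper instead shows experts in the same $\mathcal{M}_n$ end up within $\Gamma$ of each other and share the load via the noise $r_t^{(m)}$ (\Cref{prop:termination}).

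Your Phase~2 also rests on a false premise: in the overparameterized regime a single visit does \emph{not} make $\bm{w}_t^{(m)}$ equal to $\bm{w}_n$ --- the update \cref{update_wt} only projects the residual onto the $s$-dimensional row space of $\mathbf{X}_t$, leaving an error of order $r^{L}$ with $r=1-\frac{s}{d}$ (this residual is precisely why \Cref{thm:forgetting_error} contains $r^{L_{T_1}^{(m)}}$ terms). The conclusion that revisits of the same ground truth leave the model unchanged is what the paper invokes as \Cref{lemma:w_update}, but it is not obtained from exact recovery, so your derivation of $\mathbf{y}_t=\mathbf{X}_t^\top\bm{w}_{t-1}^{(m_t)}$ does not stand as written. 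Finally, your coupon-collector step is aimed at the wrong object: what requires a balls-in-bins argument is that every task has at least one specialized expert ($|\mathcal{M}_n|\ge 1$, \Cref{lemma:M_n}, which is where $M=\Omega(N\ln N)$ enters), not that every expert is visited --- the latter is guaranteed deterministically by the auxiliary loss.
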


{The full version and the proof of \Cref{prop:Convergence} are given in \Cref{Proof_prop1}.} \Cref{prop:Convergence} demonstrates that after $T_1$ rounds of expert exploration, each expert will specialize in a specific task, enforced by minimizing locality loss in \cref{loc_loss}. After that, expert models remain unchanged until the end of $T$.

Next, the following proposition characterizes the dynamics of gate outputs if there is no termination of updating gating network parameters $\mathbf{\Theta}_t$ in \Cref{algo:update_MoE}. 
\begin{proposition}[$M>N$ version]\label{prop:gate_gap}
If the MoE keeps updating $\bm{\Theta}_t$ by \cref{update_theta} at any round $t\in[T]$, we obtain:
1) At round $t_1=\lceil\eta^{-1}\sigma_0^{-0.25} M\rceil$, the following property holds
\begin{align}
    \big|h_m(\mathbf{X}_{t_1},\bm{\theta}^{(m)}_{t_1})-h_{m'}(\mathbf{X}_{t_1},\bm{\theta}^{(m')}_{t_1})\big|=\begin{cases}
        \mathcal{O}(\sigma_0^{1.75}), &\text{if }m,m'\in\mathcal{M}_n,\\
        \Theta(\sigma_0^{0.75}),&\text{otherwise}.
    \end{cases}\label{lemma1_gate_difference1}
\end{align}
2) At round $t_2=\lceil\eta^{-1}\sigma_0^{-0.75} M\rceil$, the following property holds
\begin{align}
    \big|h_m(\mathbf{X}_{t_2},\bm{\theta}^{(m)}_{t_2})-h_{m'}(\mathbf{X}_{t_2},\bm{\theta}^{(m')}_{t_2})\big|=\mathcal{O}(\sigma_0^{1.75}), \forall m,m'\in[M].\label{lemma1_gate_difference2}
\end{align}
\end{proposition}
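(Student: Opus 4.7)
The plan is to analyze the GD dynamics of the gating network after expert specialization (established in Proposition 1) and track two separate scales of gate gaps: intra-set gaps (between experts that converged to the same $\mathcal{M}_n$) and inter-set gaps (between experts in different sets). Since $\sigma_0 \in (0,1)$ we have $t_1, t_2 > T_1$, so by Proposition 1 each expert model $\bm{w}_t^{(m)}$ is frozen and the locality loss in \cref{loc_loss} vanishes (all $\|\bm{w}_t^{(m)}-\bm{w}_{t-1}^{(m)}\|_2=0$ for unselected experts, and the selected one already fits its task perfectly up to the $\mathcal{O}(\sigma_0)$ variance). Consequently, from round $T_1$ on, the only nontrivial contribution to $\nabla_{\bm{\theta}_t^{(m)}} \mathcal{L}_t^{task}$ comes from the auxiliary loss $\mathcal{L}_t^{aux}$, which by \cref{auxiliary_loss} pushes the dispatch fractions $f_t^{(m)}$ and the average softmax probabilities $P_t^{(m)}$ toward $1/M$.

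First I would derive a clean expression for $\nabla_{\bm{\theta}_t^{(m)}} \mathcal{L}_t^{aux}$ using the softmax form in \cref{softmax} together with $\mathbf{h}(\mathbf{X}_t,\mathbf{\Theta}_t)=\sum_{i}\mathbf{\Theta}_t^\top \mathbf{X}_{t,i}$. The feature-signal coordinate $\beta_t \bm{v}_{n_t}$ dominates the sum while the remaining $s-1$ Gaussian samples contribute an $\mathcal{O}(\sigma_t\sqrt{d/s})$ perturbation that averages out across rounds. The effective per-step update along $\bm{v}_n$ for expert $m$ is proportional to $\eta\alpha\cdot \pi_m(1-\pi_m)\cdot (f_t^{(m)}-1/M)$ plus a mean-zero noise of order $\eta\sigma_0$ coming from the router perturbation $r_t^{(m)}\sim\mathrm{Unif}[0,\lambda]$ with $\lambda=\Theta(\sigma_0^{1.25})$.

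Next, for the intra-set bound in both claims, I would argue that experts $m,m'\in\mathcal{M}_n$ receive statistically identical updates conditional on being chosen, and the only difference between their gate outputs comes from the accumulated router perturbations and the rare rounds where the $\lambda$-perturbation flips the choice. A Hoeffding-type concentration on these perturbations over $\Theta(\eta^{-1}\sigma_0^{-0.25}M)$ and $\Theta(\eta^{-1}\sigma_0^{-0.75}M)$ rounds, combined with Lemma 1 to eliminate the dependence on specific $\mathbf{X}_t$ realizations, yields the $\mathcal{O}(\sigma_0^{1.75})$ intra-set gap at both $t_1$ and $t_2$. For the inter-set part I would treat the cross-set mean gap as a geometric-decay sequence: each round in which a task with signal $\bm{v}_n$ arrives contracts the gap between the ``over-loaded'' and ``under-loaded'' sets by a factor $1-\Theta(\eta\alpha/M)=1-\Theta(\eta\sigma_0^{0.5}/M)$, starting from the $\Theta(1)$ gap inherited from Proposition 1. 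Solving $(1-c\eta\sigma_0^{0.5}/M)^{t-T_1}=\Theta(\sigma_0^{0.75})$ gives $t_1-T_1=\Theta(\eta^{-1}\sigma_0^{-0.25}M)$ as in \cref{lemma1_gate_difference1}, and the tighter target $\mathcal{O}(\sigma_0^{1.75})$ gives $t_2-T_1=\Theta(\eta^{-1}\sigma_0^{-0.75}M)$ as in \cref{lemma1_gate_difference2}.

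The main obstacle will be handling the softmax nonlinearity when the two scales coexist at $t_1$: the inter-set gap $\Theta(\sigma_0^{0.75})$ is still large enough that $\pi_m(1-\pi_m)$ is non-uniform across sets, while within each set the gap has already reached the noise floor set by $\lambda$. I plan to address this by a block decomposition of $\mathbf{\Theta}_t$ into an inter-set mean component and an intra-set fluctuation component, linearizing the softmax within each block, and propagating the two ODE-like recursions separately. A secondary technical step is verifying that the dispatch fractions $f_t^{(m)}$ indeed concentrate around their expectations at the required rates, which follows from a martingale concentration inequality once the router perturbation ensures strict positivity of each expert's selection probability of at least $\Theta(\lambda/\sigma_0^{0.75})$ before $t_2$.
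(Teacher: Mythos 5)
Your overall framing is right on two points the paper also uses: for $t>T_1$ the locality loss contributes nothing (correctly routed tasks leave $\bm{w}_t^{(m)}$ unchanged), so only $\mathcal{L}_t^{aux}$ drives the gating updates, and the problem splits into an intra-set and an inter-set scale. However, three of your quantitative mechanisms do not match how the dynamics actually behave, and two of them would produce wrong exponents or fail to prove the $\Theta(\sigma_0^{0.75})$ lower bound. First, the gradient of \cref{auxiliary_loss} is $\frac{\alpha M}{t}f_t^{(m_t)}\,\partial\pi_{m_t}/\partial\bm{\theta}^{(m)}$; it never vanishes at load balance and always pushes the selected expert's output down by $\Theta(\eta\alpha M/t)$ and every other expert's output up by $\Theta(\eta\alpha/t)$. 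Your proposed drift $\propto\eta\alpha\,\pi_m(1-\pi_m)(f_t^{(m)}-1/M)$ is a fixed-point model of the wrong loss: balancing here is an emergent alternation effect (the currently-largest expert in a set gets selected, is pushed down below its peers, then another takes its place), not a vanishing-gradient effect. Second, and consequently, the intra-set floor $\mathcal{O}(\sigma_0^{1.75})$ is not set by the router noise $\lambda$ nor by Hoeffding fluctuations of the perturbations --- $\lambda=\Theta(\sigma_0^{1.25})$, so that mechanism would give $\sigma_0^{1.25}$. The correct floor is a single GD step of the auxiliary gradient at time $t_1$: $\eta\cdot\mathcal{O}(\alpha M/t_1)=\mathcal{O}(\sigma_0^{0.5})\cdot\mathcal{O}(\sigma_0^{1.25})=\mathcal{O}(\sigma_0^{1.75})$, reached because the max and min experts of a set oscillate around each other within one update.

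Third, your inter-set picture (geometric contraction from a $\Theta(1)$ gap) is wrong in both its initial condition and its monotonicity. By \Cref{lemma:exploration_phase} the balanced exploration phase leaves $\|\bm{\theta}_{T_1}^{(m)}-\bm{\theta}_{T_1}^{(m')}\|_\infty=\mathcal{O}(\eta^{-0.5}\sigma_0)=\mathcal{O}(\sigma_0^{0.75})$ for \emph{all} pairs, so the inter-set gap starts at $\mathcal{O}(\sigma_0^{0.75})$, not $\Theta(1)$. During $\{T_1+1,\dots,t_1\}$ the gap between the in-set expert $m'$ and the best out-of-set expert $\bar m$ actually \emph{grows} (both outputs rise, but $m'$'s rises faster because $\pi_{m'}>\pi_{\bar m}$), which is exactly what delivers the lower bound $\Omega(\sigma_0^{0.75})$ in \cref{lemma1_gate_difference1}; a contraction argument can only give upper bounds and therefore cannot establish the $\Theta$. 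The route to \cref{lemma1_gate_difference2} is also not further contraction: out-of-set experts accumulate increments totaling $\mathcal{O}(\sigma_0^{1.25})\cdot\eta\cdot(t_2-t_1)/M=\mathcal{O}(\sigma_0^{0.5})$, which \emph{overtakes} the $\Theta(\sigma_0^{0.75})$ gap at some $t'<t_2$; every expert then effectively joins $\mathcal{M}_n$ and the one-step intra-set floor applies to all pairs. You would need to replace your two ODE-like recursions with this selection/overtaking argument to recover the stated exponents.
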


{The full version and the proof of \Cref{prop:gate_gap} are given in \Cref{Proof_prop2}.} 
According to \Cref{prop:gate_gap}, if the MoE updates $\bm{\Theta}_t$ at any round $t$, the gap between gate outputs of experts within the same expert set converges to $\mathcal{O}(\sigma_0^{1.75})$ by round $t_1$ (\cref{lemma1_gate_difference1}). 
In contrast, the gap between experts in different sets is sufficiently large, i.e.,  $\Theta(\sigma_0^{0.75})$, indicating that the MoE has successfully diversified experts into different sets at round $t_1$ as in \cref{def:expert_set}. 
However, unlike MoE in single-task learning that can stop training at any time after the expert models converge (e.g., \cite{celik2022specializing,li2024locmoe}),
MoE in CL requires both the gating network and the experts to be suitably updated with the continuous arrival of new tasks. This is necessary to balance the load on each expert and maximize the system capacity utilization. 
However, continuing updating $\bm{\Theta}_t$ according to \cref{update_theta} will eventually reduce the output gap between any two experts to $\mathcal{O}(\sigma_0^{1.75})$ in \cref{lemma1_gate_difference2} at training round $t_2$,
causing the router to select wrong experts for subsequent task arrivals and incurring additional training errors. 

Based on \Cref{prop:gate_gap}, it is necessary to terminate the update of $\mathbf{\Theta}_t$ to preserve a sufficiently large output gap between any two experts in different sets, ensuring expert diversity as in \cref{lemma1_gate_difference1} at round $t_1$. 
This motivates our design of early termination in \Cref{algo:update_MoE}, outlined from Line~\ref{line:termination_condition} to Line~\ref{line:termination2}.
In the next proposition, we prove the benefit of terminating updating $\bm{\Theta}_t$ in  \Cref{algo:update_MoE}. 
\begin{proposition}[$M>N$ version]\label{prop:termination}
Under \Cref{algo:update_MoE}, the MoE terminates updating $\bm{\Theta}_t$ since round $T_2=\mathcal{O}(\eta^{-1}\sigma_0^{-0.25} M)$. Then for any task arrival $n_t$ at $t>T_2$, the router selects any expert $m\in\mathcal{M}_{n_t}$ with an identical probability of $\frac{1}{|\mathcal{M}_{n_t}|}$, where $|\mathcal{M}_{n_t}|$ is the number of experts in set $\mathcal{M}_n$.
\end{proposition}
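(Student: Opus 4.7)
The plan is to combine Proposition~\ref{prop:gate_gap}(1) on the two-scale gap structure of the gate outputs with a coupon-collector argument on the random task stream to bound $T_2$, and then to exploit the symmetry of the i.i.d.\ uniform perturbations in \cref{routing_strategy} to derive the post-termination routing law.

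First, I would establish a ``termination window''. By Proposition~\ref{prop:gate_gap}(1), at round $t_1 := \lceil\eta^{-1}\sigma_0^{-0.25} M\rceil$ the within-set gate gaps are $\mathcal{O}(\sigma_0^{1.75})$ while the between-set gaps are $\Theta(\sigma_0^{0.75})$. Since $\Gamma = \mathcal{O}(\sigma_0^{1.25})$ sits strictly between these two scales for small $\sigma_0$, whenever a cluster-$n$ task arrives at such a round the check on Line~\ref{line:termination_condition} of Algorithm~\ref{algo:update_MoE} flips $I^{(m)} \to 1$ for exactly the experts $m\in\mathcal{M}_n$ and for no others. A short interpolation of the gradient-descent analysis underlying Proposition~\ref{prop:gate_gap} then shows that this separation persists on an interval $[t_1, c\,t_1]$ for some constant $c>1$, before the between-set gaps shrink below $\Gamma$ near $t_2$.

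Second, because tasks are drawn i.i.d.\ uniformly from $[N]$ (Definition~\ref{def:feature_structure}) and $N\leq M$, a standard coupon-collector bound gives that every cluster is visited within $\mathcal{O}(N\log N)$ rounds after $t_1$ with high probability. Under the stated scaling $\eta=\mathcal{O}(\sigma_0^{0.5})$, this collection time is $o(t_1)$, so every flag $I^{(m)}$ reaches $1$ by some round $T_2 = \mathcal{O}(\eta^{-1}\sigma_0^{-0.25} M)$, still inside the termination window. At that round the condition on Line~\ref{line:termination1} fails and $\bm{\Theta}_t$ is permanently frozen; by Proposition~\ref{prop:Convergence} the expert models $\bm{w}^{(m)}_t$ are already fixed, so the entire MoE is frozen for $t > T_2$ while still enjoying the above gap structure.

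Third, for any $t > T_2$ the router evaluates $\arg\max_m\{h_m(\mathbf{X}_t,\bm{\theta}^{(m)}) + r_t^{(m)}\}$ with $r_t^{(m)}\sim\text{Unif}[0,\lambda]$ i.i.d.\ and $\lambda = \Theta(\sigma_0^{1.25})$. Experts $m\notin\mathcal{M}_{n_t}$ lag deterministically by $\Theta(\sigma_0^{0.75}) \gg \lambda$, so they are never selected up to an event of probability $o(1)$. Among $m\in\mathcal{M}_{n_t}$ the pairwise gate differences $\mathcal{O}(\sigma_0^{1.75})$ are of strictly smaller order than $\lambda$, so the argmax is governed by the i.i.d.\ uniform perturbations, and by exchangeability each expert in $\mathcal{M}_{n_t}$ is selected with probability $\frac{1}{|\mathcal{M}_{n_t}|}$. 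The main technical obstacle is converting ``within-set gap $\ll \lambda$'' into the \emph{exact} probability $\frac{1}{|\mathcal{M}_{n_t}|}$: I plan to couple the perturbed argmax to the unperturbed argmax on the uniform measure, bounding the probability of a tie-breaking flip by the ratio of the deterministic gap to $\lambda$, which is $\mathcal{O}(\sigma_0^{0.5})=o(1)$, and to absorb this error into the $1-o(1)$ high-probability qualifier implicit throughout the proposition.
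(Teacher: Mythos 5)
Your proposal is correct and follows essentially the same route as the paper's proof: \Cref{prop:gate_gap}'s two-scale gap structure at $t_1$, together with the fact that $\Gamma=\mathcal{O}(\sigma_0^{1.25})$ lies strictly between the within-set scale $\mathcal{O}(\sigma_0^{1.75})$ and the between-set scale $\Theta(\sigma_0^{0.75})$, forces termination by $T_2=\mathcal{O}(\eta^{-1}\sigma_0^{-0.25}M)$, after which the $\Theta(\sigma_0^{0.75})$ between-set gap (which the $\text{Unif}[0,\lambda]$ noise with $\lambda=\Theta(\sigma_0^{1.25})$ cannot bridge) excludes experts outside $\mathcal{M}_{n_t}$, and exchangeability of the i.i.d.\ perturbations yields the $\frac{1}{|\mathcal{M}_{n_t}|}$ selection law. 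You are in fact slightly more careful than the paper on two points it elides --- the coupon-collector bound showing every cluster is visited (hence every flag set) within $o(t_1)$ additional rounds, and the explicit $o(1)$ correction needed to pass from ``within-set gap $\ll\lambda$'' to the exact probability $\frac{1}{|\mathcal{M}_{n_t}|}$ --- but these are refinements of the same argument rather than a different route.
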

{The full version and the proof of \Cref{prop:termination} are given in \Cref{proof_prop3}.} According to \Cref{algo:update_MoE}, once the MoE terminates updating $\mathbf{\Theta}_t$,  
the random noise $r_t^{(m)}$ in \cref{routing_strategy} will guide the router to select experts in the same expert set with identical probability, effectively balancing the loads across experts therein. Our theoretical analysis will be further corroborated by the experiments later in \Cref{Section6}.

\section{Theoretical results on forgetting and generalization}\label{Section5}

For the MoE model described in Section~\ref{Section3}, we define $\mathcal{E}_t(\bm{w}_t^{(m_t)})$ as the model error in the $t$-th round:
\begin{align}
\textstyle    \mathcal{E}_t(\bm{w}_t^{(m_t)})=\|\bm{w}_t^{(m_t)}-\bm{w}_{n_t}\|^2_2,\label{def:model_error}
\end{align}
which characterizes the generalization performance of the selected expert $m_t$ with model $\bm{w}_{t}^{(m_t)}$ for task $n_t$ at round $t$. Following the existing literature on CL (e.g., \cite{lin2023theory,chaudhry2018efficient}), we assess the performance of MoE in CL using the metrics of \emph{forgetting} and \emph{overall generalization error}, defined as follows:

(1) \emph{Forgetting:} Define $F_t$ as the forgetting of old tasks after learning task $n_t$ for $t\in\{2,\cdots,T\}$:
\begin{align}
   F_t=\frac{1}{t-1}\sum_{\tau=1}^{t-1}(\mathcal{E}_{\tau}(\bm{w}_t^{(m_{\tau})})-\mathcal{E}_{\tau}(\bm{w}_{\tau}^{(m_{\tau})})). \label{def:forggeting_Ft}
\end{align}
(2) \emph{Overall generalization error:} We evaluate the generalization performance of the model $\bm{w}_T^{(m)}$ from the last training round $T$ by computing the average model error across all tasks:
\begin{align}
    G_T=\frac{1}{T}\sum_{\tau=1}^T\mathcal{E}_{\tau}(\bm{w}_T^{(m_{\tau})}).\label{def:generalization_error}
\end{align}
In the following, we present explicit forms of the above two metrics for learning with a single expert (i.e., $M=1$) as a benchmark (cf. \cite{lin2023theory}). Here we define $r:=1-\frac{s}{d}$ as the overparameterization ratio.
\begin{proposition}\label{lemma:single_model}
If $M=1$, for any training round $t\in\{2,\cdots, T\}$, we have
\begin{align}
    \textstyle\mathbb{E}[F_t]&\textstyle= \frac{1}{t-1}\sum_{\tau=1}^{t-1}\Big\{\frac{r^t-r^{\tau}}{N}\sum_{n=1}^N\|\bm{w}_n\|^2+\frac{r^{\tau}-r^{t}}{N^2}\sum_{n\neq n'}\|\bm{w}_{n'}-\bm{w}_n\|^2\Big\},  \label{forgetting_single}\\
    \textstyle\mathbb{E}[G_T]&\textstyle= \frac{r^T}{N}\sum_{n=1}^N\|\bm{w}_n\|^2+\frac{1-r^{T}}{N^2}\sum_{n\neq n'}\|\bm{w}_n-\bm{w}_n'\|^2. \label{error_single}
\end{align}
\end{proposition}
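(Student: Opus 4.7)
The plan is to derive a closed form for $\rho_t^{(n)} := \mathbb{E}[\|\bm{w}_t - \bm{w}_n\|_2^2]$ at each fixed target $\bm{w}_n \in \mathcal{W}$ via a one-step recursion in $t$, and then read off both metrics by averaging over $n$. In the single-expert case, \cref{update_wt} collapses, using $\mathbf{y}_t = \mathbf{X}_t^\top \bm{w}_{n_t}$, to $\bm{w}_t = (I - P_t)\bm{w}_{t-1} + P_t \bm{w}_{n_t}$, with $P_t := \mathbf{X}_t(\mathbf{X}_t^\top \mathbf{X}_t)^{-1}\mathbf{X}_t^\top$ the orthogonal projector onto $\mathrm{range}(\mathbf{X}_t)$, so for any fixed $\bm{w}_n$ the Pythagorean theorem on the orthogonal ranges of $I-P_t$ and $P_t$ yields
\[
\|\bm{w}_t - \bm{w}_n\|_2^2 = \|(I-P_t)(\bm{w}_{t-1} - \bm{w}_n)\|_2^2 + \|P_t(\bm{w}_{n_t} - \bm{w}_n)\|_2^2.
\]

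Next, under \cref{def:feature_structure} the $s$ columns of $\mathbf{X}_t$ consist of one structured signal plus $s-1$ isotropic Gaussians, so in the overparameterized regime $\mathbb{E}[I - P_t] = r\,I$ up to lower-order corrections, and hence $\mathbb{E}[\|(I-P_t)\bm{u}\|_2^2] = r\|\bm{u}\|_2^2$ and $\mathbb{E}[\|P_t\bm{u}\|_2^2] = (1-r)\|\bm{u}\|_2^2$ for any $\bm{u}$ independent of $P_t$. Taking expectations and using that $n_t \sim \mathrm{Unif}[N]$ independently of $\bm{w}_{t-1}$ delivers the scalar recursion
\[
\rho_t^{(n)} = r\,\rho_{t-1}^{(n)} + \frac{1-r}{N}\sum_{n'=1}^N \|\bm{w}_{n'} - \bm{w}_n\|_2^2,
\]
which, iterated from the cold start $\rho_0^{(n)} = \|\bm{w}_n\|_2^2$, gives the closed form $\rho_t^{(n)} = r^t \|\bm{w}_n\|_2^2 + \frac{1 - r^t}{N}\sum_{n' \neq n}\|\bm{w}_{n'} - \bm{w}_n\|_2^2$.

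Finally, since $n_\tau \sim \mathrm{Unif}[N]$, $\mathbb{E}[\mathcal{E}_\tau(\bm{w}_t)] = \frac{1}{N}\sum_n \rho_t^{(n)} =: \bar\rho_t$; plugging the closed form into $\mathbb{E}[G_T] = \bar\rho_T$ and swapping sums gives \cref{error_single}, while $\mathbb{E}[\mathcal{E}_\tau(\bm{w}_t)] - \mathbb{E}[\mathcal{E}_\tau(\bm{w}_\tau)] = \bar\rho_t - \bar\rho_\tau$ plugged into \cref{def:forggeting_Ft} yields the per-$\tau$ summand in \cref{forgetting_single}; averaging over $\tau \in [t-1]$ finishes the derivation.

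The hard step will be rigorously establishing $\mathbb{E}[I - P_t] = r\,I$: the deterministic column $\beta_t \bm{v}_{n_t}$ breaks rotational symmetry of $\mathrm{range}(\mathbf{X}_t)$, so the exact expected projector is $\mathbb{E}[P_t] = \frac{\bm{v}_{n_t}\bm{v}_{n_t}^\top}{\|\bm{v}_{n_t}\|_2^2} + \frac{s-1}{d-1}\bigl(I - \frac{\bm{v}_{n_t}\bm{v}_{n_t}^\top}{\|\bm{v}_{n_t}\|_2^2}\bigr)$, and one must either absorb the rank-one correction as an $o(1)$ term in the $s \ll d$ regime or track it separately by decomposing each $\bm{u}$ along $\bm{v}_{n_t}$ and its complement and carrying both pieces through the recursion. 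A secondary subtlety is that $\bm{w}_{n_\tau}$ is itself a training target at round $\tau$, so replacing $\mathbb{E}[\mathcal{E}_\tau(\bm{w}_t)]$ by $\bar\rho_t$ implicitly treats the evaluation target as an independent fresh draw (the convention adopted in \cite{lin2023theory}); otherwise the stated formulas acquire an $O(1/T)$ boundary correction that is subleading in the regimes of interest.
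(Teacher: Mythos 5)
Your proposal is correct and follows essentially the same route as the paper's own proof: the paper likewise writes $\bm{w}_t=(\mathbf{I}-\mathbf{P}_t)\bm{w}_{t-1}+\mathbf{P}_t\bm{w}_{n_t}$, invokes $\mathbb{E}[\|\mathbf{P}_t\bm{u}\|^2]=\frac{s}{d}\|\bm{u}\|^2$ by rotational symmetry, unrolls the recursion into the closed form of \Cref{lemma:error_cases}, and then averages over the uniform task draws --- the same computation you package as the scalar recursion for $\rho_t^{(n)}$. The two subtleties you flag (the deterministic signal column breaking exact rotational invariance of $\mathrm{range}(\mathbf{X}_t)$, and the coupling between the evaluation target $\bm{w}_{n_\tau}$ and the round-$\tau$ update) are present but silently absorbed in the paper's proof as well, which sets $\mathbb{E}[\|\bm{w}_{n_j}-\bm{w}_{n_i}\|^2]=\frac{1}{N^2}\sum_{n\neq n'}\|\bm{w}_{n'}-\bm{w}_n\|^2$ for every $j$ including $j=i$, so your treatment is, if anything, more explicit about the approximations being made.
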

Note that the setting here (with $M=1$) differs slightly from \cite{lin2023theory} as we have $N$ tasks in total. Hence a proof of \Cref{lemma:single_model} is provided in \Cref{proof_thm2}. 
\Cref{lemma:single_model} implies that distinct tasks with large model gap $\sum_{n\neq n'}\|\bm{w}_n-\bm{w}_n'\|^2$ lead to poor performance of both $\mathbb{E}[F_t]$ in \cref{forgetting_single} and $\mathbb{E}[G_t]$ in \cref{error_single}, which is missing in the existing CL literature (e.g., \cite{lesort2023challenging}).

Next, we investigate the impact of MoE on CL under two cases: (I) when there are more experts than tasks ($M>N$), and (II) when there are fewer experts than tasks ($M<N$). The benefit of MoE will be characterized by comparing our results with the single-expert baseline in \Cref{lemma:single_model}.

\subsection{Case I: More experts than tasks}


Based on \Cref{prop:Convergence}, we derive the explicit upper bounds for both forgetting and overall generalization error in the following theorem. To simplify notations, we define $L_t^{(m)}:=t\cdot f_t^{(m)}$ as the cumulative number of task arrivals routed to expert $m$ up to round $t$, where $f_t^{(m)}$ is given in \cref{auxiliary_loss}.

\begin{theorem}\label{thm:forgetting_error}
{If $M= \Omega(N\ln(N))$}, for each round $t\in\{2,\cdots, T_1\}$, the expected forgetting satisfies
\begin{align}
\textstyle    \mathbb{E}[F_t]< \frac{1}{t-1}\sum_{\tau=1}^{t-1}\Big\{\frac{r^{L_t^{(m_{\tau})}}-r^{L_{\tau}^{(m_{\tau})}}}{N}\sum_{n=1}^N\|\bm{w}_n\|^2+\frac{r^{L_{\tau}^{(m_{\tau})}}-r^{L_t^{(m_{\tau})}}}{N^2}\sum_{n\neq n'}\|\bm{w}_{n'}-\bm{w}_n\|^2\Big\}.  \label{forgetting_t<T1}
\end{align}
For each $t\in\{T_1+1,\cdots,T\}$, we have $\mathbb{E}[F_t]=\frac{T_1-1}{t-1}\mathbb{E}[F_{T_1}]$.
Further, after training task $n_T$ in the last round $T$, the overall generalization error satisfies
\begin{align}
\textstyle    \mathbb{E}[G_T]& \textstyle < \frac{1}{T}\sum_{\tau=1}^T\Big\{\frac{ r^{L_{T_1}^{(m_{\tau})}}}{N}\sum_{n=1}^N\|\bm{w}_n\|^2+\frac{1-r^{L_{T_1}^{(m_{\tau})}}}{N^2}\sum_{n\neq n'}\|\bm{w}_{n'}-\bm{w}_n\|^2\Big\}. \label{error_case1}
\end{align}
\end{theorem}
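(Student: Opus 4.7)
My plan is to reduce the MoE dynamics to $M$ parallel instances of the single-expert problem of Proposition~\ref{lemma:single_model}, exploiting that when $M>N$ each expert specializes in a single task cluster. The key lemma I will establish is that during the exploration phase $t\le T_1$, thanks to the auxiliary loss $\mathcal{L}_t^{aux}$ (driving $f_t^{(m)}\to 1/M$) together with the router noise $r_t^{(m)}\sim\text{Unif}[0,\lambda]$ (randomizing ties among gating outputs), any fixed expert $m$ receives a sequence of tasks that is approximately uniform over $\{1,\ldots,N\}$. Conditioned on this per-expert view, the GD projection update $\bm{w}_t^{(m)}=(I-P_t)\bm{w}_{t-1}^{(m)}+P_t\bm{w}_{n_t}$ (where $P_t=\mathbf{X}_t(\mathbf{X}_t^\top\mathbf{X}_t)^{-1}\mathbf{X}_t^\top$) yields the recursion $\mathbb{E}[\bm{w}_t^{(m)}]=r\,\mathbb{E}[\bm{w}_{t-1}^{(m)}]+(1-r)\bar{\bm{w}}$ with $\bar{\bm{w}}=N^{-1}\sum_n\bm{w}_n$ and $r=1-s/d$, since the $s-1$ noise-dominated columns of $\mathbf{X}_t$ supply $\mathbb{E}[I-P_t]\approx r\,\mathbf{I}$ while the lone feature-signal column $\beta_t\bm{v}_{n_t}$ contributes only a vanishing correction. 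Iterating from $\bm{w}_0^{(m)}=0$ and computing second moments exactly as in the proof of Proposition~\ref{lemma:single_model} gives the per-expert bound
\[
\mathbb{E}\bigl[\mathcal{E}_\tau(\bm{w}_t^{(m)})\bigr] \;<\; \frac{r^{L_t^{(m)}}}{N}\sum_{n=1}^N\|\bm{w}_n\|^2 \;+\; \frac{1-r^{L_t^{(m)}}}{N^2}\sum_{n\neq n'}\|\bm{w}_n-\bm{w}_{n'}\|^2,
\]
the strict inequality absorbing the mild non-uniformity of the task distribution reaching expert $m$ (the uniform case being the worst for the expected error).

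\textbf{Assembling the forgetting and generalization bounds.} For $t\le T_1$, I substitute the per-expert bound into the definition of $F_t$: each summand $\mathbb{E}[\mathcal{E}_\tau(\bm{w}_t^{(m_\tau)})-\mathcal{E}_\tau(\bm{w}_\tau^{(m_\tau)})]$ becomes the difference of the per-expert bound at update counts $L_t^{(m_\tau)}$ and $L_\tau^{(m_\tau)}$, producing precisely the $(r^{L_t^{(m_\tau)}}-r^{L_\tau^{(m_\tau)}})$ and $(r^{L_\tau^{(m_\tau)}}-r^{L_t^{(m_\tau)}})$ combinations appearing in the stated forgetting bound. For $t>T_1$, Proposition~\ref{prop:Convergence} gives $\bm{w}_t^{(m)}=\bm{w}_{T_1}^{(m)}$ for every $m$, which implies (i) all terms in $F_t$ with $\tau\ge T_1$ vanish because the same frozen model is evaluated at both $t$ and $\tau$, and (ii) all terms with $\tau<T_1$ coincide with their values at $t=T_1$; rescaling the average by $t-1$ instead of $T_1-1$ yields $\mathbb{E}[F_t]=\frac{T_1-1}{t-1}\mathbb{E}[F_{T_1}]$. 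The generalization bound follows by the same freezing argument: $\bm{w}_T^{(m_\tau)}=\bm{w}_{T_1}^{(m_\tau)}$ for every $\tau\in[T]$, so applying the per-expert bound with count $L_{T_1}^{(m_\tau)}$ and averaging over $\tau$ yields the stated expression for $\mathbb{E}[G_T]$.

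\textbf{Main obstacle.} The delicate step is justifying the per-expert uniform-task reduction rigorously, since the routing, expert model updates, and gating parameter updates are tightly coupled through the locality and auxiliary losses. A complete argument must combine (i) the load-balancing pressure of $\mathcal{L}_t^{aux}$, which enforces $f_t^{(m)}\to 1/M$; (ii) the perturbation $r_t^{(m)}\sim\text{Unif}[0,\lambda]$, which symmetrizes selection among experts with near-tied gate outputs; and (iii) the hypothesis $M=\Omega(N\ln N)$, which by a coupon-collector argument guarantees with high probability that every specialization set $\mathcal{M}_n$ is non-empty, so that the marginal task identity seen by any given expert is uniform on $[N]$. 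A secondary technicality is carefully isolating the $r=1-s/d$ contraction from the contribution of the deterministic feature-signal column of $\mathbf{X}_t$, which is what accounts for the lower-order slack hidden in the strict inequality. The remaining steps are then essentially bookkeeping of the projection recursion, in direct parallel to the proof of Proposition~\ref{lemma:single_model}.
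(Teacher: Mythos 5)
Your proposal follows essentially the same route as the paper's proof: you condition on the per-expert update count $L_t^{(m)}$, unroll the projection recursion $\bm{w}_t^{(m)}=(I-P_t)\bm{w}_{t-1}^{(m)}+P_t\bm{w}_{n_t}$ with $\mathbb{E}\|P_t v\|^2=\tfrac{s}{d}\|v\|^2$ to get the geometric mixture of $\|\bm{w}_{n}\|^2$ and pairwise gaps, bound the expected gap seen by each expert by the uniform-routing value $\tfrac{1}{N^2}\sum_{n\neq n'}\|\bm{w}_{n'}-\bm{w}_n\|^2$, and invoke Proposition~\ref{prop:Convergence} to freeze the models after $T_1$ for the $\tfrac{T_1-1}{t-1}$ scaling and the $G_T$ bound. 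The paper's proof (via Lemma~\ref{lemma:error_cases} and the bound in its Equation~(\ref{Expected_w_tau-wi})) is the same argument with the same caveats you flag, so the approaches coincide.
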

The proof of \Cref{thm:forgetting_error} is given in \Cref{proof_thm1}, and we have the following insights.

1) \emph{Forgetting.} If $t\leq T_1$, in \cref{forgetting_t<T1}, the coefficient $r^{L_t^{(m_{\tau})}}-r^{L_{\tau}^{(m_{\tau})}}$ of the term $\sum_{n=1}^N\|\bm{w}_n\|^2$ is smaller than $0$ because $L_t^{(m_{\tau})}\geq L_{\tau}^{(m_{\tau})}$ and $r<1$, indicating that the training of new tasks enhances the performance of old tasks due to the repeated task arrivals in this phase.
Meanwhile, the coefficient of model gap $\sum_{n\neq n'}\|\bm{w}_{n'}-\bm{w}_n\|^2$ is greater than $0$, indicating that the forgetting is due to experts' {\em exploration} of distinct tasks.
{However, as stated in \Cref{prop:Convergence}, once the expert models converge at $t=T_1$, training on newly arriving tasks with correct routing no longer causes forgetting of previous tasks. 
Consequently, for $t\in\{T_1+1,\cdots, T\}$, $\mathbb{E}[F_t]=\frac{T_1-1}{t-1}\mathbb{E}[F_{T_1}]$ decreases with $t$ and converges to zero as $T\rightarrow \infty$. 
This result highlights that, unlike the oscillatory forgetting observed in \cref{forgetting_single} for a single expert, the MoE model effectively minimizes expected forgetting in CL through its correct routing mechanism.} Furthermore, a decrease in \emph{task similarity}, i.e., larger model gaps, further amplifies the learning benefit of the MoE model.

2) \emph{Generalization error.} {Note that the second term $\sum_{n\neq n'}^N\|\bm{w}_{n'}-\bm{w}_n\|^2$ dominates the generalization error when tasks are \emph{less similar} with large model gaps.}
In \cref{error_case1}, the coefficient $1-r^{L_{T_1}^{(m_{\tau})}}$ of the model gaps $\sum_{n\neq n'}^N\|\bm{w}_{n'}-\bm{w}_n\|^2$ is smaller than $1-r^{T}$ in \cref{error_single}, due to the convergence of expert models after round $T_1$. 
Therefore, the generalization error under MoE is reduced compared to that of a single expert, especially as $T$ increases (where $1-r^T$ approaches $1$ in \cref{error_single}). 

3) \emph{Expert number.} According to \Cref{thm:forgetting_error}, for $t>T_1$, $\mathbb{E}[F_t]$ increases with $T_1$ as additional rounds of expert exploration accumulate more model errors in \cref{forgetting_t<T1}. Regarding $\mathbb{E}[G_T]$ in \cref{error_case1}, a longer exploration period $T_1$ for experts increases the coefficient $1-r^{L_{T_1}^{(m_{\tau})}}$ of the model gaps, leading to an increase in $\mathbb{E}[G_T]$ when the model gaps across tasks are large.
Since $T_1$ increases with expert number $M$, adding more experts does not enhance learning performance but delays convergence. Note that if $M=1$ in \Cref{thm:forgetting_error}, $L_t^{(m_{\tau})}$ becomes $t$ for the single expert, causing \cref{forgetting_t<T1} and \cref{error_case1} to specialize to \cref{forgetting_single} and \cref{error_single}, respectively.

\subsection{Case II: Fewer experts than tasks}\label{section5-2}
Next, we consider a more general case with fewer experts than tasks, i.e., $M<N$, where \Cref{algo:update_MoE} still works efficiently.
In particular, we assume that the $N$ ground truths in $\mathcal{W}$ can be classified into $K$ clusters, where $K<M$, based on the task similarity. 
Let $\mathcal{W}_k$ denote the $k$-th task cluster. For any two tasks $n,n'\in[N]$ in the same cluster with $\bm{w}_{n},\bm{w}_{n'}\in \mathcal{W}_k$, we assume $\|\bm{w}_n-\bm{w}_{n'}\|_{\infty}=\mathcal{O}(\sigma_0^{1.5})$.
Then we let set $\mathcal{M}_k$ include all experts that specialize in tasks within the $k$-th task cluster.

Recall \Cref{prop:Convergence} indicates that expert models converge after $T_1$ rounds of exploration if $M>N$.
However, in the case of fewer experts than tasks ($M<N$), each expert has to specialize in learning a cluster of similar tasks. Consequently, as similar tasks within the same cluster are continuously routed to each expert, the expert models keep updating after round $T_1$, behaving differently from the $M>N$ case in \Cref{prop:Convergence}. 
Given the above understanding, we have the following theorem.
\begin{theorem}\label{thm:M<N} 
{If $M<N$ and $M= \Omega(K\ln(K))$}, for any $t\in\{1,\cdots, T_1\}$, the expected forgetting $\mathbb{E}[F_t]$ is the same as \cref{forgetting_t<T1}. While for any $t\in\{T_1+1,\cdots, T\}$, the expected forgetting satisfies
\begin{align}
\textstyle    \mathbb{E}[F_t]&\textstyle<\frac{1}{t-1}\sum_{\tau=1}^{t-1}\frac{r^{L_t^{(m_{\tau})}}-r^{L_{\tau}^{(m_{\tau})}}}{N}\sum_{n=1}^N\|\bm{w}_n\|^2+\frac{1}{t-1}\sum_{\tau=1}^{T_1}\frac{r^{L_i^{(m_{\tau})}}-r^{L_{T_1}^{(m_{\tau})}}}{N^2}\sum_{n\neq n'}\|\bm{w}_{n'}-\bm{w}_n\|^2\notag\\
    &\textstyle\quad +\frac{\Psi_1}{t-1}\sum_{\tau=T_1+1}^t (1-r^{L_t^{(m_{\tau})}-L_{\tau}^{(m_{\tau})}})r^{L_t^{(m_{\tau})}-L_{T_1}^{(m_{\tau})}-1}\label{forgetting_caseM<N},
\end{align}
where $\Psi_1=\frac{1}{N}\sum_{n=1}^N\sum_{n,n'\in \mathcal{W}_k}\frac{\|\bm{w}_{n'}-\bm{w}_{n}\|^2}{|\mathcal{W}_k|}$ is the expected model gap between any two tasks in the same task cluster.
After training task $n_T$ in round $T$, the overall generalization error satisfies
\begin{align}
\textstyle    \mathbb{E}[G_T]&\textstyle<\frac{1}{T}\sum_{{\tau}=1}^T\frac{r^{L_{T}^{(m_{\tau})}}}{N}\sum_{n=1}^N \|\bm{w}_n\|^2+\frac{1}{T}\sum_{{\tau}=1}^{T_1}\frac{r^{L_{T}^{(m_{\tau})}-L_{T_1}^{(m_{\tau})}}(1-r^{L_{T_1}^{(m_{\tau})}})}{N^2}\sum_{n\neq n'}\|\bm{w}_{n'}-\bm{w}_n\|^2\notag\\ &\textstyle\quad +\frac{\Psi_2}{T}\Big\{\sum_{{\tau}=1}^{T_1}(1-r^{L_{T}^{(m_{\tau})}-L_{T_1}^{(m_{\tau})}})+\sum_{{\tau}=T_1+1}^Tr^{L_{T}^{(m_{\tau})}-L_{T_1}^{(m_{\tau})}}(1-r^{L_{T_1}^{(m_{\tau})}})\Big\} \notag\\ &\textstyle\quad +\frac{\Psi_1}{T}\sum_{\tau=T_1+1}^T(1-r^{L_{T}^{(m_{\tau})}-L_{T_1}^{(m_{\tau})}}),\label{generalization_error_M<N}
\end{align}
where $\Psi_2=\frac{1}{N}\sum_{n=1}^N\frac{1}{K}\sum_{k=1}^K\frac{1}{|\mathcal{W}_k|}\sum_{n'\in\mathcal{W}_k}\|\bm{w}_{n'}-\bm{w}_n\|^2$ is the expected model gap between a randomly chosen task in $\mathcal{W}$ and any task in a fixed ground-truth cluster $\mathcal{W}_k$.
\end{theorem}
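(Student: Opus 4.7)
The plan is to analyze the forgetting and generalization error in the two phases identified in Proposition~1 (adapted to $M<N$): the exploration phase $\tau\le T_1$ and the post-exploration phase $\tau>T_1$, and then aggregate the contributions using the projection-based contraction inherent to overparameterized GD. The key one-step identity I would use is that, for any expert $m$ updated at round $t$ on task $n_t$,
\[
\bm{w}_t^{(m)}-\bm{w}_n=P_t\bigl(\bm{w}_{t-1}^{(m)}-\bm{w}_n\bigr)+(\mathbf{I}-P_t)(\bm{w}_{n_t}-\bm{w}_n),
\]
with $P_t=\mathbf{I}-\mathbf{X}_t(\mathbf{X}_t^\top \mathbf{X}_t)^{-1}\mathbf{X}_t^\top$ the orthogonal projection onto $\ker(\mathbf{X}_t^\top)$. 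Using $P_t(\mathbf{I}-P_t)=0$ and $\mathbb{E}\|P_t u\|_2^2=r\|u\|_2^2$ for a fixed $u$ (with $r=1-s/d$), taking expectations yields the one-step recursion $\mathbb{E}\|\bm{w}_t^{(m)}-\bm{w}_n\|_2^2=r\|\bm{w}_{t-1}^{(m)}-\bm{w}_n\|_2^2+(1-r)\|\bm{w}_{n_t}-\bm{w}_n\|_2^2$, which is the workhorse of the rest of the argument.

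For the exploration phase $t\le T_1$, I would reuse the argument behind Theorem~1, unrolling the recursion $L_t^{(m_\tau)}-L_\tau^{(m_\tau)}$ steps and replacing per-round cross-task squared gaps by their population average over uniform task draws. This yields the same form as \cref{forgetting_t<T1}, contributing the first two summands of \cref{forgetting_caseM<N}. For the post-exploration phase $\tau>T_1$, Proposition~1 (the $M<N$ extension) tells us that expert $m_\tau$ is locked into the cluster $\mathcal{W}_{k(n_\tau)}$ containing $\bm{w}_{n_\tau}$, so every subsequent update to $m_\tau$ uses a target drawn uniformly from that cluster. Unrolling the recursion over the $L_t^{(m_\tau)}-L_\tau^{(m_\tau)}$ remaining updates, substituting the within-cluster average squared distance, and multiplying in the residual contraction $r^{L_t^{(m_\tau)}-L_{T_1}^{(m_\tau)}-1}$ inherited from rounds between $T_1$ and $\tau$ yields the factor $\Psi_1$ together with the geometric weight $(1-r^{L_t^{(m_\tau)}-L_\tau^{(m_\tau)}})\,r^{L_t^{(m_\tau)}-L_{T_1}^{(m_\tau)}-1}$, i.e., the third term of \cref{forgetting_caseM<N}.

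For the overall generalization error $\mathbb{E}[G_T]$, I split the sum over $\tau$ into $\tau\le T_1$ and $\tau>T_1$ and track $\mathcal{E}_\tau(\bm{w}_T^{(m_\tau)})$ up to round $T$ using the same recursion. Tasks from the exploration phase contribute (i) a residual containing the $\|\bm{w}_n\|_2^2$ term, (ii) a cross-cluster contribution captured by $\Psi_2$ (arising because during exploration the expert's subsequent targets need not lie in $\mathcal{W}_{k(n_\tau)}$), and (iii) a within-cluster contribution captured by $\Psi_1$ (once the expert has specialized after $T_1$); tasks from post-exploration contribute only the within-cluster $\Psi_1$ term. Collecting the geometric factors, with exponents $L_T^{(m_\tau)}-L_{T_1}^{(m_\tau)}$ for post-$T_1$ updates and $L_{T_1}^{(m_\tau)}$ for pre-$T_1$ updates, then yields \cref{generalization_error_M<N}.

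The main obstacle is the careful bookkeeping in the post-exploration phase: each expert's model keeps drifting as new within-cluster tasks arrive, so the effective ``target'' of a previously learned task is itself a moving quantity, and one must verify that the cross terms between the kernel-space projection and the target-difference component average to zero (or to lower order) over the randomness of the Gaussian features. A second subtlety is that the per-expert load $L_t^{(m)}$ is random and correlated with the routing; however, the assumption $M=\Omega(K\ln K)$ combined with the termination of $\bm{\Theta}_t$ guaranteed by Proposition~3 ensures that loads balance across experts within each cluster with high probability, so $L_t^{(m_\tau)}$ can be treated as a deterministic upper index in the final bounds, accounting for the strict inequality $<$ in the theorem.
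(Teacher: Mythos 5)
Your proposal follows essentially the same route as the paper's proof: the one-step projection recursion $\mathbb{E}\|\bm{w}_t^{(m)}-\bm{w}_n\|^2=r\,\mathbb{E}\|\bm{w}_{t-1}^{(m)}-\bm{w}_n\|^2+(1-r)\,\mathbb{E}\|\bm{w}_{n_t}-\bm{w}_n\|^2$ is exactly the paper's Lemma on model-error unrolling, and your split of the sums at $T_1$ with population-averaged cross-task gaps before specialization and within-cluster gaps after specialization reproduces the paper's term-by-term decomposition yielding $\Psi_1$ and $\Psi_2$. The approach and the key lemma are the same, so no further comparison is needed.
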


The proof of \Cref{thm:M<N} is given in \Cref{proof_thm3}, and we provide the following insights. 

1) \emph{Forgetting.} Compared to \Cref{thm:forgetting_error}, $\mathbb{E}[F_t]$ in \cref{forgetting_caseM<N} introduces an additional term $\Psi_1$, which measures the forgetting of task arrivals during $\{T_1+1,\cdots,\tau\}$ caused by updating expert models for new task arrival in round $\tau\in\{T_1+1,\cdots, T\}$. 
However, since tasks routed to the same expert during $\{T_1+1,\cdots, T\}$ belong to the same cluster, their small model gaps lead to minimal forgetting. 
If there is only one task in each cluster, $\Psi$ becomes $0$ and $\mathbb{E}[F_t]$ becomes the same as \cref{forgetting_t<T1}.

2) \emph{Generalization error.} As expert models continuously update at any $t$, $\mathbb{E}[G_T]$ in \cref{generalization_error_M<N} comprises three terms: 
a) the expected model gap $\frac{1}{N^2}\sum_{n\neq n'}^N\|\bm{w}_{n'}-\bm{w}_n\|^2$ between any two random task arrivals for $t<T_1$, 
b) the expected model gap $\Psi_1$ between two tasks in the same cluster for $t>T_1$, and 
c) the expected model gap $\Psi_2$ between a random task arrival for $t<T_1$ and any task arrival in a fixed ground-truth cluster $\mathcal{W}_k$ for $t>T_1$. 
If each cluster contains only one task, the coefficient of $\Psi_2$ simplifies to $\sum_{\tau=T_1+1}^T(1-r^{L_{T_1}^{(m_{\tau})}})$, given $L_T^{(m_{\tau})}=L_{T_1}^{(m_{\tau})}$ without updates after  $T_1$. 
Moreover, $\Psi_2=\frac{1}{N^2}\sum_{n\neq n'}^N\|\bm{w}_{n'}-\bm{w}_n\|^2$ and $\Psi_1=0$, resulting in \cref{generalization_error_M<N} specializing to \cref{error_case1}.


Note that under our assumption $\|\bm{w}_n-\bm{w}_{n'}\|_{\infty}=\mathcal{O}(\sigma_0^{1.5})$, the router cannot distinguish between similar tasks $n$ and $n'$ within the same cluster. Consequently, adding more experts cannot avoid the errors $\Psi_1$ and $\Psi_2$ in \cref{forgetting_caseM<N} and \cref{generalization_error_M<N}. 
Therefore, similar to our insights of \Cref{thm:forgetting_error}, when there are enough experts than clusters (i.e., $M=\Omega(K\ln(K))$), adding more experts does not enhance learning performance but delays convergence.
{Although the learning performance degrades compared to \Cref{thm:forgetting_error}, it still benefits from MoE compared to the single expert in \Cref{lemma:single_model}.}

{Note that if we extend to the top-$k$ routing strategy in \cref{routing_strategy}, the router will select $k$ experts to train the same data at a time. In this case, the forgetting described in \cref{forgetting_caseM<N} may decrease, as each expert may handle a smaller cluster of tasks compared to the case with top-$1$ routing strategy.
However, similar tasks that belong to the same cluster in the top-$1$ case now may be divided into different clusters and handled by different expert, which may reduce the potential positive knowledge transfer among these tasks. Consequently, 
the generalization error in \cref{generalization_error_M<N} may not be smaller for the top-$k$ case.}

\section{Experiments}\label{Section6}
{In this section, we present extensive experiments on both linear models and DNNs to validate our theoretical analysis. 
Due to space constraints, we include detailed experimental setups and additional results on datasets such as MNIST \cite{lecun1989handwritten}, CIFAR-100 \cite{krizhevsky2009learning} and Tiny ImageNet \cite{le2015tiny} in \Cref{appendix_experiment}.}

\textbf{Key design of early termination.} 
In the first experiment, we aim to check the necessity of terminating the update of $\mathbf{\Theta}_t$ in Line~\ref{line:termination1} of \Cref{algo:update_MoE}. Here we set $T=2000, N=6, K=3$ and vary expert number $M\in\{1,5,10,20\}$.
As depicted in \Cref{subfig:forgetting_term} and \Cref{subfig:error_term},
both forgetting and generalization error first increase due to the expert exploration and then converge to almost zero for all MoE models with termination of update, verifying \Cref{thm:forgetting_error} and \Cref{thm:M<N}. In stark contrast, learning without termination leads to poor performance with large oscillations in \Cref{subfig:forgetting_noterm} and \Cref{subfig:error_noterm}, as the router selects the wrong expert for a new task arrival after the continual update of $\mathbf{\Theta}_t$. In addition, in both \Cref{subfig:forgetting_term} and \Cref{subfig:error_term}, the MoE model significantly improves the performance of CL compared to a single model. The
comparison between $M=10$ and $M=20$ also indicates that adding extra experts delays the convergence if $M>N$, which does not improve learning performance, verifying our analysis in \Cref{thm:forgetting_error} and \Cref{thm:M<N}. 

\begin{figure}[htbp]
    \centering
    \subfigure[With termination.]{\label{subfig:forgetting_term}\includegraphics[width=0.24\textwidth]{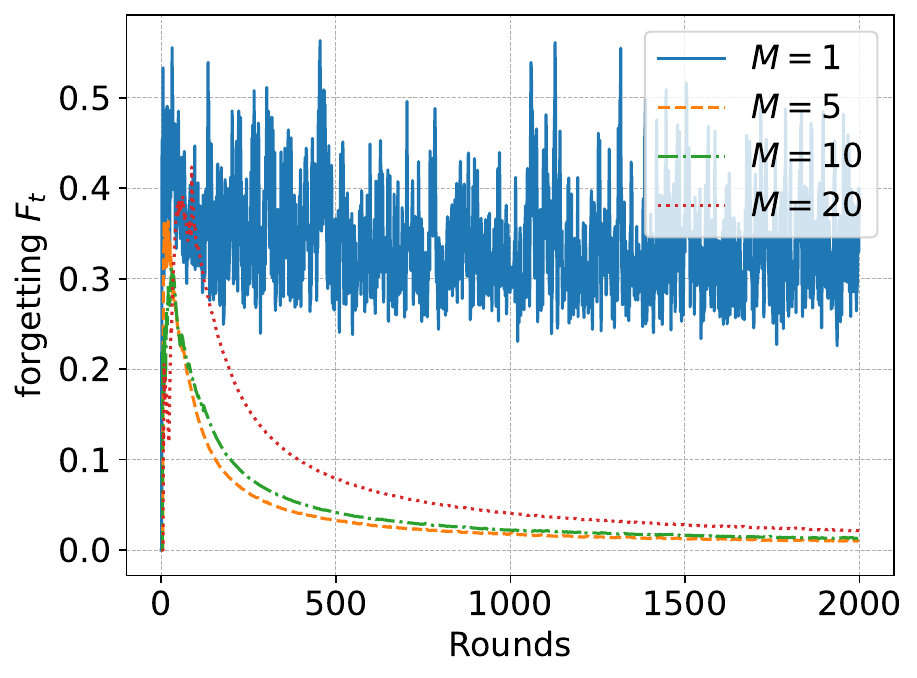}} 
    \subfigure[Without termination.]{\label{subfig:forgetting_noterm}\includegraphics[width=0.24\textwidth]{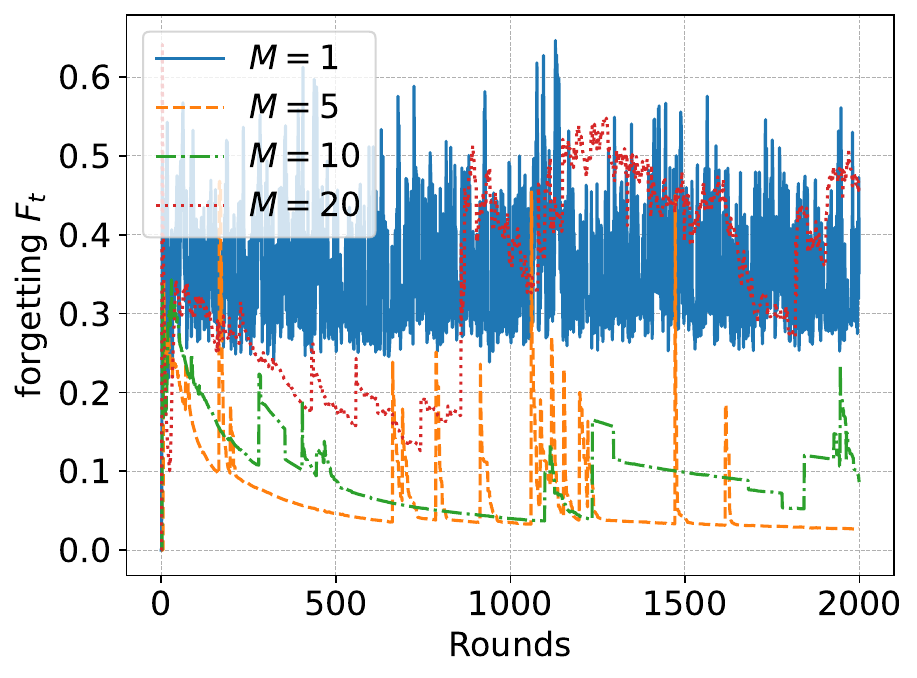}} 
    \subfigure[With termination.]{\label{subfig:error_term}\includegraphics[width=0.24\textwidth]{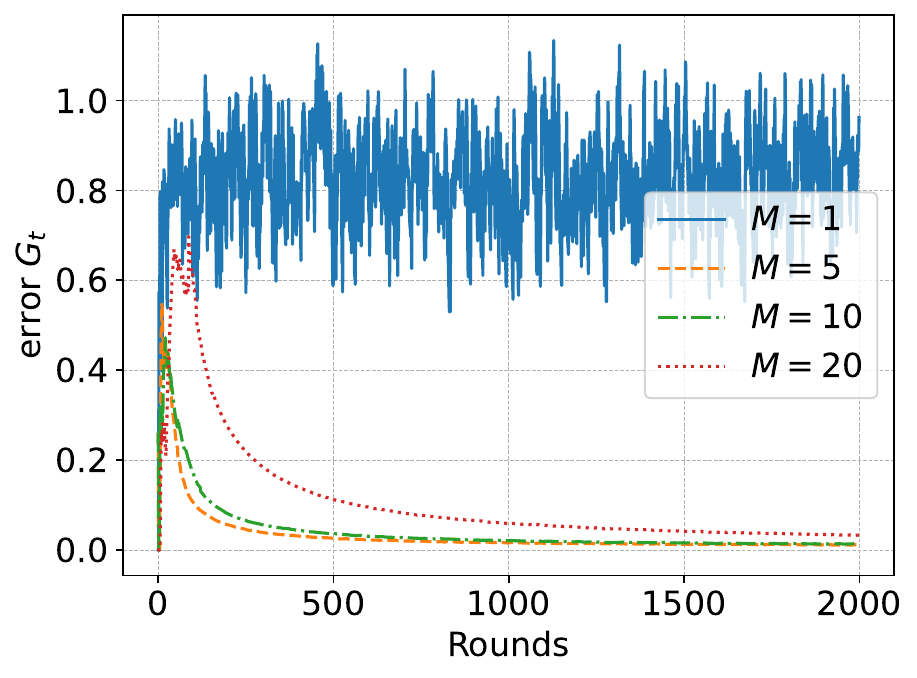}}
    \subfigure[Without termination.]{\label{subfig:error_noterm}\includegraphics[width=0.24\textwidth]{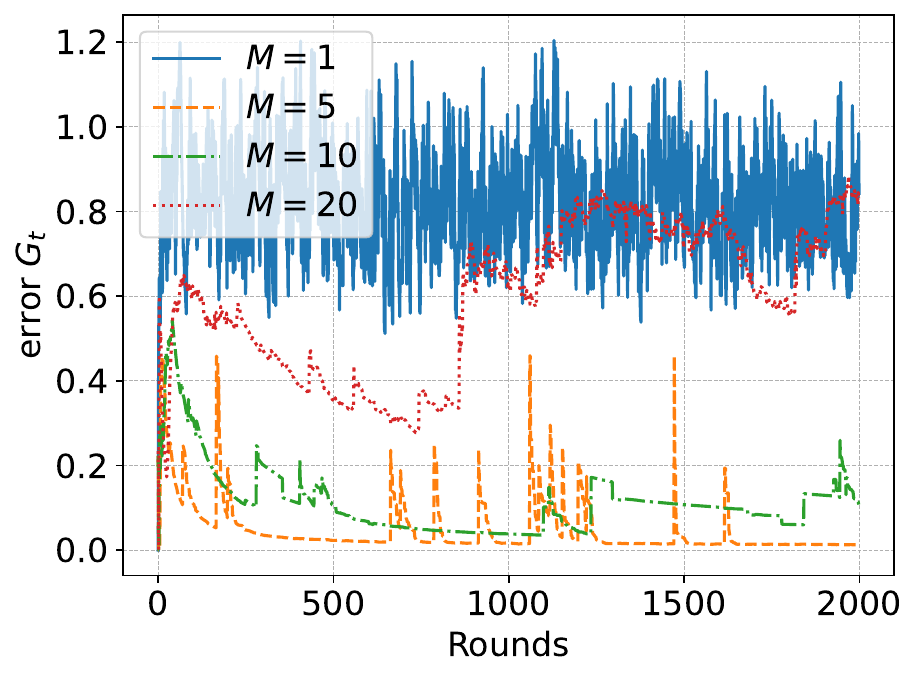}}
    \vspace{-0.2cm}
    \caption{The dynamics of forgetting and overall generalization errors with and without termination of updating $\mathbf{\Theta}_t$ in \Cref{algo:update_MoE}. Here we set $N=6$ with $K=3$ clusters and vary $M\in\{1,5,10,20\}$.}
    \label{fig:S_error}
\end{figure}



\textbf{Real-data validation}. Finally, we extend our \Cref{algo:update_MoE} and insights from linear models to DNNs by conducting experiments on the CIFAR-10 dataset (\cite{krizhevsky2009learning}). 
The details of our experiment setup is given in \Cref{appendix_CIFAR10}.
We set $K=4, N=300$ and vary $M\in\{1,4,12\}$. In each training round, to diversify the model gaps of different tasks, we transform the $d\times d$ matrix into a $d\times d$ dimensional normalized vector to serve as input for the gating network. Then we calculate the variance $\sigma_0$ of each element across all tasks from the input vector, which is then used for parameter setting in \Cref{algo:update_MoE}. \Cref{fig:NN_error} illustrates that our theoretical insights from linear models also hold for DNNs, in terms of the impact of MoE and early termination on the performance of CL in practice.

\begin{figure}[htbp]
    \centering
    \subfigure[With termination.]{\includegraphics[width=0.24\textwidth]{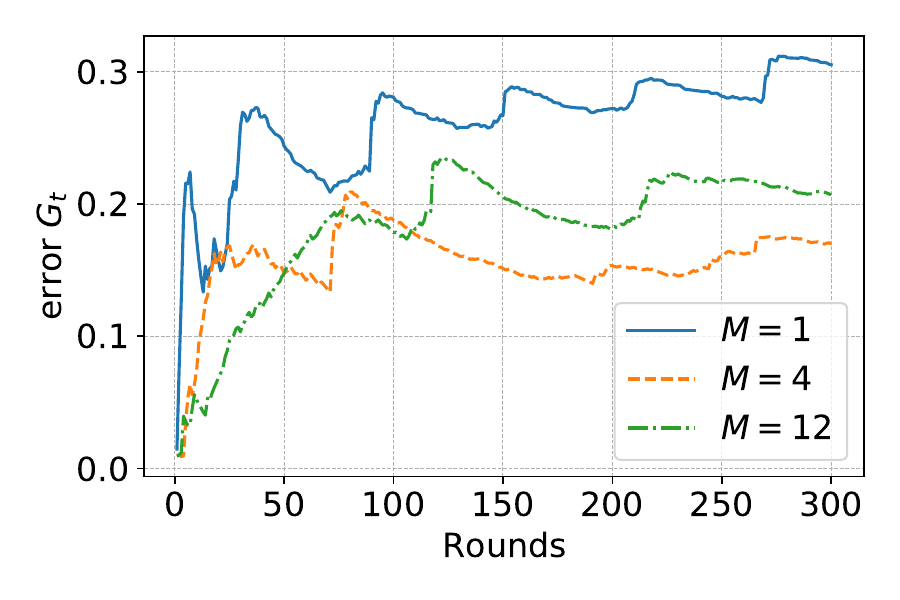}}
    \subfigure[Without termination.]{\includegraphics[width=0.24\textwidth]{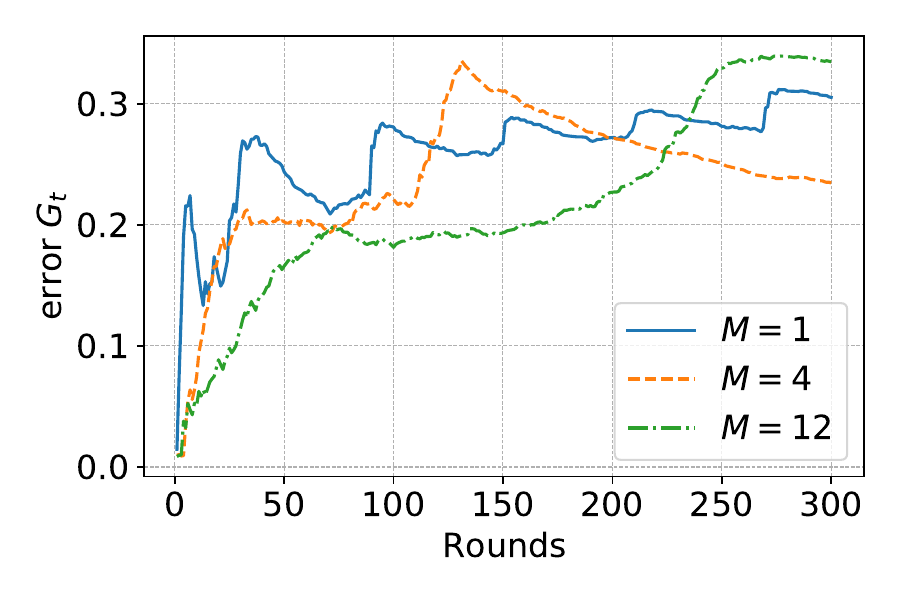}}
    \subfigure[With termination.]{\includegraphics[width=0.24\textwidth]{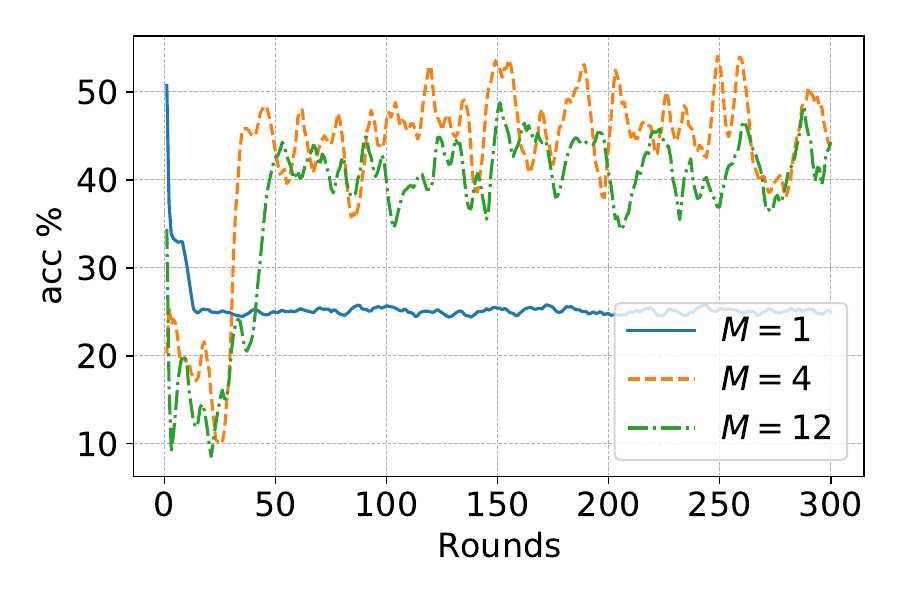}} 
    \subfigure[Without termination.]{\includegraphics[width=0.24\textwidth]{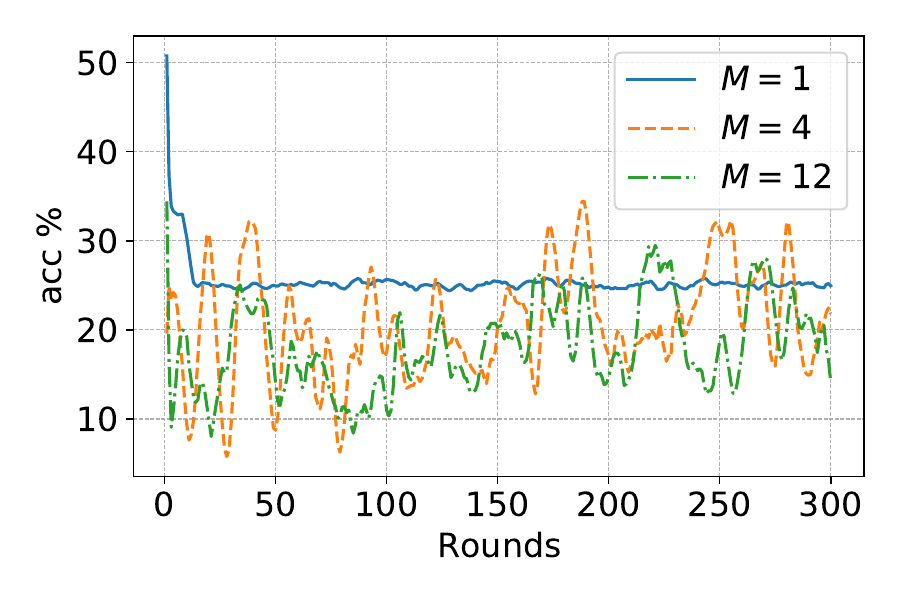}} 
    \vspace{-0.2cm}
    \caption{{The dynamics of overall generalization error and test accuracy under the CIFAR-10 dataset (\cite{krizhevsky2009learning}). Here we set $K=4, N=300$ and $M\in\{1,4,12\}$.}}
    \label{fig:NN_error}
\end{figure}

\section{Conclusion}
In this work, we conducted the first theoretical analysis of MoE and its impact on learning performance in CL, focusing on an overparameterized linear regression problem. We establish the benefit of MoE over a single expert by proving that the MoE model can diversify its experts to specialize in different tasks, while its router learns to select the right expert for each task and balance the loads across all experts. Then we demonstrated that, under CL, terminating the updating of gating network parameters after sufficient training rounds is necessary for system convergence. Furthermore, we provided explicit forms of the expected forgetting and overall generalization error to assess the impact of MoE. Interestingly, adding more experts requires additional rounds before convergence, which may not enhance the learning performance. Finally, we conducted experiments on real datasets using DNNs to show that certain insights can extend beyond linear models.


\newpage

\subsubsection*{Acknowledgments}
This work has been supported in part by the U.S. National Science Foundation under the grants: NSF AI Institute (AI-EDGE) 2112471, CNS-2312836, and was sponsored by the Army Research Laboratory under Cooperative Agreement Number W911NF-23-2-0225. The views and conclusions contained in this document are those of the authors and should not be interpreted as representing the official policies, either expressed or implied, of the Army Research Laboratory or the U.S. Government. The U.S. Government is authorized to reproduce and distribute reprints for Government purposes notwithstanding any copyright notation herein. 

This work was also supported in part by the Ministry of Education, Singapore, under its Academic Research Fund Tier 2 Grant with Award no. MOE-T2EP20121-0001; in part by SUTD Kickstarter Initiative (SKI) Grant with no. SKI 2021\_04\_07; and in part by the Joint SMU-SUTD Grant with no. 22-LKCSB-SMU-053.

The work of Yingbin Liang was supported in part by the U.S. National Science Foundation with the grants RINGS-2148253, CNS-2112471, and ECCS-2413528.

We sincerely thank Qinhang Wu for his invaluable assistance in conducting extensive experiments during our rebuttal, which greatly contributed to the acceptance of this paper.


\newpage

\appendix

\bigskip
\begin{center}
{\Huge
\textsc{Appendix}
}
\end{center}
\bigskip
\startcontents[section]
{
\hypersetup{hidelinks}
\printcontents[section]{l}{1}{\setcounter{tocdepth}{2}}
}
\newpage

\appendix

\section{{Experimental details and additional experiments}}\label{appendix_experiment}

\subsection{{Experiments compute resources}}\label{appendix_compute_resources}

{Operating system: Red Hat Enterprise Linux Server 7.9 (Maipo)}

{Type of CPU: 2.9 GHz 48-Core Intel Xeon 8268s}

{Type of GPU: NVIDIA Volta V100 w/32 GB GPU memory}



\subsection{Experimental details of \Cref{fig:S_error}}\label{appendix_sythetic_data}

\textbf{Synthetic data generation}. We first generate $N$ ground truths and their corresponding feature signals. For each ground truth $\bm{w}_n\in\mathbb{R}^{d}$, where $n\in[N]$, we randomly generate $d$ elements by a normal distribution $\mathcal{N}(0,\sigma_0)$. These ground truths are then scaled by a constant to obtain their feature signals $\bm{v}_n$.
In each training round $t$, we generate $(\mathbf{X}_t,\mathbf{y}_t)$ according to \Cref{def:feature_structure} based on ground-truth pool $\mathcal{W}$ and feature signals. Specifically, after drawing $\bm{w}_{n_t}$ from $\mathcal{W}$, for $\mathbf{X}_t\in\mathbb{R}^{d\times s}$, we randomly select one out of $s$ samples to fill with $\mathbf{v}_{n_t}$. The other $s-1$ samples are generated from $\mathcal{N}(\bm{0},\sigma_t^2\bm{I}_d)$. Finally, we compute the output $\mathbf{y}_t=\mathbf{X}_t^\top\bm{w}_{n_t}$.
Here we set $\sigma_0=0.4,\sigma_t=0.1,d=10$ and $s=6$. In \Cref{fig:S_error}, we set $ \eta=0.5,\alpha=0.5$ and $\lambda=0.3$.

\subsection{{Experimental details of \Cref{fig:NN_error}}}\label{appendix_CIFAR10}
{\emph{Datasets.} We use the CIFAR-10 (\cite{krizhevsky2009learning}) dataset, selecting $512$ samples randomly for training and $2000$ samples for testing at each training round.}

{\emph{DNN architecture and training details.}
We employ a non-pretrained ResNet-18 as our base model. Each task is learned using Adam with a learning rate governed by a cosine annealing schedule for 5 epoches, with a minibatch size of $32$, a weight decay of $0.005$. The initial learning rate is set to $0.0005$, and it is reduced to a minimum value of $10^{-6}$ over a total of $300$ rounds.}

{\emph{Task setups.} We define the ground truth pool as $\mathcal{W}=\{(0), (4), (5), (9)\}$, representing $K=4$ clusters of tasks for recognizing the image classes airplane, deer, dog, truck, respectively. The experiment spans $T=300$ training rounds with $N=300$ tasks. We randomly generate the task arrival sequence $[n_t]_{t\in[T]}$, where each $n_t$ is drawn from ${(0), (4), (5), (9)}$ with equal probability $\frac{1}{4}$. We then conduct two experiments (with and without termination) using the same task arrival order. For each task $t \in [T]$, we randomly select its type (e.g., task $(0)$ for recognizing the airplane class image) and $512$ corresponding samples, ensuring that tasks have distinct distributions and features.}

\begin{figure}[htbp]
    \centering
    \subfigure[With termination]{\includegraphics[width=0.3\textwidth]{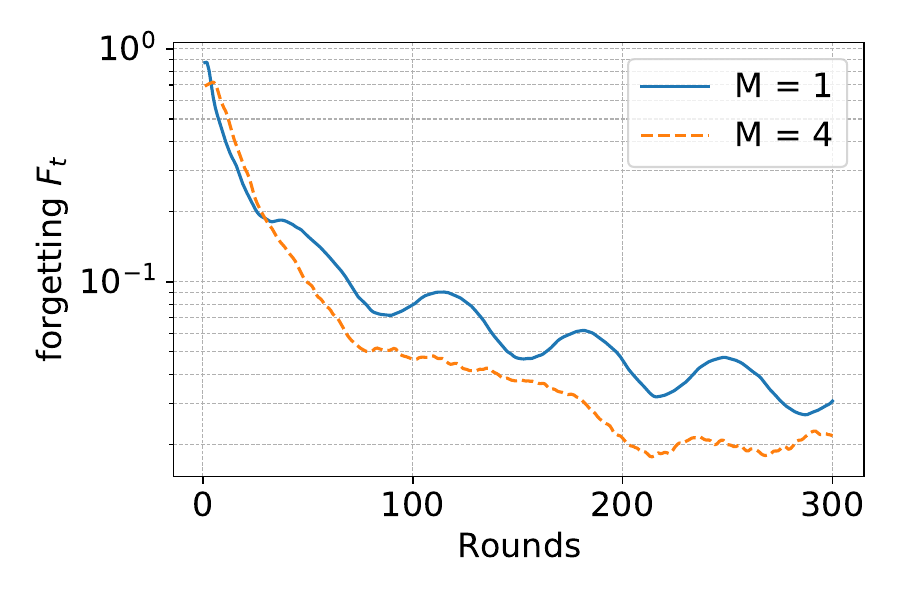}}
    \subfigure[Without termination]{\includegraphics[width=0.3\textwidth]{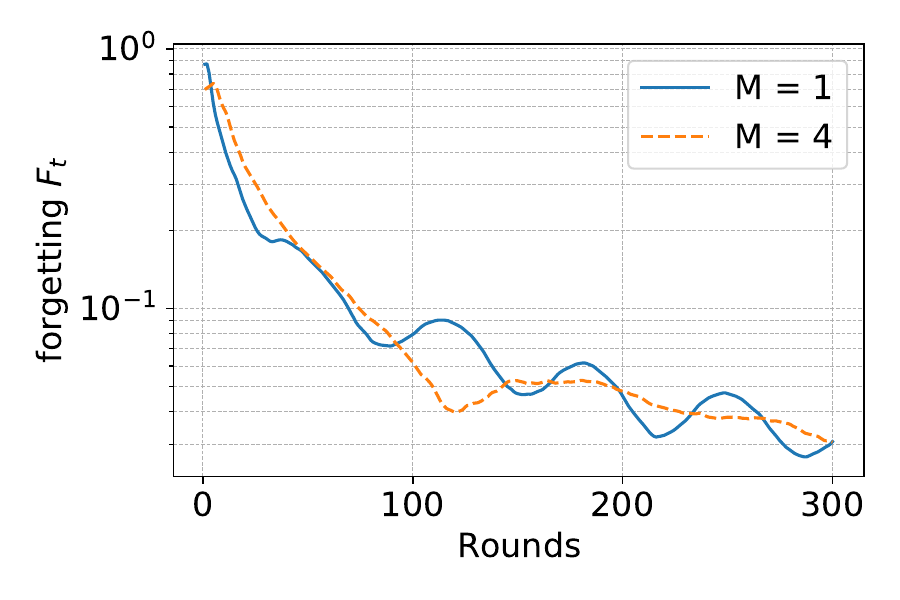}}
    \caption{{The dynamics of forgetting under the CIFAR-10 dataset. Here we set $N=4$ and $M\in\{1, 4\}$.
    }}
    \label{fig:forgetting_CIFAR10}
\end{figure}

\subsection{{Experiments on the MNIST dataset}}\label{appendix_MNIST}
\emph{Datasets.} We use the MNIST dataset \cite{lecun1989handwritten}, selecting $100$ samples randomly for training and $1000$ samples for testing at each training round.

\emph{DNN architecture and training details.} We use a five-layer neural network, consisting of two convolutional layers and three fully connected layers. ReLU activation is applied to the first four layers, while Sigmoid is used for the final layer. The first convolutional layer is followed by a 2D max-pooling operation with a stride of $2$. Each task is learned using SGD with a learning rate of $0.2$ for $600$ epochs. The forgetting and overall generalization error are evaluated as described in \cref{def:forggeting_Ft} and \cref{def:generalization_error}, respectively. Here, $\mathcal{E}_t(\bm{w}_t^{(m_t)})$ is defined as the mean-squared test error instead of \cref{def:model_error}.

{\emph{Task setups.} We define the ground truth pool as $\mathcal{W}=\{(1), (4), (7)\}$, representing $K=3$ clusters of tasks for recognizing the numbers 1, 4, and 7, respectively. The experiment spans $T=60$ training rounds with $N=60$ tasks. Before the experiments in \Cref{fig:NN_error_MNIST}, we randomly generate the task arrival sequence $[n_t]_{t\in[T]}$, where each $n_t$ is drawn from ${(1),(4),(7)}$ with equal probability $\frac{1}{3}$. We then conduct two experiments (with and without termination) using the same task arrival order. For each task $t \in [T]$, we randomly select its type (e.g., task $(1)$ for recognizing the number 1) and 100 corresponding samples, ensuring that tasks have distinct distributions and features.}

\begin{figure}[htbp]
    \centering
    \subfigure[With termination.]{\includegraphics[width=0.24\textwidth]{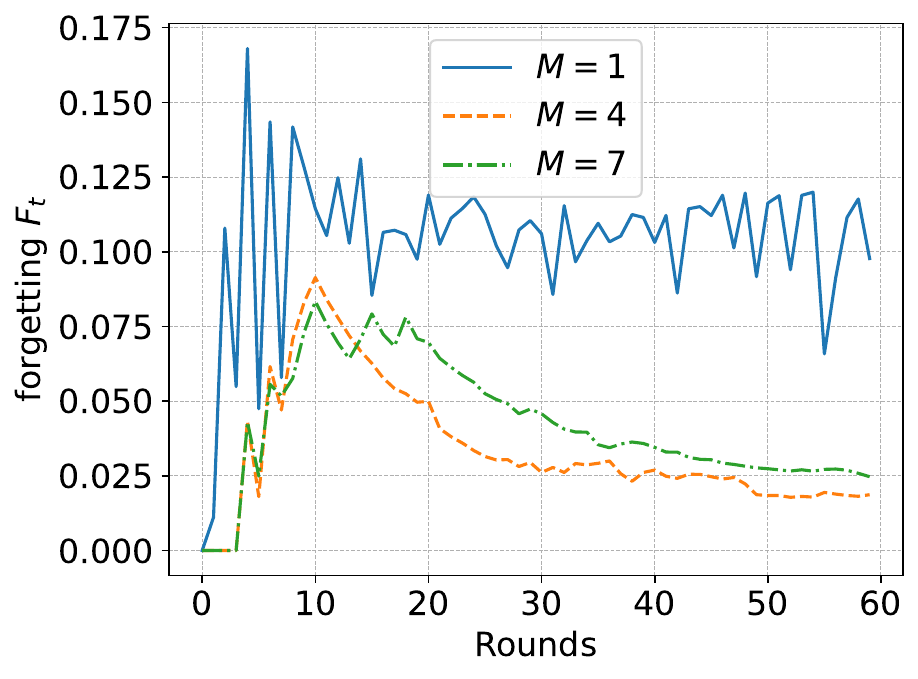}} 
    \subfigure[Without termination.]{\includegraphics[width=0.24\textwidth]{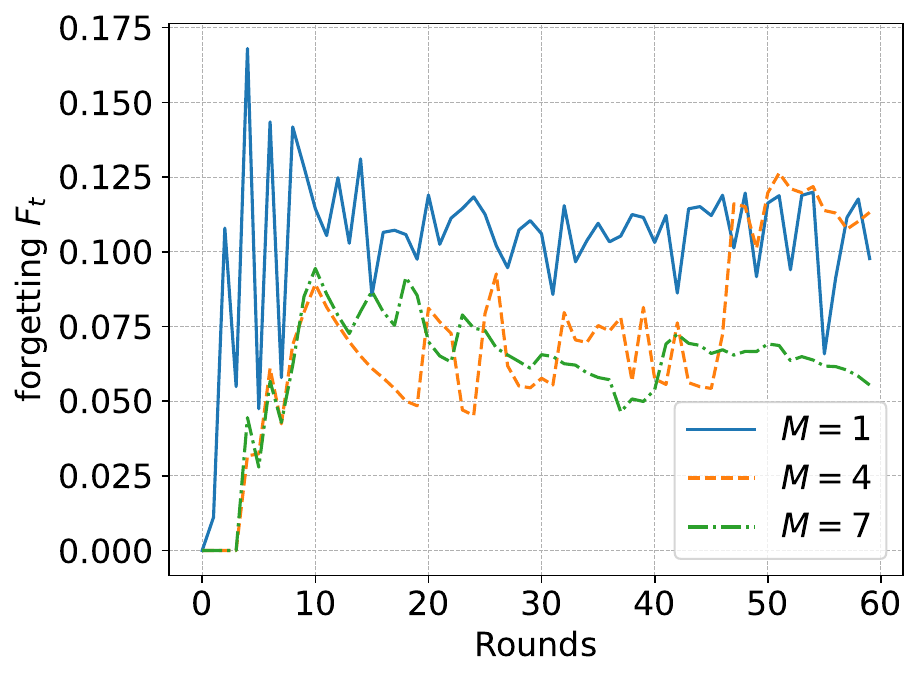}} 
    \subfigure[With termination.]{\includegraphics[width=0.24\textwidth]{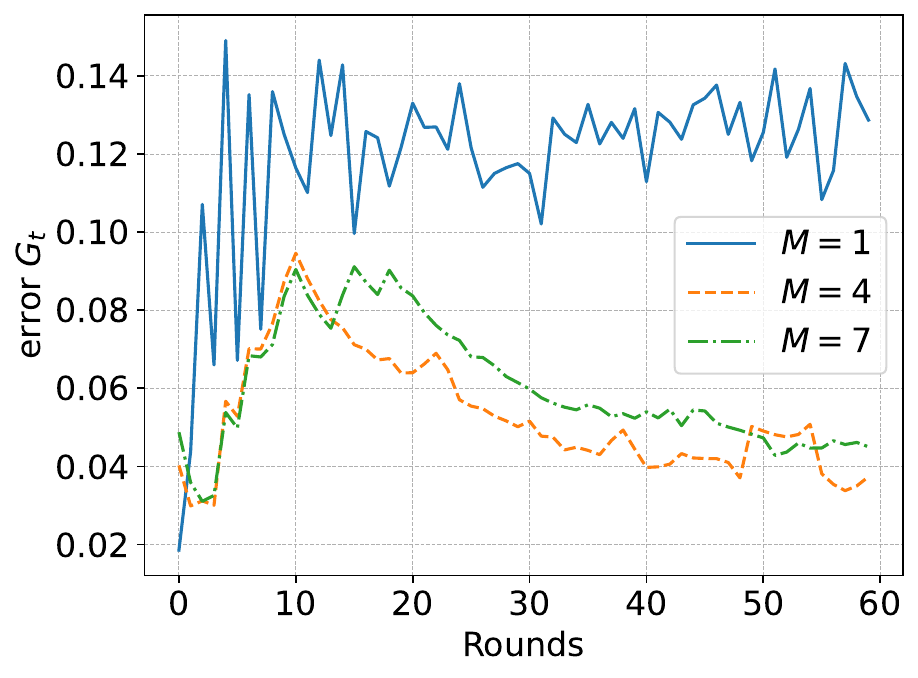}}
    \subfigure[Without termination.]{\includegraphics[width=0.24\textwidth]{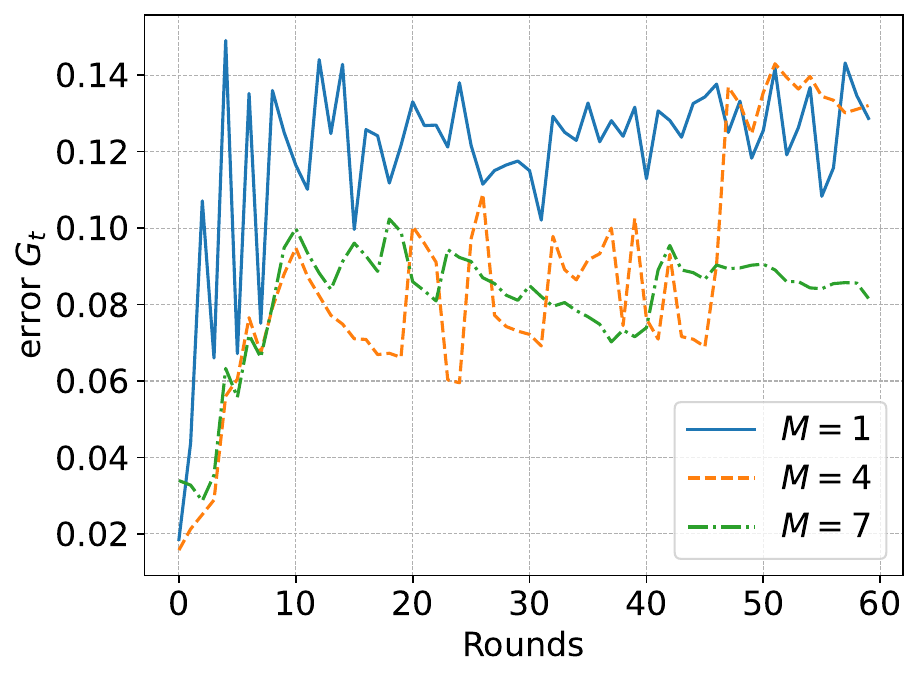}}
    \vspace{-0.2cm}
    \caption{Learning performance under the MNIST dataset (\cite{lecun1989handwritten}). Here we set $K=3, N=60$ and $\Scale[0.95]{M\in\{1,4,7\}}$.}
    \label{fig:NN_error_MNIST}
\end{figure}

\subsection{{Experiments on the CIFAR-100 dataset}}\label{appendix_CIFAR100}

{\emph{Datasets.} We use the CIFAR-100 (\cite{krizhevsky2009learning}) dataset, selecting $192$ samples randomly for training and $600$ samples for testing at each training round.}

{\emph{DNN architecture and training details.} They are the same as the experiments on the CIFAR-10 dataset in \Cref{appendix_CIFAR10}.}

{\emph{Task setups.} We define the ground truth pool as $\mathcal{W}=\{(28), (40), (52), (72), (79), (99)\}$, representing $K=6$ clusters of tasks for recognizing the image classes telephone, bee, mountain, bear, turtle, tractor respectively. The experiment spans $T=350$ training rounds with $N=350$ tasks. We randomly generate the task arrival sequence $[n_t]_{t\in[T]}$, where each $n_t$ is drawn from ${(28), (40), (52), (72), (79), (99)}$ with equal probability $\frac{1}{6}$. We then conduct two experiments (with and without termination) using the same task arrival order. For each task $t \in [T]$, we randomly select its type (e.g., task $(28)$ for recognizing the telephone class image) and $192$ corresponding samples, ensuring that tasks have distinct distributions and features.}

\begin{figure}[htbp]
    \centering
    \subfigure[With termination]{\includegraphics[width=0.3\textwidth]{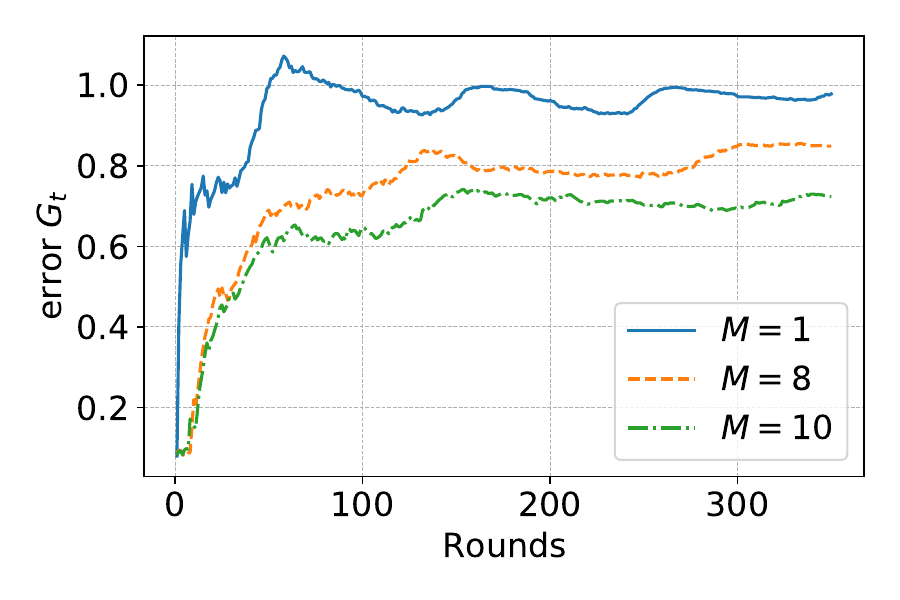}}
    \subfigure[Without termination]{\includegraphics[width=0.3\textwidth]{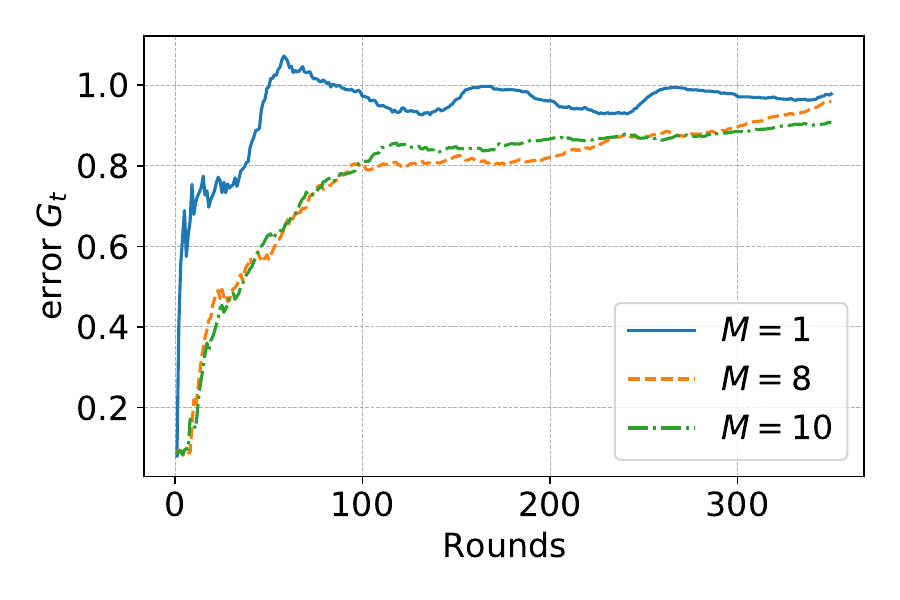}}
    \caption{{Learning performance under the CIFAR-100 dataset. Here we set $N=6$ and $M\in\{1, 8, 10\}$.
    }}
    \label{fig:CIFAR100}
\end{figure}

\begin{table}[htbp]
\caption{{The average incremental accuracy under the CIFAR-100 dataset.}}
\label{table_CIFAR100}
\begin{center}
\begin{tabular}{l|r}
\hline
\multicolumn{1}{c}{\bf Expert number}  &\multicolumn{1}{c}{\bf Test accuracy (\%)}
\\ \hline \\
$M=1$         &  $17.1$ \\
$M=8$             & $25.5$ \\
$M=8$ w/o ET            &  $10.3$\\
$M=10$             & $32.9$ \\
$M=10$ w/o ET            & $9.1$ \\
\end{tabular}
\end{center}
\end{table}

\subsection{{Experiments on the Tiny ImageNet dataset}}\label{appendix_imagenet}
{\emph{Datasets.} We use the Tiny ImageNet (\cite{le2015tiny}) dataset, selecting $192$ samples randomly for training and $300$ samples for testing at each training round.}

{\emph{DNN architecture and training details.} They are the same as the experiments on the CIFAR-10 dataset in \Cref{appendix_CIFAR10}.}

{\emph{Task setups.} We define the ground truth pool as $\mathcal{W}=\{(20), (50), (83), (145), (168), (179)\}$, representing $K=6$ clusters of tasks for recognizing six disjoint image classes from Tiny Imagenet respectively. The experiment spans $T=300$ training rounds with $N=300$ tasks. We randomly generate the task arrival sequence $[n_t]_{t\in[T]}$, where each $n_t$ is drawn from ${(20), (50), (83), (145), (168), (179)}$ with equal probability $\frac{1}{6}$. We then conduct two experiments (with and without termination) using the same task arrival order. For each task $t \in [T]$, we randomly select its type (e.g., task $(20)$) and $192$ corresponding samples, ensuring that tasks have distinct distributions and features.}

\begin{figure}[htbp]
    \centering
    \subfigure[With termination]{\includegraphics[width=0.3\textwidth]{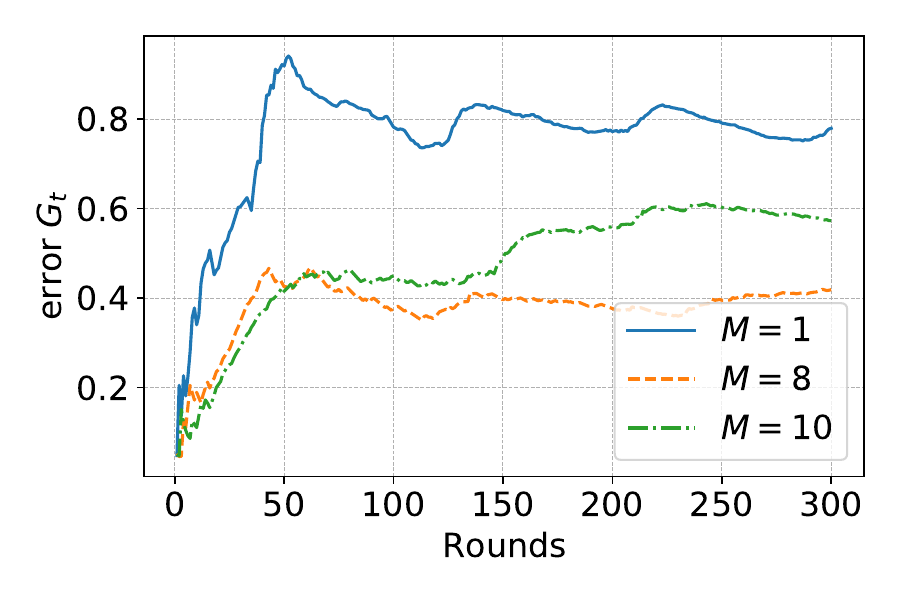}}
    \subfigure[Without termination]{\includegraphics[width=0.3\textwidth]{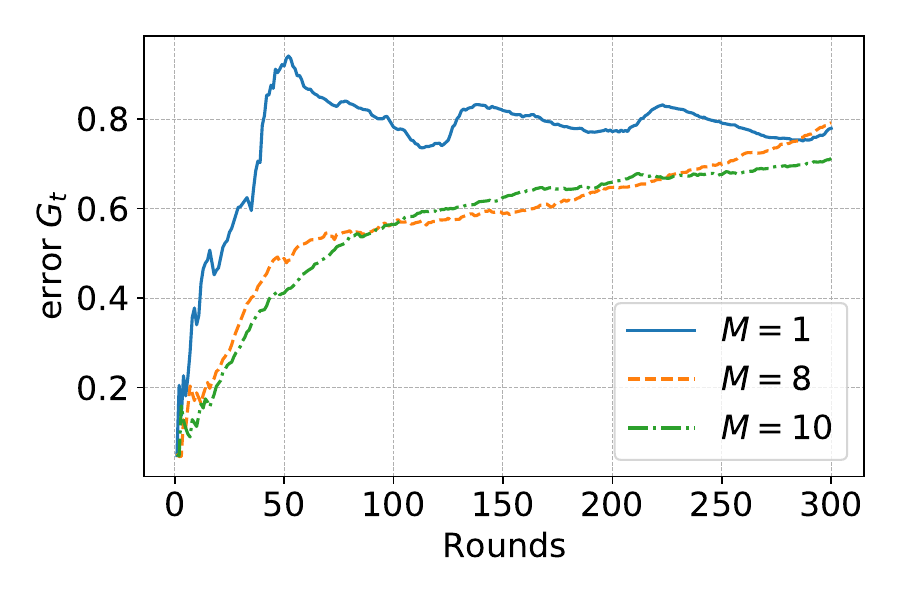}}
    \caption{{Learning performance under the Tiny ImageNet dataset. Here we set $N=6$ and $M\in\{1, 8, 10\}$.}
    }
    \label{fig:TINet}
\end{figure}

\begin{table}[htbp]
\caption{{The average incremental accuracy under the Tiny ImageNet dataset.}}
\label{table_TINet}
\begin{center}
\begin{tabular}{l|r}
\hline
\multicolumn{1}{c}{\bf Expert number}  &\multicolumn{1}{c}{\bf Test accuracy (\%)}
\\ \hline \\
$M=1$         &  $17.4$ \\
$M=8$             & $37.6$ \\
$M=8$ w/o ET            & $10.5$ \\
$M=10$             & $28.3$ \\
$M=10$ w/o ET            & $10.3$ \\
\end{tabular}
\end{center}
\end{table}

\subsection{Experiments on termination threshold and load balance}

In additional experiments, we vary termination threshold $\Gamma\in\{\sigma_0^{0.75},\sigma_0,\sigma_0^{1.25},\sigma_0^{1.5}\}$ in Line~\ref{line:termination_condition} of \Cref{algo:update_MoE} to investigate its effect on load balance and learning performance, under the same synthetic data generation as \Cref{fig:S_error} in \Cref{appendix_sythetic_data}. 

Initially, we set $\sigma_0=0.4$,  $\lambda=\sigma_0^{1.25}$, $M=5$, and $N=6$ with $K=3$ task clusters: $\mathcal{W}_1=\{1,4\},\mathcal{W}_2=\{2,5\}$, and $\mathcal{W}_3=\{3,6\}$. \Cref{fig:record_tasks} illustrates the recorded task arrivals per round and their routed experts. \Cref{subfig:record_75} and \Cref{subfig:record_100} depict that if the MoE model terminates the update based on $\Gamma>\lambda$, the small noise $r_t^{(m)}$ cannot alter the router's decision from the expert with the maximum gate output for each task cluster (e.g., expert 5 for $\mathcal{W}_2=\{2,5\}$ in \Cref{subfig:record_75}) in \cref{routing_strategy}, leading to imbalanced expert load.
While for $\Gamma\leq \lambda$ in \Cref{subfig:record_125} and \Cref{subfig:record_150}, the random noise $r_t^{(m)}$ in \cref{routing_strategy} can mitigate the gaps of gate outputs among experts within the same expert set, ensuring load balance (e.g., experts 3 and 4 for $\mathcal{W}_2=\{2,5\}$ in \Cref{subfig:record_125}). 

\begin{figure}[htbp]
    \centering
    \subfigure[Threshold $=\sigma_0^{0.75}$.]{\label{subfig:record_75}\includegraphics[width=0.24\textwidth]{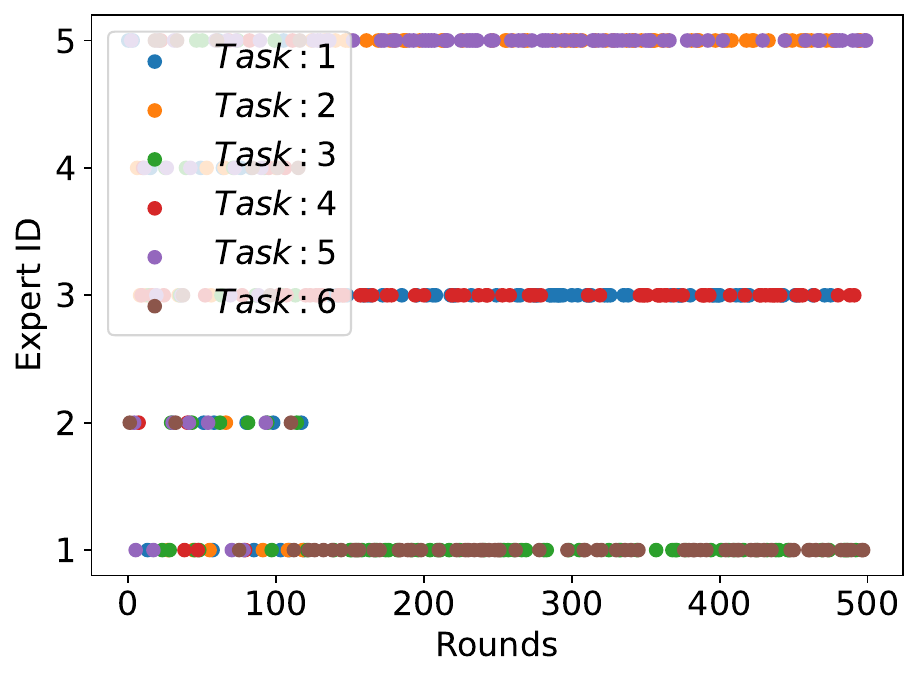}} 
    \subfigure[Threshold $=\sigma_0$.]{\label{subfig:record_100}\includegraphics[width=0.24\textwidth]{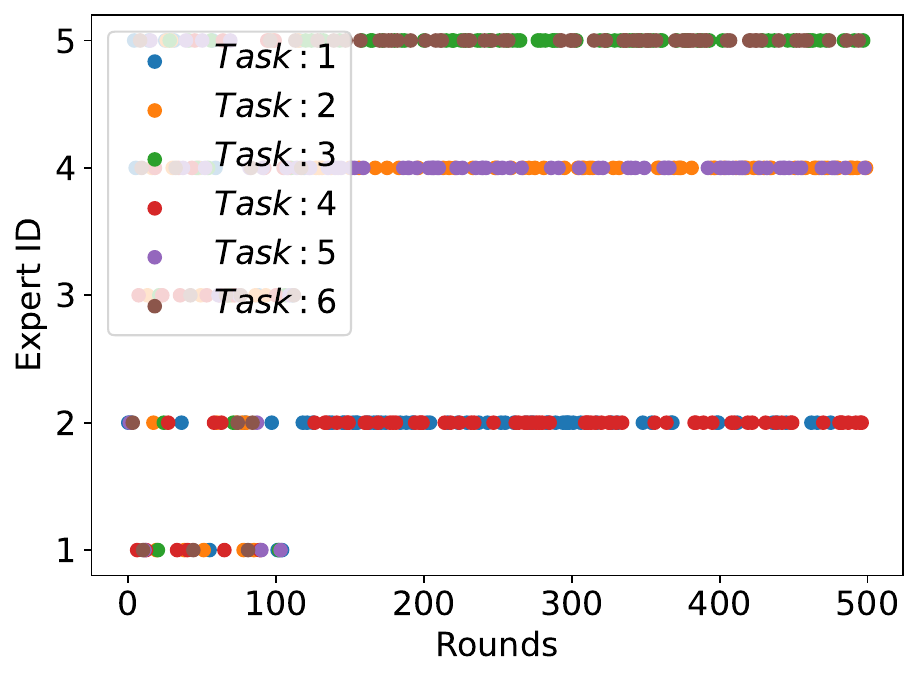}} 
    \subfigure[Threshold $=\sigma_0^{1.25}$.]{\label{subfig:record_125}\includegraphics[width=0.24\textwidth]{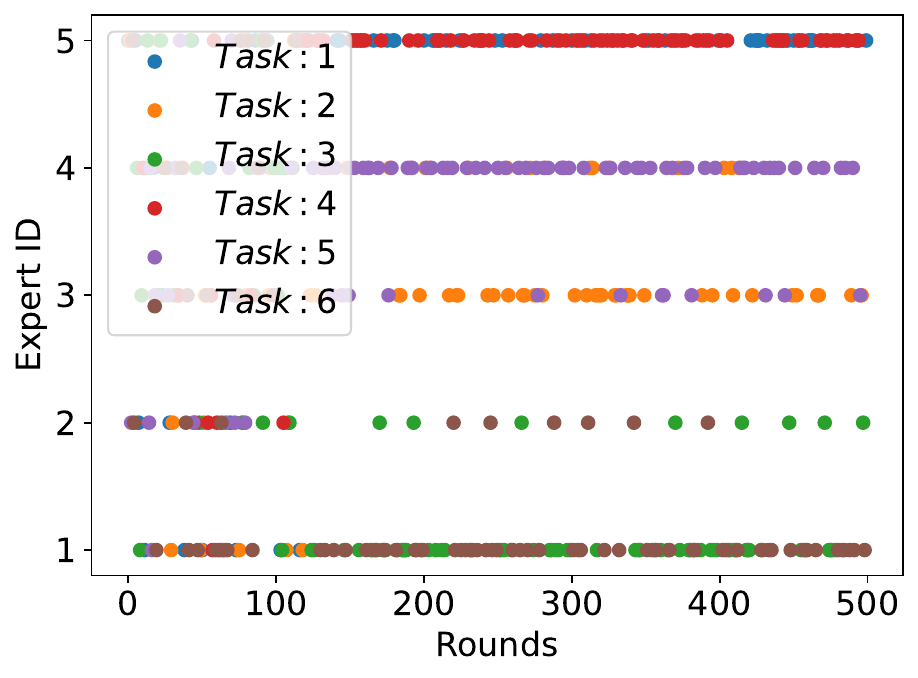}}
    \subfigure[Threshold $=\sigma_0^{1.5}$.]{\label{subfig:record_150}\includegraphics[width=0.24\textwidth]{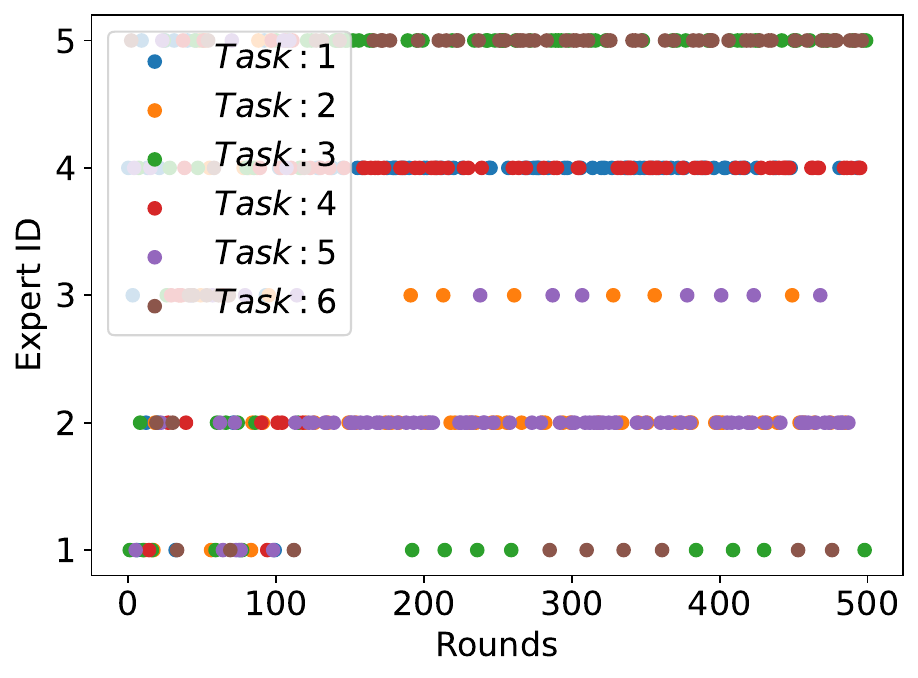}}
    \caption{The records of task arrivals and their selected experts under different termination thresholds in Line~\ref{line:termination_condition} of \Cref{algo:update_MoE}: $\Gamma\in\{\sigma_0^{0.75},\sigma_0,\sigma_0^{1.25},\sigma_0^{1.5}\}$. Here we set $M=5, N=6$ and $K=3$.}
    \label{fig:record_tasks}
\end{figure}

To further examine how load balance affects learning performance, we increase the task number to $N=30$. We repeat the experiment $100$ times and plot the average forgetting and generalization errors in \Cref{fig:sigma_error}. \Cref{subfig:sigma_forgetting_a} illustrates that the forgetting is robust to a wide range of $\Gamma$, due to the convergence of expert models after $T_1$ rounds' exploration. However, \Cref{subfig:sigma_error_b} shows that balanced loads under $\Gamma\in\{ \sigma_0^{1.25},\sigma_0^{1.5}\}$ lead to smaller generalization errors compared to imbalanced loads under $\Gamma\in\{ \sigma_0^{0.75},\sigma_0^{1}\}$. This is because diverse expert models help mitigate model errors compared to a single overfitted model.

\begin{figure}[htbp]
    \centering
    \subfigure{\label{subfig:sigma_forgetting_a}\includegraphics[width=0.3\textwidth]{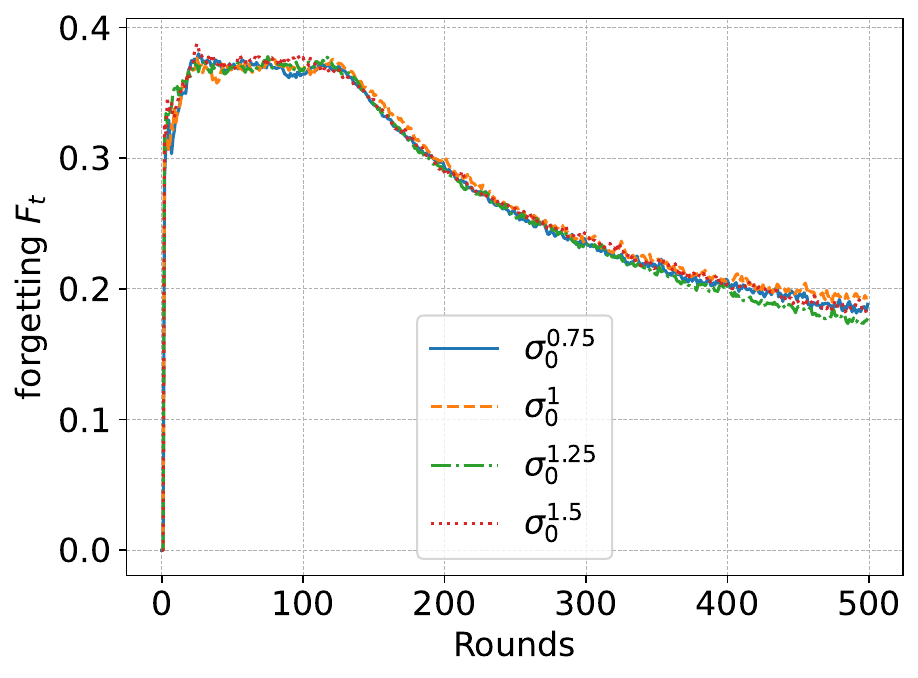}}
    \subfigure{\label{subfig:sigma_error_b}\includegraphics[width=0.3\textwidth]{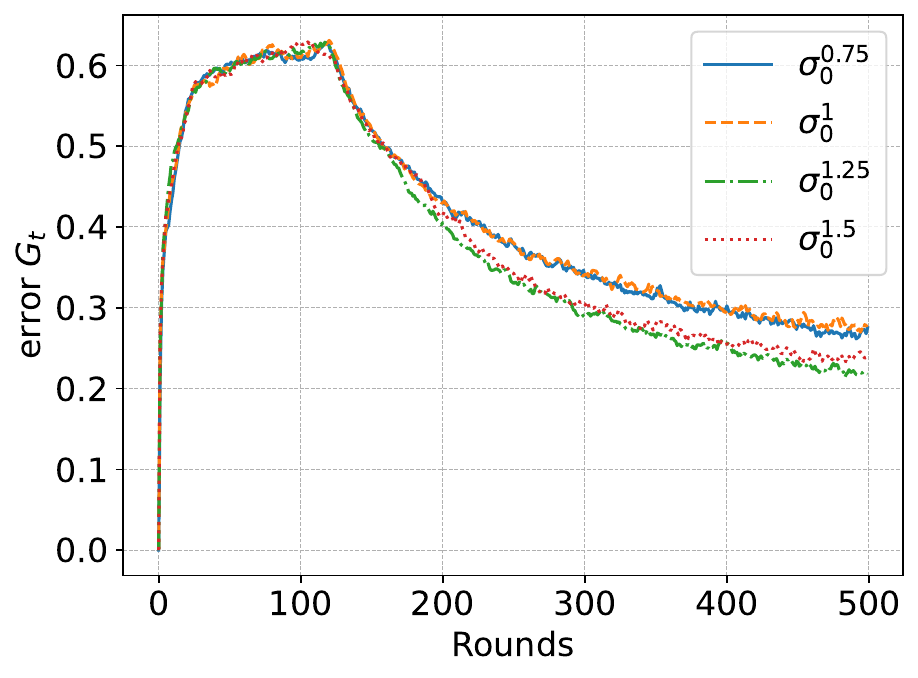}}
    \caption{We run the experiment for $100$ times to take the average learning performance under four termination thresholds: $\Gamma\in\{\sigma_0^{0.75},\sigma_0,\sigma_0^{1.25},\sigma_0^{1.5}\}$. Here we set $M=5, N=30$ and $K=3$.
    }
    \label{fig:sigma_error}
\end{figure}



\section{Smooth router}\label{Appendix_smooth_router}
We first prove that \cref{routing_strategy} ensures a smooth transition between different routing behaviors, which makes the router more stable. Suppose that there are two different datasets $(\mathbf{X},\mathbf{y})$ and $(\hat{\mathbf{X}},\hat{\mathbf{y}})$ simultaneously acting as input of the MoE. Let $\mathbf{h}$ and $\hat{\mathbf{h}}$ denote the corresponding output of the gating network, respectively. Denote the probability vectors by $\mathbf{p}$ and $\hat{\mathbf{p}}$, which tell the probabilities that each expert gets routed for the two datasets. For example, $p_m=\mathbb{P}(\arg\max_{m'\in[M]}\{h_{m'}+r^{(m')}\}=m)$ and $\hat{p}_m=\mathbb{P}(\arg\max_{m'\in[M]}\{\hat{h}_{m'}+r^{(m')}\}=m)$ according to \cref{routing_strategy}. Then we propose the following lemma to prove the smooth router. 
\begin{lemma}\label{lemma:smooth_router}
The two probability vectors satisfy $\|\mathbf{p}-\hat{\mathbf{p}}\|_{\infty}\leq \lambda M^2\|\mathbf{h}-\hat{\mathbf{h}}\|_{\infty}$.
\end{lemma}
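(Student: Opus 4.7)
The plan is to use a coupling argument together with a union bound over pairs of experts. First I would couple the noise by using the \emph{same} realization of $(r^{(1)},\dots,r^{(M)})$ when evaluating both $\mathbf{p}$ and $\hat{\mathbf{p}}$; this is legitimate because the perturbations are drawn independently of the inputs. Under this coupling, let $A_m=\{h_m+r^{(m)}\ge h_{m'}+r^{(m')},\ \forall m'\in[M]\}$ and let $\hat{A}_m$ denote the analogous event with $\hat{\mathbf{h}}$. Since $p_m=\mathbb{P}(A_m)$ and $\hat{p}_m=\mathbb{P}(\hat{A}_m)$, the elementary inequality
\begin{equation*}
|p_m-\hat{p}_m|\le \mathbb{P}(A_m\triangle \hat{A}_m)
\end{equation*}
reduces the task to estimating the probability that the router's decision for expert $m$ flips between the two inputs.

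Second, I would decompose the symmetric difference pairwise: if the argmax changes, then for some competitor $m'\neq m$ the sign of $h_m-h_{m'}+r^{(m)}-r^{(m')}$ must differ from the sign of $\hat{h}_m-\hat{h}_{m'}+r^{(m)}-r^{(m')}$. A union bound over the $M-1$ competitors reduces the problem to a single pairwise sign-flip probability. Writing $W_{m,m'}=r^{(m)}-r^{(m')}$, such a flip forces $W_{m,m'}$ to lie in the closed interval with endpoints $-(h_m-h_{m'})$ and $-(\hat{h}_m-\hat{h}_{m'})$, which has length at most $|h_m-\hat{h}_m|+|h_{m'}-\hat{h}_{m'}|\le 2\|\mathbf{h}-\hat{\mathbf{h}}\|_\infty$.

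Third, because $W_{m,m'}$ is the difference of two independent $\mathrm{Unif}[0,\lambda]$ random variables, its density is triangular on $[-\lambda,\lambda]$ and uniformly bounded, so $\mathbb{P}(W_{m,m'}\in I)$ can be controlled purely in terms of $|I|$ and $\lambda$. Plugging in the interval-length estimate, summing over the $M-1$ competitors, and finally taking the maximum over $m\in[M]$ yields a bound of the form $C\,M^2\,\phi(\lambda)\,\|\mathbf{h}-\hat{\mathbf{h}}\|_\infty$, which is precisely the shape stated in the lemma.

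The main obstacle I anticipate is the bookkeeping in the pairwise step: although the events indexed by different competitors $m'$ share the common variable $r^{(m)}$ and are therefore correlated, the union bound only requires their marginal probabilities, and each marginal is a one-dimensional interval probability for $W_{m,m'}$. A secondary subtlety is ruling out measure-zero ties in the argmax so that the symmetric-difference event is genuinely contained in the union of pairwise sign-flip events; this follows from the absolute continuity of $(r^{(m)})_{m\in[M]}$.
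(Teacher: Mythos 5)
Your proposal follows essentially the same route as the paper's proof: couple the noise realizations, reduce $|p_m-\hat p_m|$ to the probability that the perturbed argmax flips, union-bound over pairs of experts, and observe that a flip confines $r^{(m)}-r^{(m')}$ to an interval of length at most $2\|\mathbf{h}-\hat{\mathbf{h}}\|_\infty$, whose probability is controlled by the uniform density. One small remark: executing your density step carefully yields a factor $1/\lambda$ (the density of $\mathrm{Unif}[0,\lambda]$) rather than $\lambda$, so your hedged $\phi(\lambda)$ is the honest form of the constant; the paper's own proof displays $\lambda M^2$ and then $M^2$ with the same looseness, so this is not a gap in your argument relative to theirs.
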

\begin{proof}
Let $m_1=\arg\max_m\{h_m+r^{(m)}\}$ and $m_2=\arg\max_m\{\hat{h}_m+r^{(m)}\}$.
We first consider the event that $m_1\neq m_2$. In this case, we have
\begin{align*}
    h_{m_1}+r^{(m_1)}\geq h_{m_2}+r^{(m_2)}, \hat{h}_{m_2}+r^{(m_2)}\geq \hat{h}_{m_1}+r^{(m_1)},
\end{align*}
which implies that
\begin{align}
    \hat{h}_{m_2}-\hat{h}_{m_1}>r^{(m_1)}-r^{(m_2)}\geq h_{m_2}-h_{m_1}.\label{hm2-hm1}
\end{align}
Define $C(m_1,m_2)=\frac{\hat{h}_{m_2}-\hat{h}_{m_1}+ h_{m_2}-h_{m_1}}{2}$. Based on \cref{hm2-hm1}, we obtain
\begin{align}
    |r^{(m_1)}-r^{(m_2)}-C(m_1,m_2)|\leq \frac{\hat{h}_{m_2}-\hat{h}_{m_1}- h_{m_2}+h_{m_1}}{2}\leq \|\hat{\mathbf{h}}-\mathbf{h}\|_{\infty}.\label{rm1-rm2-C}
\end{align}
Therefore, we calculate that $m_1\neq m_2$ below
\begin{align*}
    &\mathbb{P}(\arg\max_m\{h_m+r^{(m)}\}\neq \arg\max_m\{\hat{h}_m+r^{(m)}\})\\
    \leq &\mathbb{P}(\exists\ m_1\neq m_2\in[M], \text{s.t. }|r^{(m_1)}-r^{(m_2)}-C(m_1,m_2)|\leq \|\hat{\mathbf{h}}-\mathbf{h}\|_{\infty})
\end{align*}
\begin{align*}
    \leq & \sum_{m_1<m_2} \mathbb{P}\big(|r^{(m_1)}-r^{(m_2)}-C(m_1,m_2)|\leq \|\hat{\mathbf{h}}-\mathbf{h}\|_{\infty}\big)\\ 
    =&\sum_{m_1<m_2} \mathbb{E}\big[\mathbb{P}(r^{(m_2)}+C(m_1,m_2)-\|\hat{\mathbf{h}}-\mathbf{h}\|_{\infty}\leq r^{(m_1)}\leq r^{(m_2)}+C(m_1,m_2)+\|\hat{\mathbf{h}}-\mathbf{h}\|_{\infty})|r^{(m_2)}\big]\\ 
    \leq &\lambda M^2 \|\hat{\mathbf{h}}-\mathbf{h}\|_{\infty},
\end{align*}
where the first inequality is derived by \cref{rm1-rm2-C}, the second inequality is because of union bound, and the last inequality is due to the fact that $r^{(m)}$ is drawn from Unif $[0,\lambda]$.

Then for any $j\in[M]$, we have
\begin{align*}
    |\hat{p}_i-p_i|\leq &\left|\mathbb{E}\left[\mathds{1}(\arg\max_m\{\hat{h}_m+r^{(m)}=i\})-\mathds{1}(\arg\max_m\{\hat{h}_m+r^{(m)}=i\})\right]\right|\\
    \leq &\mathbb{E}\left[\left|\mathds{1}(\arg\max_m\{\hat{h}_m+r^{(m)}=i\})-\mathds{1}(\arg\max_m\{\hat{h}_m+r^{(m)}=i\})\right|\right]\\ 
    \leq &\mathbb{P}(\arg\max_m\{h_m+r^{(m)}\}\neq \arg\max_m\{\hat{h}_m+r^{(m)}\})\\ 
    \leq &M^2 \|\hat{\mathbf{h}}-\mathbf{h}\|_{\infty}.
\end{align*}
This completes the proof of \Cref{lemma:smooth_router}. 
\end{proof}

Intuitively, \Cref{lemma:smooth_router} tells that if the outputs $\mathbf{h}$ and $\hat{\mathbf{h}}$ of the gating network are similar for two tasks, their data sets $(\mathbf{X},\mathbf{y})$ and $(\hat{\mathbf{X}},\hat{\mathbf{y}})$ will be routed to the same expert with a high probability. It means that the router transitions are smooth and continuous.

\section{Full version and proof of \Cref{lemma:h-hat_h}}\label{proof_lemma:h-hat_h}

\begin{customlemma}{1}[Full version]
For any two feature matrices $\mathbf{X}$ and $\Tilde{\mathbf{X}}$ with feature signals $\bm{v}_n$ and $\bm{v}'_n$, if $\bm{w}_n=\bm{w}_{n'}$ under $M>N$ or $\bm{w}_n,\bm{w}_{n'}\in\mathcal{W}_{k}$ under $M<N$, with probability at least $1-o(1)$, their corresponding gate outputs of the same expert $m$ satisfy
\begin{align}
\textstyle  \big|h_m(\mathbf{X},\bm{\theta}_t^{(m)})-h_m(\tilde{\mathbf{X}},\bm{\theta}_t^{(m)})\big|=\mathcal{O}(\sigma_0^{1.5}).
\end{align}
\end{customlemma}

\begin{proof}
We first focus on the $M>N$ case to prove \Cref{lemma:h-hat_h}. Then we consider the $M<N$ case to prove \Cref{lemma:h-hat_h}. For dataset $(\mathbf{X}_t,\mathbf{y}_t)$ generated in Definition~\ref{def:feature_structure} per round $t$, we can assume that the first sample of $\mathbf{X}_t$ is the signal vector. Therefore, we rewrite $\mathbf{X}_t=[\beta_t \bm{v}_{n}\ \mathbf{X}_{t,2}\ \cdots\ \mathbf{X}_{t,s}]$. Let $\tilde{\mathbf{X}}_t=[\beta_t \bm{v}_{n_t}\ 0\ \cdots\ 0]$ represents the matrix that only keeps the feature signal. 

Based on the definition of the gating network in \Cref{Section3},
we have $h_m(\mathbf{X}_t,\bm{\theta}_t^{(m)})=\sum_{i=1}^{s}(\bm{\theta}_t^{(m)})^\top \mathbf{X}_{t,i}$. 
Then we calculate
\begin{align*}
    \left|h_m(\mathbf{X}_t,\bm{\theta}_t^{(m)})-h_m(\tilde{\mathbf{X}}_t,\bm{\theta}_t^{(m)})\right|&=\left|(\bm{\theta}_t^{(m)})^\top\sum_{i=2}^{s}\mathbf{X}_{t,i}\right|\\
    &=\left|\sum_{i=2}^{s}\sum_{j=1}^d(\theta_{t,j}^{(m)})^\top X_{t,(i,j)}\right|\\
    &\leq \Big|\max_{t,j} \{\theta_{t,j}^{(m)} \}\Big|\cdot \Big|\sum_{i=2}^{s}\sum_{j=1}^d X_{t,(i,j)}\Big|,
\end{align*}
where $\theta_{t,j}^{(m)}$ is the $j$-th element of vector $\bm{\theta}_t^{(m)}$ and $X_{t,(i,j)}$ is the $(i,j)$-th element of matrix $\mathbf{X}_{t}$. 

Then we apply Hoeffding's inequality to obtain
\begin{align*}
    \mathbb{P}\Big(|\sum_{i=2}^{s}\sum_{j=1}^d X_{t,(i,j)}|<s\cdot d\cdot \sigma_0\Big)\geq 1-2\exp{(-\frac{\sigma_0^2 s^2 d^2}{\|\mathbf{X}_{t,i}\|_{\infty}})}.
\end{align*}
As $X_{t,(i,j)}\sim\mathcal{N}(0,\sigma_t^2)$, we have $\|\mathbf{X}_{t,i}\|_{\infty}=\mathcal{O}(\sigma_t)$, indicating $\exp{(-\frac{\sigma_0^2 s^2 d^2}{\|\mathbf{X}_{t,i}\|_{\infty}})}=o(1)$. Therefore, with probability at least $1-o(1)$, we have $\big|\sum_{i=2}^{s}\sum_{j=1}^d X_{t,(i,j)}\big|=\mathcal{O}(\sigma_0)$. 
Consequently, we obtain
$\big|h_m(\mathbf{X}_t,\bm{\theta}_t^{(m)})-h_m(\tilde{\mathbf{X}}_t,\bm{\theta}_t^{(m)})\big|=\mathcal{O}(\sigma_0^{1.5})$ due to the fact that $\big|\sum_{i=2}^{s}\sum_{j=1}^d X_{t,(i,j)}\big|=\mathcal{O}(\sigma_0)$ and $\theta_{t,j}^{(m)}=\mathcal{O}(\sigma_0^{0.5})$ proven in \Cref{lemma:theta_bound} later. 

If $M<N$, we calculate
\begin{align*}
    \Big|h_m(\mathbf{X}_t,\bm{\theta}_t^{(m)})-h_m(\tilde{\mathbf{X}}_t,\bm{\theta}_t^{(m)})\Big|&=\Big|(\bm{\theta}_t^{(m)})^\top\sum_{i=1}^{s}\mathbf{X}_{t,i}\Big|\\
    &\leq \Big|(\bm{\theta}_t^{(m)})^\top\sum_{i=2}^{s}\mathbf{X}_{t,i}\Big|+\Big|(\bm{\theta}_t^{(m)})^\top(\bm{v}_n-\bm{v}_{n'})\Big|\\
    &\leq \Big|\max_{t,j} \{\theta_{t,j}^{(m)} \}\Big|\cdot \Big|\sum_{i=2}^{s}\sum_{j=1}^d X_{t,(i,j)}\Big|+\mathcal{O}(\sigma_0^2)\\
    &=\mathcal{O}(\sigma_0^{1.5}),
\end{align*}
where the second inequality is based on the union bound, the third inequality is because of $\theta_{t,j}^{(m)}=\mathcal{O}(\sigma_0^{0.5})$ and our assumption $\|\bm{v}_n-\bm{v}_{n'}\|_{\infty}=\mathcal{O}(\sigma_0^{1.5})$, and the last inequality is based on our proof of the $M>N$ case above. This completes the proof of the full version of \Cref{lemma:h-hat_h}.
\end{proof}

Based on \Cref{lemma:h-hat_h}, we revisit \Cref{lemma:smooth_router} to obtain the following conclusion for the smooth router.
\begin{corollary}
If datasets $(\mathbf{X},\mathbf{y})$ and $(\hat{\mathbf{X}},\hat{\mathbf{y}})$ are generated by the same ground truth, then the two probability vectors in \Cref{lemma:smooth_router} satisfy $\|\mathbf{p}-\hat{\mathbf{p}}\|_{\infty}=\mathcal{O}\left(\lambda M^2 \sigma_0^{1.5}\right)$.
\end{corollary}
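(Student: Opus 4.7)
The plan is to derive the corollary as a direct combination of \Cref{lemma:smooth_router} and \Cref{lemma:h-hat_h}, which were both established earlier. The chain of reasoning is short and the main work is just to identify and apply the two bounds in the right order; there is no substantial new obstacle.

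First, I would unpack the hypothesis. The datasets $(\mathbf{X},\mathbf{y})$ and $(\hat{\mathbf{X}},\hat{\mathbf{y}})$ are generated by the same ground truth $\bm{w}_n \in \mathcal{W}$. By the dataset generation rule in \Cref{def:feature_structure}, this forces both feature matrices to share the same feature signal $\bm{v}_n$, with the other $s-1$ columns being independent draws from $\mathcal{N}(\bm{0},\sigma_t^2 \bm{I}_d)$. This is exactly the setting in which \Cref{lemma:h-hat_h} applies to every expert $m\in[M]$.

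Second, I would invoke \Cref{lemma:h-hat_h} coordinate-wise on the gating output vector. For each $m\in[M]$, with probability at least $1-o(1)$,
\begin{equation*}
\bigl|h_m(\mathbf{X},\bm{\theta}_t^{(m)})-h_m(\hat{\mathbf{X}},\bm{\theta}_t^{(m)})\bigr|=\mathcal{O}(\sigma_0^{1.5}).
\end{equation*}
Taking the maximum over $m$ (and noting that $M$ is treated as a constant for the high-probability union bound, or equivalently absorbing an $M$ factor into the $\mathcal{O}$ since it appears multiplicatively in the final expression anyway) gives $\|\mathbf{h}-\hat{\mathbf{h}}\|_{\infty}=\mathcal{O}(\sigma_0^{1.5})$.

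Finally, I would plug this $\ell^\infty$ bound on the gating outputs directly into the conclusion of \Cref{lemma:smooth_router}, which gives $\|\mathbf{p}-\hat{\mathbf{p}}\|_{\infty}\leq \lambda M^2 \|\mathbf{h}-\hat{\mathbf{h}}\|_{\infty}$. Substituting the $\mathcal{O}(\sigma_0^{1.5})$ bound yields the desired conclusion $\|\mathbf{p}-\hat{\mathbf{p}}\|_{\infty}=\mathcal{O}(\lambda M^2 \sigma_0^{1.5})$. The only mildly delicate point is making sure the $1-o(1)$ event from \Cref{lemma:h-hat_h} is preserved after the union bound over the $M$ experts, which is fine as long as $M$ does not scale too fast with the underlying dimensions; under the scaling assumptions of the paper this is immediate and does not require extra argument.
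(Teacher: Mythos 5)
Your proposal is correct and matches the paper's own (very brief) justification: the paper simply states that the corollary follows by revisiting \Cref{lemma:smooth_router} in light of \Cref{lemma:h-hat_h}, which is exactly the two-step combination you carry out. Your additional care about the union bound over the $M$ experts and the preservation of the $1-o(1)$ event is a reasonable elaboration that the paper leaves implicit.
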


\section{Analysis of loss function}
In this section, we analyze the loss function for both gating network parameters and expert models before analyzing MoE.

\begin{lemma}\label{lemma:w_update}
Under update rule \cref{update_wt}, if the current task $n_t$ routed to expert $m_t$ have the same ground truth with the last task $n_\tau$, where $\tau<t$, routed to expert $m_t$, i.e., $\bm{w}_{n_t}=\bm{w}_{n_{\tau}}$, then the model of expert $m_t$ satisfies $\bm{w}_t^{(m_t)}=\bm{w}_{t-1}^{(m_t)}=\cdots =\bm{w}_{\tau}^{(m_t)}$.
\end{lemma}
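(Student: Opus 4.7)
The plan is to decompose the chain of equalities into two independent claims and verify each with a short manipulation of the GD update \cref{update_wt}.

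First, the interior equalities $\bm{w}_{t-1}^{(m_t)}=\bm{w}_{t-2}^{(m_t)}=\cdots=\bm{w}_{\tau}^{(m_t)}$ follow with no computation. By the definition of $\tau$ as the most recent round prior to $t$ in which the router sent a task to expert $m_t$, expert $m_t$ is never selected in rounds $\tau+1,\dots,t-1$, and by the protocol in \Cref{Section3-3} an unselected expert's model is inherited verbatim from the previous round. Thus $\bm{w}_\tau^{(m_t)}$ is simply propagated forward until round $t-1$.

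The substantive step is the final equality $\bm{w}_t^{(m_t)}=\bm{w}_{t-1}^{(m_t)}$, which I would establish by showing that the GD correction at round $t$ vanishes. Plugging $\mathbf{y}_t=\mathbf{X}_t^{\top}\bm{w}_{n_t}$ and $\bm{w}_{t-1}^{(m_t)}=\bm{w}_{\tau}^{(m_t)}$ (from the previous step) into \cref{update_wt} gives
\begin{align*}
\bm{w}_t^{(m_t)}-\bm{w}_{t-1}^{(m_t)}\;=\;\mathbf{X}_t(\mathbf{X}_t^{\top}\mathbf{X}_t)^{-1}\mathbf{X}_t^{\top}\bigl(\bm{w}_{n_t}-\bm{w}_{\tau}^{(m_t)}\bigr),
\end{align*}
so it reduces to showing $\mathbf{X}_t^{\top}(\bm{w}_{n_t}-\bm{w}_{\tau}^{(m_t)})=0$. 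Left-multiplying the analogous update at round $\tau$ by $\mathbf{X}_{\tau}^{\top}$ already yields $\mathbf{X}_{\tau}^{\top}\bm{w}_{\tau}^{(m_t)}=\mathbf{y}_\tau=\mathbf{X}_{\tau}^{\top}\bm{w}_{n_\tau}=\mathbf{X}_{\tau}^{\top}\bm{w}_{n_t}$ under the hypothesis $\bm{w}_{n_t}=\bm{w}_{n_\tau}$, which immediately handles the signal-aligned coordinate via the shared column $\bm{v}_{n_t}=\bm{v}_{n_\tau}$ present in both $\mathbf{X}_t$ and $\mathbf{X}_{\tau}$.

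The main obstacle is that $\mathbf{X}_t$ and $\mathbf{X}_{\tau}$ share only the signal column; by \Cref{def:feature_structure} their remaining $s-1$ columns are independent Gaussian noise draws, so the column spans differ and transporting the identity $\mathbf{X}_{\tau}^{\top}(\bm{w}_{\tau}^{(m_t)}-\bm{w}_{n_t})=0$ to $\mathbf{X}_t^{\top}$ is not automatic. I would therefore promote the statement to an induction over the full history of rounds $\tau_1<\tau_2<\cdots<\tau_k=\tau$ at which expert $m_t$ has processed the ground truth $\bm{w}_{n_t}$, maintaining the invariant that $\bm{w}_{\tau_j}^{(m_t)}$ agrees with $\bm{w}_{n_t}$ on the column span of every previously observed batch; the added GD correction at each step lies in that batch's column span and is orthogonal (with probability $1-o(1)$) to the residual $\bm{w}_{n_t}-\bm{w}_{\tau_{j-1}}^{(m_t)}$ along the fresh noise directions, by the same Gaussian-column concentration argument used in \Cref{proof_lemma:h-hat_h}. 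Applying this invariant to the new batch $\mathbf{X}_t$ closes the residual and hence the update, yielding $\bm{w}_t^{(m_t)}=\bm{w}_{t-1}^{(m_t)}$; everything outside this noise-concentration step is bookkeeping around \cref{update_wt}.
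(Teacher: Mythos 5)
Your decomposition is the right one, and the first half of your argument (the interior equalities, from the fact that an unselected expert inherits its model verbatim) is all the paper could mean by its one-line justification --- the paper gives no proof at all, stating only that the lemma ``is easy to prove by the updating rule'' and skipping it. More importantly, you have correctly isolated the step that is \emph{not} easy: the round-$\tau$ update only guarantees $\mathbf{X}_\tau^{\top}(\bm{w}_\tau^{(m_t)}-\bm{w}_{n_\tau})=\bm{0}$, i.e.\ orthogonality of the residual to $\mathrm{col}(\mathbf{X}_\tau)$, while the round-$t$ update projects that residual onto $\mathrm{col}(\mathbf{X}_t)$, and by \cref{def:feature_structure} these subspaces share only the direction $\bm{v}_{n_t}$.

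The gap is in your final repair. ``Orthogonal with probability $1-o(1)$ along the fresh noise directions'' is not something Gaussian concentration can deliver: for a fixed nonzero residual $\bm{u}=\bm{w}_{n_t}-\bm{w}_\tau^{(m_t)}$ and a fresh column $\bm{g}\sim\mathcal{N}(\bm{0},\sigma_t^2\bm{I}_d)$, the inner product $\bm{g}^{\top}\bm{u}$ is almost surely nonzero, and the quantity entering \cref{update_wt} is the projection $\mathbf{X}_t(\mathbf{X}_t^{\top}\mathbf{X}_t)^{-1}\mathbf{X}_t^{\top}\bm{u}$, which is invariant to rescaling the columns and therefore does not shrink with $\sigma_t$; its expected squared norm is $\Theta\big(\tfrac{s}{d}\|\bm{u}\|^2\big)$. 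That is exactly the contraction factor $1-r$ the paper itself exploits in \cref{lemma:error_cases} to show that each re-arrival of a task strictly moves the expert model toward the ground truth --- which directly contradicts the exact chain of equalities you are trying to establish. Concentration at best yields $\|\bm{w}_t^{(m_t)}-\bm{w}_{t-1}^{(m_t)}\|=\mathcal{O}\big(\sqrt{s/d}\,\|\bm{u}\|\big)$, not zero. The exact statement holds only in degenerate regimes where the repeated task's data stays in an already-fitted subspace (e.g.\ $s_t=1$, or $\sigma_t=0$, where $(\mathbf{X}_t^{\top}\mathbf{X}_t)^{-1}$ itself degenerates) or where the residual is already zero. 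So the obstacle you found is real, your proposed invariant cannot close it, and since the paper offers no argument either, the defect lies in the lemma as stated (and in the downstream claims that expert models freeze) rather than only in your write-up; a correct version would have to be approximate or in expectation.
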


It is easy to prove \Cref{lemma:w_update} by the updating rule \cref{update_wt} such that we skip the proof here. 

Next, we examine the training of the gating network parameter.

\begin{lemma}\label{lemma:sum_theta=0}
For any training round $t\geq 1$, we have that $\sum_{m=1}^M \nabla_{\bm{\theta}_t^{(m)}} \mathcal{L}_t^{task}=\mathbf{0}$.
\end{lemma}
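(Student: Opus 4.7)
The plan is to decompose $\mathcal{L}_t^{task}$ according to \cref{task_loss} into its three constituent pieces and handle each one separately. The training loss $\mathcal{L}_t^{tr}(\bm{w}_t^{(m_t)},\mathcal{D}_t)$ has no explicit dependence on $\mathbf{\Theta}_t$, so $\nabla_{\bm{\theta}_t^{(m)}}\mathcal{L}_t^{tr}=\bm{0}$ for every $m$. The entire content of the lemma therefore lies in the remaining two summands, both of which depend on $\mathbf{\Theta}_t$ only through the softmax outputs $\pi_m(\mathbf{X}_t,\mathbf{\Theta}_t)$ defined in \cref{softmax}. The core algebraic tool is the shift-invariance of the softmax: translating all logits $h_1,\ldots,h_M$ by the same constant leaves $\pi_1,\ldots,\pi_M$ unchanged.

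Concretely, since $h_m(\mathbf{X}_t,\bm{\theta}_t^{(m)})=(\bm{\theta}_t^{(m)})^\top \sum_{i\in[s_t]}\mathbf{X}_{t,i}$ depends only on $\bm{\theta}_t^{(m)}$, the standard softmax Jacobian gives
\begin{equation*}
\nabla_{\bm{\theta}_t^{(m')}}\pi_m(\mathbf{X}_t,\mathbf{\Theta}_t)=\pi_m\bigl(\delta_{mm'}-\pi_{m'}\bigr)\sum_{i\in[s_t]}\mathbf{X}_{t,i},
\end{equation*}
and because $\sum_{m'}\pi_{m'}=1$, summing this identity over $m'$ yields $\bm{0}$ for every fixed $m$. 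This is the single identity I will invoke repeatedly.

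Next I would apply this observation to $\mathcal{L}_t^{loc}=\sum_m \pi_m(\mathbf{X}_t,\mathbf{\Theta}_t)\|\bm{w}_t^{(m)}-\bm{w}_{t-1}^{(m)}\|_2$. The coefficients $\|\bm{w}_t^{(m)}-\bm{w}_{t-1}^{(m)}\|_2$ are constants with respect to $\mathbf{\Theta}_t$ (the expert update \cref{update_wt} is performed before the gating network step), so $\sum_{m'}\nabla_{\bm{\theta}_t^{(m')}}\mathcal{L}_t^{loc}$ is a linear combination of the vanishing sums above. For $\mathcal{L}_t^{aux}=\alpha M\sum_m f_t^{(m)} P_t^{(m)}$, I would isolate the only portion that depends on $\mathbf{\Theta}_t$: the single term $\tfrac{1}{t}\pi_{m_t}(\mathbf{X}_t,\mathbf{\Theta}_t)$ appearing inside $P_t^{(m_t)}$, since all historical summands $\pi_m(\mathbf{X}_\tau,\mathbf{\Theta}_\tau)$ for $\tau<t$, as well as the empirical routing fractions $f_t^{(m)}$, are fixed once the current expert $m_t$ has been selected via \cref{routing_strategy}. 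The gradient then reduces to a scalar multiple of $\sum_{m'}\nabla_{\bm{\theta}_t^{(m')}}\pi_{m_t}$, which is again $\bm{0}$ by the same identity. Adding the three contributions completes the argument.

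The main obstacle is essentially bookkeeping rather than analytic difficulty: one must be careful to recognize that $\bm{w}_t^{(m_t)}$, the indicators $\mathds{1}\{m_\tau=m\}$, and the past softmax values $\pi_m(\mathbf{X}_\tau,\mathbf{\Theta}_\tau)$ are all treated as constants when differentiating with respect to $\mathbf{\Theta}_t$; once this is settled, the result is a direct consequence of translation-invariance of the softmax, which forces any smooth function of $(\pi_1,\ldots,\pi_M)$ built on linear logits to have gradients that sum to zero across experts.
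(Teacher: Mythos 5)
Your proposal is correct and follows essentially the same route as the paper's proof: both reduce the claim to the softmax Jacobian identity $\nabla_{\bm{\theta}_t^{(m')}}\pi_m=\pi_m(\delta_{mm'}-\pi_{m'})\sum_{i\in[s_t]}\mathbf{X}_{t,i}$, observe that summing over $m'$ gives $\bm{0}$ since $\sum_{m'}\pi_{m'}=1$, and apply this to the locality and auxiliary losses after noting that the expert models, routing indicators, and past softmax values are constants with respect to $\mathbf{\Theta}_t$ (and that only the current-round term of $P_t^{(m_t)}$ carries $\mathbf{\Theta}_t$-dependence). The only cosmetic difference is that you differentiate the full sum in $\mathcal{L}_t^{loc}$ while the paper retains only the $m_t$ term because the other coefficients vanish.
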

\begin{proof}

As the training loss $\nabla_{\bm{\theta}^{(m)}}\mathcal{L}_t^{tr}=0$, we obtain
\begin{align}
    \nabla_{\bm{\theta}^{(m)}} \mathcal{L}_t^{task}=\nabla_{\bm{\theta}^{(m)}} \mathcal{L}_t^{loc}+\nabla_{\bm{\theta}^{(m)}} \mathcal{L}_t^{aux}.\label{partial_L_task}
\end{align}
Next, we will prove $\sum_{m=1}^M \nabla_{\bm{\theta}_t^{(m)}} \mathcal{L}_t^{loc}=\mathbf{0}$ and $\sum_{m=1}^M \nabla_{\bm{\theta}_t^{(m)}} \mathcal{L}_t^{loc}=\mathbf{0}$, respectively. Based on the two equations, we can prove \Cref{lemma:sum_theta=0}.

According to the definition of locality loss in \cref{loc_loss}, we calculate
\begin{align}
    \nabla_{\bm{\theta}^{(m)}_t} \mathcal{L}_t^{loc}=\frac{\partial \pi_{m_t}(\mathbf{X}_t,\mathbf{\Theta}_t)}{\partial\bm{\theta}^{(m)}_t}\|\bm{w}_t^{(m_t)}-\bm{w}_{t-1}^{(m_t)}\|_2.\label{partial_L_loc}
\end{align}
If $m=m_t$, we obtain
\begin{align}
    \frac{\partial \pi_{m_t}(\mathbf{X}_t,\mathbf{\Theta}_t)}{\partial\bm{\theta}^{(m)}_t}&=\pi_{m_t}(\mathbf{X}_t,\mathbf{\Theta}_t)\cdot \Big(\sum_{m'\neq m_t} \pi_{m'}(\mathbf{X}_t,\mathbf{\Theta}_t)\Big)\cdot \frac{\partial h_m(\mathbf{X}_t,\bm{\theta}_t^{(m)})}{\partial \bm{\theta}^{(m)}_t}\notag\\
    &=\pi_{m_t}(\mathbf{X}_t,\mathbf{\Theta}_t)\cdot \Big(\sum_{m'\neq m_t} \pi_{m'}(\mathbf{X}_t,\mathbf{\Theta}_t)\Big)\cdot \sum_{i\in[s_t]}\mathbf{X}_{t,i}.\label{partial_pi_mt}
\end{align}
If $m\neq m_t$, we obtain
\begin{align}
    \frac{\partial \pi_{m_t}(\mathbf{X}_t,\mathbf{\Theta}_t)}{\partial \bm{\theta}^{(m)}_t}&=-\pi_{m_t}(\mathbf{X}_t,\mathbf{\Theta}_t)\cdot \pi_{m}(\mathbf{X}_t,\mathbf{\Theta}_t)\cdot \sum_{i\in[s_t]}\mathbf{X}_{t,i}.\label{partial_pi_m'}
\end{align}
Based on \cref{partial_L_loc}, \cref{partial_pi_mt} and \cref{partial_pi_m'}, we obtain
\begin{align*}
    \sum_{m=1}^M \nabla_{\bm{\theta}_t^{(m)}} \mathcal{L}_t^{loc}=\|\bm{w}_t^{(m_t)}-\bm{w}_{t-1}^{(m_t)}\|_2\sum_{m=1}^M \frac{\partial \pi_{m_t}(\mathbf{X}_t,\mathbf{\Theta}_t)}{\partial\bm{\theta}^{(m)}_t}=\mathbf{0}.
\end{align*}

According to the definition of auxiliary loss in \cref{auxiliary_loss}, we calculate
\begin{align}
    \nabla_{\bm{\theta}^{(m)}_t} \mathcal{L}_t^{aux}&=\alpha M \sum_{m'=1}^M f_t^{(m')}\cdot \frac{\partial P_t^{(m')}}{\partial \bm{\theta}^{(m)}_t}\notag \\&=\frac{\alpha M}{t}f_t^{(m_t)}\cdot \frac{\partial \pi_{m_t}(\mathbf{X}_t,\mathbf{\Theta}_t)}{\partial \bm{\theta}^{(m)}_t},\label{partial_L_aux}
\end{align}
where the second equality is due to the fact that $\frac{\partial P_t^{(m_t)}}{\partial \bm{\theta}^{(m)}_t}=\frac{1}{t}\cdot \frac{\partial \pi_{m_t}(\mathbf{X}_t,\mathbf{\Theta}_t)}{\partial \bm{\theta}^{(m)}_t}$ and $\frac{\partial P_t^{(m')}}{\partial \bm{\theta}^{(m)}_t}=0$ for any $m'\neq m_t$ by \cref{auxiliary_loss}. Then based on \cref{partial_pi_mt} and \cref{partial_pi_m'}, we similarly obtain
\begin{align*}
    \sum_{m=1}^M \nabla_{\bm{\theta}_t^{(m)}} \mathcal{L}_t^{aux}=\frac{\alpha M}{t}f_t^{(m_t)}\sum_{m=1}^M \frac{\partial \pi_{m_t}(\mathbf{X}_t,\mathbf{\Theta}_t)}{\partial \bm{\theta}^{(m)}_t}=\mathbf{0}.
\end{align*}

In summary, we finally prove $\sum_{m=1}^M \nabla_{\bm{\theta}_t^{(m)}} \mathcal{L}_t^{task}=\mathbf{0}$ in \cref{partial_L_task}.
\end{proof}


In the following lemma, we analyze the gradient of loss function with respect to each expert.

\begin{lemma}\label{prop:partial_order}
For any training round $t\in\{1,\cdots,T\}$, the following property holds
\begin{align}
    \|\nabla_{\bm{\theta}_t^{(m)}} \mathcal{L}_t^{task}\|_{\infty}= \begin{cases}
        \Omega(\sigma_0), &\text{if }t\in\{1,\cdots, T_1\},\\
        \mathcal{O}(\sigma_0), &\text{if }t\in\{T_1+1,\cdots, T\}
    \end{cases}\label{order_partial_theta}
\end{align}
for any expert $m\in[M]$, where $T_1=\lceil \eta^{-1} M\rceil$ is the length of the exploration stage.
\end{lemma}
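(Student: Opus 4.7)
The starting point is the decomposition from the proof of \Cref{lemma:sum_theta=0}: since $\nabla_{\bm{\theta}^{(m)}_t}\mathcal{L}_t^{tr}=\mathbf{0}$, combining the formulas derived there gives
\begin{equation*}
\nabla_{\bm{\theta}_t^{(m)}}\mathcal{L}_t^{task}=\Big(\|\bm{w}_t^{(m_t)}-\bm{w}_{t-1}^{(m_t)}\|_2+\tfrac{\alpha M}{t}f_t^{(m_t)}\Big)\cdot\tfrac{\partial \pi_{m_t}(\mathbf{X}_t,\mathbf{\Theta}_t)}{\partial \bm{\theta}_t^{(m)}},
\end{equation*}
so $\|\nabla_{\bm{\theta}_t^{(m)}}\mathcal{L}_t^{task}\|_\infty$ factors into a nonnegative scalar prefactor times the $\ell_\infty$ norm of a softmax partial derivative. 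For the softmax factor I would use $\pi_{m_t}(1-\pi_{m_t})\sum_i\mathbf{X}_{t,i}$ (for $m=m_t$) together with the observation that $\|\sum_i\mathbf{X}_{t,i}\|_\infty=\Theta(1)$ with probability $1-o(1)$: the signal sample $\beta_t\bm{v}_{n_t}$ dominates, while the zero-mean Gaussian noise samples concentrate by Hoeffding to $o(1)$. This gives an $\mathcal{O}(1)$ upper bound on the softmax derivative's $\ell_\infty$ norm (using $\pi_{m_t}(1-\pi_{m_t})\leq 1/4$), and a matching positive lower bound throughout Phase~1 because $\bm{\Theta}_0=\mathbf{0}$ initializes $\pi$ uniformly and the small Phase~1 step size keeps $\pi_{m_t}$ away from both extremes.

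For Phase~2 ($t>T_1$) the conclusion is immediate. By \Cref{prop:Convergence} each expert model has stabilized so $\|\bm{w}_t^{(m_t)}-\bm{w}_{t-1}^{(m_t)}\|_2=0$, leaving only the auxiliary prefactor $\tfrac{\alpha M}{t}f_t^{(m_t)}\leq \alpha M/T_1\leq \alpha\eta=\mathcal{O}(\sigma_0)$ by the parameter choices $\alpha=\mathcal{O}(\sigma_0^{0.5})$, $\eta=\mathcal{O}(\sigma_0^{0.5})$, and $T_1=\lceil\eta^{-1}M\rceil$ in \Cref{algo:update_MoE}. Multiplying by the $\mathcal{O}(1)$ bound on the softmax derivative yields $\|\nabla_{\bm{\theta}_t^{(m)}}\mathcal{L}_t^{task}\|_\infty=\mathcal{O}(\sigma_0)$.

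For Phase~1 ($t\leq T_1$) I need the opposite direction, a lower bound $\Omega(\sigma_0)$, and the argument splits into two sub-phases. In very early rounds, the auxiliary prefactor $\tfrac{\alpha M}{t}f_t^{(m_t)}$ is still large (of order $\alpha M$ at $t=\mathcal{O}(1)$) and this alone produces a gradient of order at least $\sigma_0$ once multiplied by the softmax factor. In the later rounds of Phase~1, the auxiliary prefactor decays toward $\alpha\eta=\Theta(\sigma_0)$ and the locality term $\|\bm{w}_t^{(m_t)}-\bm{w}_{t-1}^{(m_t)}\|_2$ must carry the bound: because the expert sets $\mathcal{M}_n$ have not yet stabilized before $T_1$ (\Cref{prop:Convergence} only guarantees stabilization at $T_1$), the selected expert $m_t$ still routinely receives tasks whose ground truths differ from its stored $\bm{w}_{t-1}^{(m_t)}$, and the projection formula in \cref{update_wt} then produces a nonzero update whose norm is controlled by the projected ground-truth gap of order $\sigma_0$ on a typical realization of $\mathbf{X}_t$.

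The main obstacle will be establishing the Phase~1 lower bound rigorously round by round: the expert model $\bm{w}_t^{(m_t)}$ at round $t$ depends on the entire routing history up to $t$, which is itself governed by the gating dynamics I am analyzing. I expect to close the loop via an induction on $t$ that simultaneously tracks (i) the gate outputs $h_m(\mathbf{X}_t,\bm{\theta}_t^{(m)})$ through the per-step update $\eta\|\nabla\|_\infty$, (ii) the routing fractions $f_t^{(m)}$, and (iii) the history of ground truths observed by each expert, using the assumption $M=\Omega(N\ln N)$ in a coupon-collector-type argument to guarantee that a positive fraction of rounds $t\leq T_1$ still deliver a ``fresh'' task (with a ground truth not previously matched by $m_t$) to the selected expert, so that the locality term remains active on enough rounds.
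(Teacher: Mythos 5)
Your decomposition of $\nabla_{\bm{\theta}_t^{(m)}}\mathcal{L}_t^{task}$ into a nonnegative scalar prefactor $\big(\|\bm{w}_t^{(m_t)}-\bm{w}_{t-1}^{(m_t)}\|_2+\tfrac{\alpha M}{t}f_t^{(m_t)}\big)$ times the softmax partial derivative is exactly the right starting point, and your Phase-2 upper bound is essentially the paper's (though you do not need \Cref{prop:Convergence} to kill the locality term --- and invoking it here risks circularity, since its proof chain depends on this lemma; it suffices that $\|\bm{w}_t^{(m_t)}-\bm{w}_{t-1}^{(m_t)}\|_2=\mathcal{O}(\sigma_0)$ always, because the update \cref{update_wt} is a projection of a ground-truth gap of order $\sigma_0$).

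The genuine gap is in your Phase-1 lower bound. You correctly observe that the auxiliary prefactor decays toward $\tfrac{\alpha M}{T_1}=\alpha\eta=\Theta(\sigma_0)$, but you then conclude that in late Phase-1 rounds ``the locality term must carry the bound'' --- this is where you go astray. Since $T_1=\lceil\eta^{-1}M\rceil$ and $\alpha,\eta=\Theta(\sigma_0^{0.5})$, the auxiliary prefactor satisfies $\tfrac{\alpha M}{t}f_t^{(m_t)}\geq\tfrac{\alpha M}{T_1}f_t^{(m_t)}=\Omega(\sigma_0)$ for \emph{every} $t\leq T_1$ (with $f_t^{(m_t)}$ and the softmax derivative treated as $\Theta(1)$, as the paper does throughout); this single observation is the entirety of the paper's Phase-1 argument, and the locality term, being a nonnegative contribution in the same direction, can only help. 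The routing-history induction and coupon-collector argument you flag as ``the main obstacle'' are therefore unnecessary --- and, worse, they could not deliver the stated per-round bound even if completed: by \Cref{lemma:w_update}, whenever the arriving task has the same ground truth as the previous task routed to $m_t$, the locality gradient is exactly zero, so a lower bound resting on the locality term can only hold on a \emph{fraction} of rounds, not for every $t\in\{1,\dots,T_1\}$ as the lemma asserts. Replace the late-Phase-1 sub-argument with the uniform bound $\tfrac{\alpha M}{t}\geq\tfrac{\alpha M}{T_1}=\Theta(\sigma_0)$ and the proof closes.
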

\begin{proof}
We prove \cref{order_partial_theta} by analyzing $\nabla_{\bm{\theta}^{(m)}} \mathcal{L}_t^{loc}= \mathcal{O}(\sigma_0)$ and $\nabla_{\bm{\theta}^{(m)}} \mathcal{L}_t^{aux}=\mathcal{O}(\frac{\alpha M}{t})$ in \cref{partial_L_task}, respectively.


First, we calculate $ \|\nabla_{\bm{\theta}_t^{(m)}} \mathcal{L}_t^{loc}\|_{\infty}$. Based on \cref{partial_L_loc}, we have
\begin{align*}
   \nabla_{\bm{\theta}_t^{(m)}} \mathcal{L}_t^{loc}&= \frac{\partial \pi_{m_t}(\mathbf{X}_t,\mathbf{\Theta}_t)}{\partial\bm{\theta}^{(m)}_t}\|\bm{w}_t^{(m_t)}-\bm{w}_{t-1}^{(m_t)}\|_2\\&=\mathds{1}\{\bm{w}_{n_{\tau}}=\bm{w}_{n_t}\}\cdot 0 +\mathds{1}\{\bm{w}_{n_{\tau}}\neq \bm{w}_{n_t}\}\cdot \|\bm{w}^{(m_t)}_{t}-\bm{w}^{(m_t)}_{t-1}\|_2 \frac{\partial \pi_{m_t}(\mathbf{X}_t,\mathbf{\Theta}_t)}{\partial\bm{\theta}^{(m)}_t}\\
    &\leq \|\bm{w}^{(m_t)}_{t}-\bm{w}^{(m_t)}_{t-1}\|_2\frac{\partial \pi_{m_t}(\mathbf{X}_t,\mathbf{\Theta}_t)}{\partial\bm{\theta}^{(m)}_t},
\end{align*}
where $\tau$ is the index of the last task that routed to expert $m_t$, the second equality is derived by \Cref{lemma:w_update}. As $\frac{\partial \pi_{m_t}(\mathbf{X}_t,\mathbf{\Theta}_t)}{\partial\bm{\theta}^{(m)}_t}=\mathcal{O}(1)$ and $\bm{w}_n\sim \mathcal{N}(\mathbf{0},\bm{\sigma}_0^2)$, we finally obtain
\begin{align*}
    \|\nabla_{\bm{\theta}_t^{(m)}} \mathcal{L}_t^{loc}\|_{\infty}=\mathcal{O}(\sigma_0).
\end{align*}


Next, we further calculate $\|\nabla_{\bm{\theta}_t^{(m)}} \mathcal{L}_t^{aux}\|_{\infty}$, which contains the following two cases.

If $t\in\{1,\cdots,T_1\}$, by \cref{partial_L_aux}, we have 
\begin{align*}
    \|\nabla_{\bm{\theta}_t^{(m)}} \mathcal{L}_t^{aux}\|_{\infty}&\geq\|\frac{\alpha M}{T_1}f_t^{(m_t)}\cdot \frac{\partial \pi_{m_t}(\mathbf{X}_t,\mathbf{\Theta}_t)}{\partial \bm{\theta}^{(m)}_t}\|_{\infty}\\
    &\geq \|\sigma_0 f_t^{(m_t)}\cdot \frac{\partial \pi_{m_t}(\mathbf{X}_t,\mathbf{\Theta}_t)}{\partial \bm{\theta}^{(m)}_t}\|_{\infty}=\Omega(\sigma_0),
\end{align*}
where the second inequality is derived by setting $\eta=\Omega(\sigma_0^{0.5})$ to make $T_1=\lceil \sigma_0^{-0.5} M\rceil$.

If $t\in\{T_1+1,\cdots, T\}$, we calculate
\begin{align*}
    \|\nabla_{\bm{\theta}_t^{(m)}} \mathcal{L}_t^{aux}\|_{\infty}&\leq\|\frac{\alpha M}{T_1}f_t^{(m_t)}\cdot \frac{\partial \pi_{m_t}(\mathbf{X}_t,\mathbf{\Theta}_t)}{\partial \bm{\theta}^{(m)}_t}\|_{\infty}\\
    &=\mathcal{O}(\sigma_0).
\end{align*}

Based on the derived $ \|\nabla_{\bm{\theta}_t^{(m)}} \mathcal{L}_t^{loc}\|_{\infty}$ and $ \|\nabla_{\bm{\theta}_t^{(m)}} \mathcal{L}_t^{aux}\|_{\infty}$ above, we can finally calculate $\|\nabla_{\bm{\theta}_t^{(m)}} \mathcal{L}_t^{task}\|_{\infty}$ based on \cref{partial_L_task}.

For $t\in\{1,\cdots,T_1\}$, if $m\neq m_t$, we have
\begin{align*}
    \|\nabla_{\bm{\theta}_t^{(m)}} \mathcal{L}_t^{task}\|_{\infty}&=\|\nabla_{\bm{\theta}_t^{(m)}} \mathcal{L}_t^{loc}+\nabla_{\bm{\theta}_t^{(m)}} \mathcal{L}_t^{aux}\|_{\infty}\\ &\leq \|\mathcal{O}(\sigma_0)+\frac{\alpha M}{T_1}f_t^{(m_t)}\cdot \frac{\partial \pi_{m_t}(\mathbf{X}_t,\mathbf{\Theta}_t)}{\partial \bm{\theta}^{(m)}_t}\|_{\infty}\\
    &=\Omega(\sigma_0).
\end{align*}

Similarly, for any $t\in\{T_1+1,\cdots, T\}$, we can can obtain 
\begin{align*}
    \|\nabla_{\bm{\theta}_t^{(m)}} \mathcal{L}_t^{task}\|_{\infty}&=\|\nabla_{\bm{\theta}_t^{(m)}} \mathcal{L}_t^{loc}+\nabla_{\bm{\theta}_t^{(m)}} \mathcal{L}_t^{aux}\|_{\infty}\\ &\geq \|\mathcal{O}(\sigma_0)+\frac{\alpha M}{T_1}f_t^{(m_t)}\cdot \frac{\partial \pi_{m_t}(\mathbf{X}_t,\mathbf{\Theta}_t)}{\partial \bm{\theta}^{(m)}_t}\|_{\infty}\\
    &=\mathcal{O}(\sigma_0).
\end{align*}
This completes the proof of \cref{order_partial_theta}.
\end{proof}

Given $\bm{\theta}^{(m)}_0=\mathbf{0}$ for any expert $m\in[M]$, in the next lemma, we obtain the upper bound of $\bm{\theta}^{(m)}_{t}$ at any round $t\in\{1,\cdots,T\}$.
\begin{lemma}\label{lemma:theta_bound}
For any training round $t\in\{1,\cdots,T\}$, the gating network parameter of any expert $m$ satisfies $\|\bm{\theta}^{(m)}_{t}\|_{\infty}=\mathcal{O}(\sigma_0^{0.5})$.
\end{lemma}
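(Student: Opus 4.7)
The plan is to bound $\|\bm{\theta}_t^{(m)}\|_\infty$ by a direct telescoping of the GD update rule \cref{update_theta}, using \Cref{prop:termination} to cap the number of active update rounds and \Cref{prop:partial_order} to bound the per-round gradient magnitude.

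First, I would unroll the recursion starting from the zero initialization $\bm{\theta}_0^{(m)} = \mathbf{0}$. Since the update halts once the early-termination condition in \Cref{algo:update_MoE} triggers, which by \Cref{prop:termination} happens no later than round $T_2 = \mathcal{O}(\eta^{-1}\sigma_0^{-0.25}M)$, we may write
\begin{equation*}
\bm{\theta}_t^{(m)} \;=\; -\eta \sum_{\tau=0}^{\min\{t,T_2\}-1} \nabla_{\bm{\theta}_\tau^{(m)}} \mathcal{L}_\tau^{task}.
\end{equation*}
Applying the triangle inequality in $\ell_\infty$ and then the uniform bound $\|\nabla_{\bm{\theta}_\tau^{(m)}} \mathcal{L}_\tau^{task}\|_\infty = \mathcal{O}(\sigma_0)$ from \Cref{prop:partial_order} yields
\begin{equation*}
\|\bm{\theta}_t^{(m)}\|_\infty \;\leq\; \eta \cdot T_2 \cdot \mathcal{O}(\sigma_0) \;=\; \mathcal{O}(\sigma_0^{-0.25} M \sigma_0) \;=\; \mathcal{O}(M\sigma_0^{0.75}),
\end{equation*}
after substituting $\eta = \Theta(\sigma_0^{0.5})$. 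Treating $M$ as a constant in the paper's asymptotic regime and using $\sigma_0 \in (0,1)$ so that $\sigma_0^{0.75} \leq \sigma_0^{0.5}$, this delivers the claimed $\mathcal{O}(\sigma_0^{0.5})$.

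The main obstacle is tightness under the $M$ dependence, which the crude triangle inequality risks spoiling if $M$ is allowed to grow. A cleaner analysis would split the sum into the exploration phase $\tau \leq T_1 = \lceil \eta^{-1}M\rceil$ and the post-convergence phase $T_1 < \tau \leq T_2$. In the latter, \Cref{prop:Convergence} guarantees that expert models no longer change, so the locality-loss gradient in \cref{partial_L_loc} vanishes and only the auxiliary-loss gradient of order $\mathcal{O}(\alpha M/\tau)$ survives; summing over $\tau$ then contributes just a logarithmic factor of the form $\eta \alpha M \ln(T_2/T_1)$. The exploration-phase contribution is at most $T_1 \cdot \eta \cdot \mathcal{O}(\sigma_0) = \mathcal{O}(M \sigma_0)$, which together with the convergence-phase bound again gives $\mathcal{O}(\sigma_0^{0.5})$ after using the algorithmic choices $\alpha = \mathcal{O}(\sigma_0^{0.5})$ and $\eta = \Theta(\sigma_0^{0.5})$. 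One should also verify acyclicity of the dependencies: the proofs of \Cref{prop:termination} and \Cref{prop:partial_order} do not appeal back to the present lemma, so this telescoping argument is non-circular.
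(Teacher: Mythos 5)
Your telescoping strategy is the right one and matches the paper's proof in outline (unroll the GD recursion from $\bm{\theta}_0^{(m)}=\mathbf{0}$, bound the per-round gradient, split at $T_1$), but the way you cap the number of update rounds introduces a genuine circularity. You invoke \Cref{prop:termination} (and, in your refined version, \Cref{prop:Convergence}) and assert that their proofs do not appeal back to the present lemma. In this paper that is false: the proof of \Cref{prop:Convergence} explicitly uses $\|\bm{\theta}_t^{(m)}\|_{\infty}=\mathcal{O}(\sigma_0^{0.5})$ from \Cref{lemma:theta_bound} (as do \Cref{lemma:h-hat_h} and \Cref{lemma:pi_consistence}), and \Cref{prop:termination} is proved via \Cref{prop:gate_gap}, whose proof in turn rests on \Cref{prop:Convergence} and \Cref{lemma:pi_consistence}. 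Both results you lean on are therefore downstream of the bound you are trying to establish, so your argument is circular as written, and the circularity is essential rather than cosmetic: without \Cref{prop:termination} you have no justification for truncating the sum at $T_2$, and without \Cref{prop:Convergence} you cannot claim the locality-loss gradient vanishes after $T_1$.

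The paper avoids this by proving the lemma from \Cref{prop:partial_order} alone, with no appeal to termination or expert convergence: over the exploration phase it bounds the accumulated update by $\eta\cdot T_1\cdot \alpha M$, and for $t>T_1$ it uses only the fact that the auxiliary-loss gradient scales like $\alpha M/t\le \alpha M/T_1$, which follows from the definition of $\mathcal{L}_t^{aux}$ in \cref{auxiliary_loss} rather than from any convergence statement, so the bound holds for every $t\le T$ whether or not the update ever terminates. To repair your proof you should do the same: drop the reliance on $T_2$ and on \Cref{prop:Convergence}, and control the post-$T_1$ contribution directly from the $1/t$ decay of the auxiliary gradient (your logarithmic-sum observation is the right mechanism, but it must be justified without citing \Cref{prop:Convergence} to kill the locality term). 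The remaining arithmetic in your first display, namely $\eta\, T_2\cdot\mathcal{O}(\sigma_0)=\mathcal{O}(M\sigma_0^{0.75})$ followed by treating $M$ as a constant and using $\sigma_0^{0.75}\le\sigma_0^{0.5}$, is consistent with the paper's own level of rigor, so the dependency cycle is the one substantive defect.
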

\begin{proof}
Based on \Cref{prop:partial_order}, for any $t\in\{1,\cdots,T_1\}$ the accumulated update of $\bm{\theta}^{(m)}_{t}$ throughout the exploration stage satisfies 
\begin{align*}
    \|\bm{\theta}_{t}^{(m)}\|_{\infty}\leq \eta\cdot T_1\cdot \alpha M=\mathcal{O}(\sigma_0^{0.5}).
\end{align*}
For any $t\in\{T_1+1,\cdots,T\}$, the accumulated update of $\bm{\theta}^{(m)}_{t}$ throughout the router learning phase satisfies 
\begin{align*}
    \|\bm{\theta}_{t}^{(m)}\|_{\infty}&\leq \|\bm{\theta}_{T_1}^{(m)}\|_{\infty} +\eta\cdot (T-T_1)\cdot \frac{\alpha M}{T_1}\\&=\mathcal{O}(\sigma_0^{0.5})+\mathcal{O}(\sigma_0^{0.5}-\sigma_0)=\mathcal{O}(\sigma_0^{0.5}).
\end{align*}
In summary, $\|\bm{\theta}^{(m)}_{t}\|_{\infty}=\mathcal{O}(\sigma_0^{0.5})$ for any round $t\in\{1,\cdots, T\}$.
\end{proof}

\section{Full version and proof of \Cref{prop:Convergence}}\label{Proof_prop1}

\begin{customprop}{1}[Full version]
Under \Cref{algo:update_MoE}, with probability at least $1-o(1)$, for any $t>T_1$, where $T_1=\lceil\eta^{-1}M\rceil$, each expert $m\in[M]$ satisfies the following properties:

1) If $M>N$, expert $m$ stabilizes within an expert set $\mathcal{M}_n$, and its expert model 
remains unchanged beyond time $T_1$, satisfying $\bm{w}^{(m)}_{T_1+1}=\cdots =\bm{w}^{(m)}_{T}$. 

2) If $M<N$, expert $m$ stabilizes within an expert set $\mathcal{M}_k$, and its expert model satisfies $\|\bm{w}_t^{(m)}-\bm{w}_{T_1+1}^{(m)}\|_{\infty}=\mathcal{O}(\sigma_0^{1.5})$ for any $t\in\{T_1+2,\cdots, T\}$.
\end{customprop}

We first propose the following lemmas before formally proving \Cref{prop:Convergence}. Then we prove \Cref{prop:Convergence} in \Cref{proof_prop1_final}.

\begin{lemma}\label{lemma:exploration_update}
At any training round $t\in\{1,\cdots, T_1\}$, for any feature signal $\bm{v}_n$, the gating network parameter of expert $m\in[M]$ satisfies
\begin{align*}
    \langle \bm{\theta}_{t+1}^{(m)}-\bm{\theta}_{t}^{(m)},\bm{v}_n\rangle =\begin{cases}
        -\mathcal{O}(\sigma_0), &\text{if }m=m_t,\\
        \mathcal{O}(M^{-1}\sigma_0), &\text{if }m\neq m_t.
    \end{cases}
\end{align*}
\end{lemma}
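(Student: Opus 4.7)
The plan is to start from the gradient descent update $\bm{\theta}_{t+1}^{(m)}-\bm{\theta}_t^{(m)}=-\eta\nabla_{\bm{\theta}_t^{(m)}}\mathcal{L}_t^{task}$ and expand the gradient using the decomposition $\nabla_{\bm{\theta}_t^{(m)}}\mathcal{L}_t^{task}=\nabla_{\bm{\theta}_t^{(m)}}\mathcal{L}_t^{loc}+\nabla_{\bm{\theta}_t^{(m)}}\mathcal{L}_t^{aux}$ already established in the analysis of Lemma~5 (the training loss contributes nothing since $\nabla_{\bm{\theta}_t^{(m)}}\mathcal{L}_t^{tr}=0$). Recall from the proof of Lemma~4 that both $\nabla_{\bm{\theta}_t^{(m)}}\mathcal{L}_t^{loc}$ and $\nabla_{\bm{\theta}_t^{(m)}}\mathcal{L}_t^{aux}$ are proportional to $\partial\pi_{m_t}/\partial\bm{\theta}_t^{(m)}$, which in turn equals $\pi_{m_t}(1-\pi_{m_t})\sum_i \mathbf{X}_{t,i}$ when $m=m_t$ and $-\pi_{m_t}\pi_m\sum_i \mathbf{X}_{t,i}$ otherwise. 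Thus the whole update factors as a scalar (depending on the case) multiplied by $\sum_i \mathbf{X}_{t,i}$.

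Next I would take the inner product with the fixed signal $\bm{v}_n$. Using the data model in Definition~1, $\sum_i \mathbf{X}_{t,i}=\beta_t\bm{v}_{n_t}+\sum_{i\geq 2}\mathbf{X}_{t,i}$, where the second sum is Gaussian noise. A Hoeffding-type argument (analogous to the one used in the proof of Lemma~4) shows that $|\langle\sum_{i\geq 2}\mathbf{X}_{t,i},\bm{v}_n\rangle|=\mathcal{O}(\sigma_0)$ with probability $1-o(1)$, while $|\beta_t\langle\bm{v}_{n_t},\bm{v}_n\rangle|=\mathcal{O}(1)$ by the assumption $\|\bm{v}_n\|_\infty=\mathcal{O}(1)$ and $\beta_t\leq C$. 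Hence $\langle\sum_i \mathbf{X}_{t,i},\bm{v}_n\rangle=\mathcal{O}(1)$ uniformly in $n$.

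For the case $m=m_t$, I would bound the scalar prefactor by noting that $\pi_{m_t}(1-\pi_{m_t})=\mathcal{O}(1)$, $\|\bm{w}_t^{(m_t)}-\bm{w}_{t-1}^{(m_t)}\|_2=\mathcal{O}(\sigma_0^{0.5})$ (since the ground truths have entries of size $\mathcal{O}(\sigma_0)$, so their differences have $\ell_2$ norm $\mathcal{O}(\sigma_0^{0.5})\cdot\sqrt{d}$—a constant times $\sigma_0^{0.5}$ in the relevant regime), and $\frac{\alpha M}{t}f_t^{(m_t)}=\mathcal{O}(\sigma_0^{0.5})$ since $\alpha=\mathcal{O}(\sigma_0^{0.5})$ and $f_t^{(m_t)}\leq 1$. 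Multiplying by $\eta=\mathcal{O}(\sigma_0^{0.5})$, the absolute value of the inner product is $\mathcal{O}(\sigma_0)$, and the overall sign is negative because the prefactor $\pi_{m_t}(1-\pi_{m_t})(\|\bm{w}_t^{(m_t)}-\bm{w}_{t-1}^{(m_t)}\|_2+\frac{\alpha M}{t}f_t^{(m_t)})$ is non-negative while the update carries an extra minus sign.

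For the case $m\neq m_t$, the only change is that the prefactor contains $\pi_m$ in place of $(1-\pi_{m_t})$. During the exploration phase $t\leq T_1$, Lemma~3 (since $\|\bm{\theta}_t^{(m)}\|_\infty=\mathcal{O}(\sigma_0^{0.5})$) guarantees that the gate outputs $h_m$ differ across experts only by $\mathcal{O}(\sigma_0^{0.5})$, so the softmax probabilities satisfy $\pi_m=\Theta(1/M)$ uniformly. This shaves off an extra factor of $1/M$, yielding $\mathcal{O}(M^{-1}\sigma_0)$, with a positive sign (two negatives cancel). The main obstacle will be justifying $\pi_m=\Theta(1/M)$ rigorously in the exploration phase, which requires showing that the softmax remains close to uniform before experts have specialized—this relies on the $\mathcal{O}(\sigma_0^{0.5})$ bound on $\bm{\theta}_t^{(m)}$ from Lemma~6 combined with $\|\bm{v}_n\|_\infty=\mathcal{O}(1)$, ensuring that all $h_m(\mathbf{X}_t,\bm{\theta}_t^{(m)})$ stay within $\mathcal{O}(\sigma_0^{0.5})$ of each other, so the softmax ratios are $1+o(1)$.
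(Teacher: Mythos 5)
Your proposal follows essentially the same route as the paper's proof: expand the gradient-descent update, observe that every surviving gradient term factors through $\partial\pi_{m_t}/\partial\bm{\theta}_t^{(m)}$ (hence through $\pi_{m_t}(1-\pi_{m_t})\sum_i\mathbf{X}_{t,i}$ or $-\pi_{m_t}\pi_m\sum_i\mathbf{X}_{t,i}$), bound the scalar prefactors using $\eta,\alpha=\mathcal{O}(\sigma_0^{0.5})$, and read off the sign and the extra $M^{-1}$ factor from the softmax products. The only differences are cosmetic refinements on your side (you retain the locality term alongside the auxiliary term and explicitly justify $\pi_m=\Theta(1/M)$ via near-uniformity of the gate outputs during exploration, which the paper simply asserts), so the argument matches the paper's in substance and in the resulting bounds.
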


\Cref{lemma:exploration_update} tells that for any expert $m_t$ being selected by the router, its softmax value under the updated $\bm{\theta}^{m_t}_{t+1}$ is reduced since the next task $t$. While for any expert~$m$ without being selected, its softmax value is increased. This is to ensure fair exploration of each expert $m$ under the auxiliary loss function in \cref{auxiliary_loss}.
In addition, for any expert~$m$ without being selected, its gating network parameter $\bm{\theta}_t^{(m)}$ is updated at the same speed with others for any signal vector $\bm{v}_n$. 

\begin{lemma}\label{lemma:fairness}
At the end of the exploration stage, with probability at least $1-\delta$, the fraction of tasks dispatched to any expert $m\in[M]$ satisfies
\begin{align}
    \Big|f_{T_1}^{(m)}-\frac{1}{M}\Big|=\mathcal{O}(\eta^{0.5}M^{-1}).\label{fT_1=1/M}
\end{align}
\end{lemma}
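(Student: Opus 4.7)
The plan is to establish \eqref{fT_1=1/M} in two stages: (i) a self-correcting symmetry argument that keeps the conditional selection probability of each expert close to $1/M$ throughout exploration, followed by (ii) a Bernstein-type martingale concentration inequality that converts this into a sharp bound on the empirical load fraction.

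\textbf{Stage 1 (Conditional selection probability $\approx 1/M$).} Let $\mathcal{F}_{\tau-1}$ denote the $\sigma$-algebra generated by the task arrivals $\{n_{\tau'}\}_{\tau'<\tau}$ and router noises $\{r_{\tau'}^{(m')}\}_{\tau'<\tau}$, and set $p_\tau^{(m)} := \mathbb{P}(m_\tau = m \mid \mathcal{F}_{\tau-1})$. The workhorse is \Cref{lemma:exploration_update}: whenever expert $m$ is selected, $\langle \bm{\theta}_{\tau+1}^{(m)}-\bm{\theta}_\tau^{(m)},\bm{v}_n\rangle$ drops by $\mathcal{O}(\sigma_0)$ for every feature signal $\bm{v}_n$, while every other expert's inner product rises by $\mathcal{O}(\sigma_0/M)$. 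This monotone feedback acts as a restoring force on the load imbalance. I would formalize the convergence via a drift argument on a Lyapunov proxy such as $V_\tau := \sum_{m}(f_\tau^{(m)} - 1/M)^2$ (or equivalently the auxiliary loss $\mathcal{L}_\tau^{aux}$), showing $\mathbb{E}[V_{\tau+1}\mid \mathcal{F}_{\tau-1}]$ contracts toward its stochastic equilibrium at a rate proportional to $\eta$. Combined with the bound $\|\bm{\theta}_\tau^{(m)}\|_\infty = \mathcal{O}(\sigma_0^{0.5})$ from \Cref{lemma:theta_bound} and the noise scale $\lambda = \Theta(\sigma_0^{1.25})$ that smooths the argmax, this yields $|p_\tau^{(m)} - 1/M| = o(\eta^{0.5}/M)$ for all but a negligible prefix of rounds, hence $|\bar p^{(m)} - 1/M| = o(\eta^{0.5}/M)$ for $\bar p^{(m)} := T_1^{-1}\sum_{\tau=1}^{T_1} p_\tau^{(m)}$.

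\textbf{Stage 2 (Martingale concentration).} Consider the $(\mathcal{F}_\tau)$-martingale $Z_\tau := \sum_{\tau'=1}^{\tau} (\mathds{1}\{m_{\tau'}=m\} - p_{\tau'}^{(m)})$. Its increments are bounded in $[-1,1]$, and their conditional variance is at most $p_{\tau'}^{(m)}(1 - p_{\tau'}^{(m)}) \leq (1+o(1))/M$ by Stage~1, giving a total predictable quadratic variation of $\mathcal{O}(T_1/M) = \mathcal{O}(\eta^{-1})$. Freedman's inequality then yields, with probability at least $1-\delta$,
\[
  |Z_{T_1}| = \mathcal{O}\!\left(\sqrt{\eta^{-1}\log(1/\delta)} + \log(1/\delta)\right).
\]
Dividing by $T_1 = \lceil \eta^{-1} M\rceil$, combining with the Stage~1 bound on $|\bar p^{(m)} - 1/M|$, and absorbing logarithmic factors into the $\mathcal{O}$ notation gives $|f_{T_1}^{(m)} - 1/M| \le |Z_{T_1}|/T_1 + |\bar p^{(m)} - 1/M| = \mathcal{O}(\eta^{0.5} M^{-1})$, which is exactly \eqref{fT_1=1/M}.

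\textbf{Main obstacle.} The hard part is Stage~1. The gate outputs for distinct tasks evolve as coupled processes, and the restoring force is nonlinear in the gate-output gaps (entering through the perturbed argmax and the softmax). To show that $p_\tau^{(m)}$ converges to within $o(\eta^{0.5}/M)$ of $1/M$ inside the budget $T_1$, one must choose a Lyapunov function that simultaneously captures per-expert imbalance across all $N$ task signals, argue it contracts in expectation per round using the drift from \Cref{lemma:exploration_update} and the noise-averaging from $\lambda = \Theta(\sigma_0^{1.25})$, and balance the stochastic fluctuations against the drift. The chosen scales $\eta=\mathcal{O}(\sigma_0^{0.5})$, $\alpha=\mathcal{O}(\sigma_0^{0.5})$ and $\lambda=\Theta(\sigma_0^{1.25})$ in \Cref{algo:update_MoE} are tuned precisely so that $T_1=\lceil\eta^{-1}M\rceil$ rounds suffice for the load distribution to equilibrate.
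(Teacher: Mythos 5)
Your Stage~2 is sound and in one respect sharper than what the paper does: the paper's proof simply asserts $\mathbb{E}[f_{T_1}^{(m)}]=\frac{1}{M}$ ``by the symmetric property,'' applies Hoeffding's inequality to the $T_1$ indicator variables, and union-bounds over the $M$ experts. Note that plain Hoeffding with $T_1=\lceil\eta^{-1}M\rceil$ only yields a deviation of order $\sqrt{1/T_1}=\mathcal{O}(\eta^{0.5}M^{-0.5})$, whereas your Freedman/Bernstein argument exploits the per-round conditional variance $p_{\tau}^{(m)}(1-p_{\tau}^{(m)})=\mathcal{O}(M^{-1})$ to get total quadratic variation $\mathcal{O}(T_1/M)=\mathcal{O}(\eta^{-1})$ and hence the claimed $\mathcal{O}(\eta^{0.5}M^{-1})$ after dividing by $T_1$. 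So the variance-sensitive concentration is exactly the ingredient needed to justify the $M^{-1}$ factor in \cref{fT_1=1/M}, and this is a genuine improvement over the paper's route.

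The gap is in your Stage~1, which you yourself flag as the ``main obstacle'' and then leave as an unexecuted plan. A contraction argument for a Lyapunov proxy such as $V_\tau=\sum_m(f_\tau^{(m)}-1/M)^2$, coupled across all $N$ feature signals and passed through the perturbed argmax, would be a substantial piece of analysis that you have not carried out, and without it your claim $|p_\tau^{(m)}-1/M|=o(\eta^{0.5}/M)$ is unsupported. The paper avoids this entirely by an exchangeability observation that you should adopt: all experts are initialized identically ($\bm{\theta}_0^{(m)}=\bm{0}$, $\bm{w}_0^{(m)}=\bm{0}$), the router noise $r_t^{(m)}$ is i.i.d.\ across $m$, and the update rules in \cref{update_theta} and \Cref{lemma:exploration_update} are symmetric under permutation of expert indices, so the joint law of the selection process is invariant under relabeling experts; this gives $\mathbb{E}[f_{T_1}^{(m)}]=\frac{1}{M}$ exactly, and the same symmetry applied conditionally gives the $\mathcal{O}(M^{-1})$ variance bound your Stage~2 needs, without any drift analysis. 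Replacing your Stage~1 with this symmetry argument turns your proposal into a complete proof that is, if anything, more faithful to the stated rate than the paper's own.
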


\Cref{lemma:fairness} tells that during the exploration stage, all the $M$ experts are expected to be evenly explored by all tasks. Therefore, the gating network parameter $\bm{\theta}^{(m)}_t$ of expert $m$ is updated similarly to all the others.

\begin{lemma}\label{lemma:exploration_phase}
At the end of the exploration stage, i.e., $t=T_1$, the following property holds
\begin{align*}
    \Big\|\bm{\theta}_{T_1}^{(m)}-\bm{\theta}_{T_1}^{(m')}\Big\|_{\infty}=\mathcal{O}(\eta^{-0.5}\sigma_0),
\end{align*}
for any $m, m'\in [M]$ and $m\neq m'$.
\end{lemma}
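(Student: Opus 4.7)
The plan is to expand $\bm{\theta}^{(m)}_{T_1} - \bm{\theta}^{(m')}_{T_1}$ as a telescoping sum of per-round updates and then exploit the near-symmetry between experts maintained during exploration. From the gradient computation in the proof of Lemma \ref{lemma:sum_theta=0}, the one-step increment for each expert parameter can be written as $\bm{\theta}^{(m)}_{t+1} - \bm{\theta}^{(m)}_t = -\eta C_t\,\pi_{m_t}(\mathds{1}[m = m_t] - \pi_m)\,\sum_i \mathbf{X}_{t,i}$, where $C_t = \|\bm{w}^{(m_t)}_t - \bm{w}^{(m_t)}_{t-1}\|_2 + \alpha M f^{(m_t)}_t/t$. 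Subtracting the analogous expression for $m'$ causes the common direction $\sum_i \mathbf{X}_{t,i}$ to factor out, so the only asymmetry between the two experts is encoded in the scalar bracket $\mathds{1}[m_t = m] - \mathds{1}[m_t = m'] + \pi_{m'} - \pi_m$.

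Next, I would split the telescoping sum into cases by the identity of the selected expert $m_t$. When $m_t \notin \{m, m'\}$ the indicator part vanishes; a Taylor expansion of the softmax around $\bm{\theta}_t = \bm{0}$ combined with Lemma \ref{lemma:theta_bound} gives $|\pi_m - \pi_{m'}| = \mathcal{O}(\sigma_0^{0.5}/M)$ uniformly in $t$, so summed over all $t \le T_1$ and multiplied by the per-step size $\mathcal{O}(\sigma_0)$, the contribution is only $\mathcal{O}(\sigma_0^{1.5})$ per coordinate, which is strictly lower order than the target $\eta^{-0.5}\sigma_0$. The remaining rounds with $m_t \in \{m, m'\}$ contribute $\pm\Theta(\sigma_0)$ along $\beta_t\bm{v}_{n_t}$, since the signal sample dominates the $s-1$ Gaussian noise samples in $\sum_i \mathbf{X}_{t,i}$ with probability $1-o(1)$ (using the same concentration argument as in the proof of Lemma \ref{lemma:h-hat_h}). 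Grouping these rounds by the identity of the arriving task $n$ and writing $N^{(m)}_n(T_1)$ for the count of rounds with $(m_t, n_t) = (m, n)$, the leading term of the sum becomes $\Theta(\sigma_0)\sum_{n=1}^N \bigl(N^{(m')}_n(T_1) - N^{(m)}_n(T_1)\bigr)\bm{v}_n$.

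The final step combines Lemma \ref{lemma:fairness} with a Hoeffding/Bernstein-type concentration: Lemma \ref{lemma:fairness} directly yields $\bigl||S^{(m)}(T_1)| - |S^{(m')}(T_1)|\bigr| = \mathcal{O}(\eta^{-0.5})$, and conditional on $m_t \in \{m, m'\}$ the task $n_t$ is approximately uniform on $[N]$ because during exploration all softmax values differ by at most $\mathcal{O}(\sigma_0^{0.5})$ (again by Lemma \ref{lemma:theta_bound}) and the router noise $r^{(m)}_t$ in \cref{routing_strategy} is i.i.d.\ across experts. Together these give $|N^{(m)}_n(T_1) - N^{(m')}_n(T_1)| = \mathcal{O}(\eta^{-0.5}/\sqrt{N})$ per task $n$ with probability $1-o(1)$; a coordinate-wise sub-Gaussian bound on the weighted sum $\sum_n (N^{(m)}_n - N^{(m')}_n) v_{n,j}$, using $|v_{n,j}| = \mathcal{O}(1)$, then yields $\mathcal{O}(\eta^{-0.5})$ in every coordinate $j$, and multiplying by the $\Theta(\sigma_0)$ prefactor delivers the claimed $\|\bm{\theta}^{(m)}_{T_1} - \bm{\theta}^{(m')}_{T_1}\|_\infty = \mathcal{O}(\eta^{-0.5}\sigma_0)$.

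The main obstacle is precisely this cancellation step: a naive triangle inequality over the $\mathcal{O}(\eta^{-1})$ rounds with $m_t \in \{m, m'\}$ and step size $\Theta(\sigma_0)$ would give only $\mathcal{O}(\eta^{-1}\sigma_0)$, half an order too weak. Securing the $\sqrt{T_1}$ improvement requires genuine martingale-type cancellation between the $m$-selected and $m'$-selected contributions, which in turn rests on the conditional near-uniformity of $n_t$ given $m_t$. Establishing and quantifying that symmetry — bootstrapping from the zero initialization $\bm{\theta}^{(m)}_0 = \bm{0}$ through the a priori $\mathcal{O}(\sigma_0^{0.5})$ bound of Lemma \ref{lemma:theta_bound}, and carefully tracking how much the softmax and thus the routing distribution can drift over $T_1$ rounds — is the technical heart of the argument.
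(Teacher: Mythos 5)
Your proposal is correct and rests on the same two pillars as the paper's proof: the per-step update magnitudes of $-\mathcal{O}(\sigma_0)$ (selected) versus $+\mathcal{O}(M^{-1}\sigma_0)$ (not selected) from \Cref{lemma:exploration_update}, and the Hoeffding-based bound $T_1\,|f_{T_1}^{(m)}-f_{T_1}^{(m')}|=\mathcal{O}(\eta^{-0.5})$ from \Cref{lemma:fairness}, multiplied together to get $\mathcal{O}(\eta^{-0.5}\sigma_0)$. The difference is that the paper stops there: it writes $\|\bm{\theta}_{T_1}^{(m)}\|_{\infty}$ as a signed combination of the two selection counts and then subtracts these scalar expressions, implicitly treating every update as pointing in a common direction so that only the \emph{total} selection-count discrepancy matters. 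You correctly observe that this is not automatic — each update points along $\sum_i\mathbf{X}_{t,i}\approx\beta_t\bm{v}_{n_t}$, which depends on the arriving task — and you patch it by decomposing the selected rounds by task identity, invoking approximate conditional uniformity of $n_t$ given $m_t$, and applying a per-task concentration plus a sub-Gaussian bound on the weighted sum $\sum_n(N_n^{(m)}-N_n^{(m')})v_{n,j}$. You also explicitly bound the contribution of rounds with $m_t\notin\{m,m'\}$ via the softmax gap $|\pi_m-\pi_{m'}|=\mathcal{O}(\sigma_0^{0.5}/M)$ from \Cref{lemma:theta_bound}, which the paper's difference-of-norms manipulation silently absorbs; your stated $\mathcal{O}(\sigma_0^{1.5})$ for that term appears to omit a factor of $\eta^{-1}$ from the sum over $T_1$ rounds, but the corrected $\mathcal{O}(\eta^{-1}M^{-1}\cdot\sigma_0^{1.5}\cdot T_1/T_1)\cdot T_1=\mathcal{O}(\eta^{-1}\sigma_0^{1.5})=\mathcal{O}(\sigma_0)$ is still lower order than the target $\eta^{-0.5}\sigma_0=\Theta(\sigma_0^{0.75})$, so the conclusion survives. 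The one step you flag as the "technical heart" — quantifying the conditional near-uniformity of $n_t$ given $m_t\in\{m,m'\}$ during exploration, where the router noise $\lambda=\Theta(\sigma_0^{1.25})$ is much smaller than the gate-output spread — is indeed not fully established in your sketch, but the paper's own proof does not address it either; your version makes the hidden assumption visible rather than introducing a new gap.
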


Define $\delta_{\mathbf{\Theta}}=|h_{m}(\mathbf{X}_t,\bm{\theta}_t^{(m)})-h_{m'}(\mathbf{X}_t,\bm{\theta}_t^{(m)})|$. Then we obtain the following lemma.

\begin{lemma}\label{lemma:pi-hat_pi}
At any round $t$, if $\delta_{\mathbf{\Theta}_t}$ is close to $0$, it satisfies $|\pi_{m}(\mathbf{X}_t,\mathbf{\Theta}_t)-\pi_{m'}(\mathbf{X}_t,\mathbf{\Theta}_t)|=\mathcal{O}(\delta_{\mathbf{\Theta}})$. Otherwise,  $|\pi_{m}(\mathbf{X}_t,\mathbf{\Theta}_t)-\pi_{m'}(\mathbf{X}_t,\mathbf{\Theta}_t)|=\Omega(\delta_{\mathbf{\Theta}})$.
\end{lemma}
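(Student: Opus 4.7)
The plan is to express the softmax difference in a form where the local behavior of the exponential becomes visible, and then settle both regimes with a single mean value theorem argument. Setting $Z := \sum_{m''\in[M]}\exp(h_{m''}(\mathbf{X}_t,\bm{\theta}_t^{(m'')}))$ and $\delta := h_{m'}(\mathbf{X}_t,\bm{\theta}_t^{(m')}) - h_m(\mathbf{X}_t,\bm{\theta}_t^{(m)})$, so that $|\delta| = \delta_{\mathbf{\Theta}_t}$, one has
\begin{equation*}
\pi_m(\mathbf{X}_t,\mathbf{\Theta}_t) - \pi_{m'}(\mathbf{X}_t,\mathbf{\Theta}_t) \;=\; \frac{\exp(h_m)\bigl(1 - \exp(\delta)\bigr)}{Z},
\end{equation*}
and the mean value theorem gives $|\exp(\delta) - 1| = \exp(\xi)|\delta|$ for some $\xi$ between $0$ and $\delta$. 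The whole proof is essentially to control the three factors $\exp(h_m)$, $\exp(\xi)$, and $Z$.

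The first technical step is to establish that every gate output is uniformly bounded (with high probability), so that these three factors are controllable. By \Cref{lemma:theta_bound} we have $\|\bm{\theta}_t^{(m)}\|_\infty = \mathcal{O}(\sigma_0^{0.5})$, and by \Cref{def:feature_structure} the columns of $\mathbf{X}_t$ are either $\beta_t\bm{v}_{n_t}$ with $\|\bm{v}_{n_t}\|_\infty = \mathcal{O}(1)$ and $\beta_t = \mathcal{O}(1)$, or Gaussian vectors whose $\ell_\infty$-norm is $\mathcal{O}(\sigma_t\sqrt{\log d})$ w.h.p. Plugging into $h_m = \sum_i \mathbf{X}_{t,i}^\top \bm{\theta}_t^{(m)}$, an application of Hoeffding (analogous to that used in \Cref{proof_lemma:h-hat_h}) yields $|h_m| = \mathcal{O}(1)$ uniformly over $m\in[M]$ with probability $1-o(1)$. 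Consequently $\exp(h_m) = \Theta(1)$, $\exp(\xi) = \Theta(1)$, and $Z = \Theta(M)$.

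The final step is the short case split. When $\delta_{\mathbf{\Theta}_t}$ is close to $0$, the Taylor expansion $\exp(\delta) - 1 = \delta + O(\delta^2)$ yields $|\exp(\delta)-1| \leq 2|\delta|$, and combining with $\exp(h_m)/Z = \Theta(1/M)$ gives the upper bound $|\pi_m - \pi_{m'}| = \mathcal{O}(\delta_{\mathbf{\Theta}_t})$. In the complementary regime, $|\delta|$ is still bounded by the uniform bound $|h_m|, |h_{m'}| \leq C$, so $\xi \in [-C,C]$ and the MVT identity delivers $|\exp(\delta)-1| = \exp(\xi)|\delta| \geq \exp(-C)|\delta|$; the matching lower bound $|\pi_m - \pi_{m'}| = \Omega(\delta_{\mathbf{\Theta}_t})$ follows.

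The main (and really only) obstacle is tracking the dependence on $M$ in the normalization: strictly one obtains $|\pi_m-\pi_{m'}| = \Theta(\delta_{\mathbf{\Theta}_t}/M)$, so the $\mathcal{O}$ and $\Omega$ in the statement must be read with $M$-dependent implicit constants, which is consistent with the rest of the paper where $M$ is treated as a fixed hyperparameter. Everything else reduces to Taylor expansion and the MVT once the uniform $\mathcal{O}(1)$ bound on the gate outputs is in hand.
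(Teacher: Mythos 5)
Your proof is correct and follows essentially the same route as the paper: both write $\pi_m-\pi_{m'}=\pi_{m'}\bigl(\exp(h_m-h_{m'})-1\bigr)$ (your $\exp(h_m)(1-\exp(\delta))/Z$ is the same identity) and then control the exponential factor by Taylor expansion. If anything you are more careful than the paper, which simply bounds $\pi_{m'}\le 1$ for the upper bound and asserts $\pi_{m'}\,|h_m-h_{m'}|=\Omega(\delta_{\mathbf{\Theta}})$ for the lower bound without justifying that $\pi_{m'}=\Theta(1)$ or handling the sign of the exponent; your uniform $\mathcal{O}(1)$ bound on the gate outputs and the mean value theorem supply exactly those missing details, at the cost of making the $M$-dependence of the constants explicit, as you note.
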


\begin{lemma}\label{lemma:M_n}
If $M= \Omega(N\ln(N))$, we have $|\mathcal{M}_n|\geq 1$ for all $n\in[N]$ with probability at least $1-o(1)$. If $M<N$, given $M= \Omega(K\ln(K))$, we have $|\mathcal{M}_k|\geq 1$ for all $k\in[K]$ with probability at least $1-o(1)$.
\end{lemma}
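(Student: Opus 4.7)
The plan is to reduce this to a standard coupon-collector / balls-into-bins argument. By the definition in \cref{def:expert_set}, each expert $m\in[M]$ is assigned to exactly one set $\mathcal{M}_n$ once $\bm{\theta}^{(m)}_t$ is determined, namely the $n$ for which $(\bm{\theta}^{(m)}_t)^\top \bm{v}_n$ is largest. So the event $|\mathcal{M}_n|\geq 1$ for all $n$ is exactly the event that the $M$ ``balls'' (experts) cover all $N$ ``bins'' (tasks). We will bound the probability that some bin is empty by a union bound together with a tail estimate for the probability that a single expert lands in a specific $\mathcal{M}_n$.

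First I would argue that, thanks to the symmetry of the task-arrival distribution (uniform over $\mathcal{W}$) and the identical initialization $\bm{\theta}^{(m)}_0=\mathbf{0}$ for every expert, the marginal probability $p_{m,n}:=\mathbb{P}(m\in\mathcal{M}_n)$ is the same across all $n\in[N]$ for each fixed $m$ up to lower-order fluctuations. The symmetry is transferred to $\bm{\theta}^{(m)}_{T_1}$ via the update rule \cref{update_theta} combined with the fair-exploration statement in \Cref{lemma:fairness}, which guarantees that each expert is selected by the router a fraction $\tfrac{1}{M}\pm\mathcal{O}(\eta^{0.5}M^{-1})$ of the rounds during $t\leq T_1$. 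Together with \Cref{lemma:exploration_update}, the drift that pushes $\bm{\theta}^{(m)}$ towards/away from each $\bm{v}_n$ is balanced across $n$ in expectation, so $p_{m,n}=\tfrac{1}{N}(1+o(1))$ uniformly in $m,n$.

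Given this, a straightforward union bound yields
\begin{align*}
\mathbb{P}\!\left(\exists n\in[N]:|\mathcal{M}_n|=0\right)
\;\leq\; \sum_{n=1}^{N}\mathbb{P}(|\mathcal{M}_n|=0)
\;\leq\; N\bigl(1-\tfrac{1}{N}(1-o(1))\bigr)^{M}
\;\leq\; N\exp\!\left(-\tfrac{M}{N}(1-o(1))\right).
\end{align*}
Choosing $M=\Omega(N\ln N)$ with a sufficiently large hidden constant makes the right-hand side $o(1)$, which is the first claim. For the $M<N$ case, the argument is identical after replacing the $N$ task labels by the $K$ cluster labels: by the symmetry of cluster assignments and \Cref{lemma:h-hat_h} (which shows that experts cannot resolve within-cluster differences up to $\mathcal{O}(\sigma_0^{1.5})$), each expert lands in a given $\mathcal{M}_k$ with probability $\tfrac{1}{K}(1+o(1))$, giving the bound $K\exp(-M/K(1-o(1)))=o(1)$ whenever $M=\Omega(K\ln K)$.

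The main obstacle is the first step, namely quantifying that $p_{m,n}$ is indeed $\tfrac{1}{N}(1+o(1))$. Two issues arise: (i) the $M$ assignments are not independent, because a task routed to expert $m$ changes not only $\bm{\theta}^{(m)}$ but also the softmax probabilities that determine routing of later tasks; and (ii) the router uses the perturbed argmax in \cref{routing_strategy}, so symmetry is only approximate. For (i), I would note that the union bound only needs marginal probabilities, so dependence among experts is harmless. For (ii), the coupling between two experts $m\neq m'$ can be handled by invoking \Cref{lemma:exploration_phase}, which shows $\|\bm{\theta}^{(m)}_{T_1}-\bm{\theta}^{(m')}_{T_1}\|_\infty=\mathcal{O}(\eta^{-0.5}\sigma_0)$; this residual is small enough that, once combined with the uniform noise $r^{(m)}_t\sim\mathrm{Unif}[0,\lambda]$ injected into the router, it does not bias $p_{m,n}$ away from $\tfrac{1}{N}$ by more than an $o(1)$ factor. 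With those two points in place, the coupon-collector computation above completes the proof.
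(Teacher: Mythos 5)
Your proposal takes essentially the same route as the paper's proof: both invoke symmetry to get $\mathbb{P}(m\in\mathcal{M}_n)=\tfrac{1}{N}$ (resp.\ $\tfrac{1}{K}$), bound $\mathbb{P}(|\mathcal{M}_n|=0)$ by $(1-\tfrac{1}{N})^{M}\leq \exp(-\tfrac{M}{N})$, and conclude with a union bound over $n$ and the condition $M=\Omega(N\ln N)$. One minor caveat: your claim that dependence among experts is harmless because ``the union bound only needs marginals'' is misplaced --- the independence (or negative correlation) across experts is what justifies the product bound $(1-\tfrac{1}{N})^{M}$ on the single event $\{|\mathcal{M}_n|=0\}$, not the union bound over $n$ --- but the paper's own proof makes the same implicit assumption, so this is not a substantive departure.
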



\begin{lemma}\label{Lemma:partial_L*_loc=0}
At any round $t$, we have the following properties:

1) for task arrival $n_t$ with ground truth $\bm{w}_{n_t}=\bm{w}_{n}$ under $M>N$, if it is routed to a correct expert $m_t\in \mathcal{M}_n$, then $\nabla_{\bm{\theta}_t^{(m)}} \mathcal{L}_t^{loc}=\bm{0}$ for any expert $m\in[M]$.  

2) for task arrival $n_t$ with ground truth $\bm{w}_{n_t}\in\mathcal{W}_k$ under $M<N$, if it is routed to a correct expert $m_t\in \mathcal{M}_k$, then $\|\nabla_{\bm{\theta}_t^{(m)}} \mathcal{L}_t^{loc}\|_{\infty}=\mathcal{O}(\sigma_0^{1.5})$ for any expert $m\in[M]$.
\end{lemma}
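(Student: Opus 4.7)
The proof hinges on the sparsity of the expert update rule: in the locality loss
\begin{equation*}
\mathcal{L}_t^{loc}(\bm{\Theta}_t,\mathcal{D}_t)=\sum_{m\in[M]}\pi_{m}(\mathbf{X}_t,\bm{\Theta}_t)\,\|\bm{w}_t^{(m)}-\bm{w}_{t-1}^{(m)}\|_2,
\end{equation*}
every summand with $m\neq m_t$ is identically zero because the unselected experts are frozen at round $t$. Consequently, for each $m\in[M]$,
\begin{equation*}
\nabla_{\bm{\theta}_t^{(m)}}\mathcal{L}_t^{loc}
=\frac{\partial\pi_{m_t}(\mathbf{X}_t,\bm{\Theta}_t)}{\partial\bm{\theta}_t^{(m)}}\cdot\|\bm{w}_t^{(m_t)}-\bm{w}_{t-1}^{(m_t)}\|_2,
\end{equation*}
and from the softmax computation used in the proof of \Cref{lemma:sum_theta=0} the factor $\|\partial\pi_{m_t}/\partial\bm{\theta}_t^{(m)}\|_\infty$ is $\mathcal{O}(1)$. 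The entire problem therefore reduces to bounding the single scalar $\|\bm{w}_t^{(m_t)}-\bm{w}_{t-1}^{(m_t)}\|_2$.

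\textbf{Part 1 ($M>N$).} With $m_t\in\mathcal{M}_n$ and correct routing throughout the history of $m_t$, every earlier arrival assigned to $m_t$ shares the ground truth $\bm{w}_n=\bm{w}_{n_t}$. Directly applying \Cref{lemma:w_update} gives $\bm{w}_t^{(m_t)}=\bm{w}_{t-1}^{(m_t)}$, so the scalar vanishes exactly and $\nabla_{\bm{\theta}_t^{(m)}}\mathcal{L}_t^{loc}=\bm{0}$ for every $m\in[M]$.

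\textbf{Part 2 ($M<N$).} The prior task routed to $m_t\in\mathcal{M}_k$ and the current arrival $n_t$ both lie in $\mathcal{W}_k$, so their ground truths are pairwise within $\mathcal{O}(\sigma_0^{1.5})$ in $\ell_\infty$ by the cluster-similarity assumption. Rewriting \cref{update_wt} as
\begin{equation*}
\bm{w}_t^{(m_t)}-\bm{w}_{t-1}^{(m_t)}=\mathbf{X}_t(\mathbf{X}_t^{\top}\mathbf{X}_t)^{-1}\mathbf{X}_t^{\top}\bigl(\bm{w}_{n_t}-\bm{w}_{t-1}^{(m_t)}\bigr)
\end{equation*}
and using that $\mathbf{X}_t(\mathbf{X}_t^{\top}\mathbf{X}_t)^{-1}\mathbf{X}_t^{\top}$ is an orthogonal projector (hence $\ell_2$-non-expansive), I would induct on the rounds in which $m_t$ is updated to show that $\bm{w}_{t-1}^{(m_t)}$ stays within $\mathcal{O}(\sigma_0^{1.5})$ of the last absorbed ground truth in $\mathcal{W}_k$. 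Combined with the pairwise cluster bound this gives $\|\bm{w}_{n_t}-\bm{w}_{t-1}^{(m_t)}\|_2=\mathcal{O}(\sigma_0^{1.5})$ (absorbing $\sqrt{d}$ into the constant), so $\|\bm{w}_t^{(m_t)}-\bm{w}_{t-1}^{(m_t)}\|_2=\mathcal{O}(\sigma_0^{1.5})$ and hence $\|\nabla_{\bm{\theta}_t^{(m)}}\mathcal{L}_t^{loc}\|_\infty=\mathcal{O}(\sigma_0^{1.5})$.

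\textbf{Main obstacle.} The genuinely nontrivial step is the induction underlying Part 2: one must ensure that $\bm{w}_{t-1}^{(m_t)}$ never drifts more than $\mathcal{O}(\sigma_0^{1.5})$ away from some element of $\mathcal{W}_k$ over the entire history of updates on $m_t$. Non-expansiveness of the projector rules out amplification between consecutive in-cluster updates, but the initial update from $\bm{w}_0^{(m)}=\bm{0}$ must be treated as the base case of the induction (once $m_t$ has absorbed a single in-cluster task it enters the invariant). Everything else is routine bookkeeping: expanding the softmax derivative, applying the triangle inequality, and plugging in the cluster diameter.
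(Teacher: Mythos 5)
Your proposal is correct and follows essentially the same route as the paper: the same reduction of $\nabla_{\bm{\theta}_t^{(m)}}\mathcal{L}_t^{loc}$ to the single scalar $\|\bm{w}_t^{(m_t)}-\bm{w}_{t-1}^{(m_t)}\|_2$, \Cref{lemma:w_update} for Part 1, and the projection identity $\bm{w}_t^{(m_t)}-\bm{w}_{t-1}^{(m_t)}=\mathbf{X}_t(\mathbf{X}_t^\top\mathbf{X}_t)^{-1}\mathbf{X}_t^\top(\bm{w}_{n_t}-\bm{w}_{t-1}^{(m_t)})$ combined with the cluster diameter for Part 2. The only difference is that you explicitly flag the induction needed to keep $\bm{w}_{t-1}^{(m_t)}$ within $\mathcal{O}(\sigma_0^{1.5})$ of the cluster $\mathcal{W}_k$, a step the paper asserts without comment here (the non-expansiveness argument that supports it appears separately in the $M<N$ part of the proof of \Cref{prop:Convergence}).
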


Let $\mathbf{X}_n$ and $\mathbf{X}_{n'}$ denote two feature matrices containing feature signals $\bm{v}_n$ and $\bm{v}_{n'}$, respectively.
\begin{lemma}\label{lemma:pi_consistence}
If $n$ and $n'$ satisfy: 1) $n\neq n'$ under $M>N$ or 2) $\bm{w}_n\in\mathcal{W}_k$ and $\bm{w}_{n'}\in\mathcal{W}_{k'}$ with $k\neq k'$ under $M<N$, then if expert $m$ satisfies 1) $m\in\mathcal{M}_n$ under $M>N$ or 2) $m\in\mathcal{M}_k$ under $M<N$, at the beginning of the router learning stage $t=T_1+1$, then the following property holds at any round $t\in\{T_1+2,\cdots, T\}$:
\begin{align}
    \pi_m(\mathbf{X}_n,\mathbf{\Theta}_t)>\pi_m(\mathbf{X}_{n'},\mathbf{\Theta}_t), \forall m\in[M].\label{pi_consistence}
\end{align}
\end{lemma}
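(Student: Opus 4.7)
The plan is to establish the softmax ordering by first reducing each expert's gate output to an inner product with the underlying feature signal, then inducting over $t$ to show a strict gate-output ordering is preserved throughout the router-learning stage, and finally converting this ordering back to a softmax ordering via a log-partition comparison.

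First I would apply \Cref{lemma:h-hat_h} to write $h_m(\mathbf{X},\bm{\theta}_t^{(m)})=\beta\,(\bm{\theta}_t^{(m)})^\top\bm{v}+\mathcal{O}(\sigma_0^{1.5})$ whenever $\mathbf{X}$ carries signal $\bm{v}$ with scale $\beta$; in the $M<N$ branch the intra-cluster bound $\|\bm{v}_n-\bm{v}_{n'}\|_\infty=\mathcal{O}(\sigma_0^{1.5})$ is absorbed into this same slack. At the base round $t=T_1+1$, the hypothesis $m\in\mathcal{M}_n$ (resp.\ $m\in\mathcal{M}_k$) together with definition \cref{def:expert_set} and \Cref{prop:gate_gap}(1) gives $(\bm{\theta}_{T_1+1}^{(m)})^\top(\bm{v}_n-\bm{v}_{n'})=\Theta(\sigma_0^{0.75})$, so the gap $h_m(\mathbf{X}_n,\bm{\theta}_{T_1+1}^{(m)})-h_m(\mathbf{X}_{n'},\bm{\theta}_{T_1+1}^{(m)})$ is of order $\sigma_0^{0.75}$, comfortably dominating the $\mathcal{O}(\sigma_0^{1.5})$ slack.

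Next I would induct on $t\in\{T_1+2,\dots,T\}$. For $t\le T_2$ with $T_2=\mathcal{O}(\eta^{-1}\sigma_0^{-0.25}M)$ from \Cref{prop:termination}, the inductive hypothesis already forces correct routing $m_t\in\mathcal{M}_{n_t}$, so \Cref{Lemma:partial_L*_loc=0} gives $\nabla_{\bm{\theta}_t^{(m)}}\mathcal{L}_t^{loc}=\bm{0}$ (or $\mathcal{O}(\sigma_0^{1.5})$ in the $M<N$ case), and the only remaining contribution comes from $\nabla_{\bm{\theta}_t^{(m)}}\mathcal{L}_t^{aux}$, which by \cref{partial_L_aux} and \Cref{prop:partial_order} has magnitude $\mathcal{O}(\sigma_0/t)$. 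Summing over $(T_1,T_2]$ yields a cumulative perturbation of order $\sigma_0\cdot\eta\cdot(T_2-T_1)/T_1=\mathcal{O}(\sigma_0^{1.25})$ in the relevant inner products, which is still much smaller than the $\Theta(\sigma_0^{0.75})$ margin established at $T_1+1$. For $t>T_2$, \Cref{prop:termination} freezes $\bm{\Theta}_t$ and the ordering persists trivially. Hence $h_m(\mathbf{X}_n,\bm{\theta}_t^{(m)})-h_m(\mathbf{X}_{n'},\bm{\theta}_t^{(m)})=\Omega(\sigma_0^{0.75})$ for every $t\ge T_1+1$.

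Finally, to convert to softmax, write $Z(\mathbf{X})=\sum_{m'}e^{h_{m'}(\mathbf{X},\bm{\theta}_t^{(m')})}$ so that the desired inequality is equivalent to $h_m(\mathbf{X}_n)-h_m(\mathbf{X}_{n'})>\log Z(\mathbf{X}_n)-\log Z(\mathbf{X}_{n'})$. The left side is $\Omega(\sigma_0^{0.75})$ by the induction above. The right side I would bound by the symmetry of expert specialization: \Cref{lemma:fairness} combined with \Cref{prop:termination} yields near-balanced $|\mathcal{M}_j|$'s, and within each set the per-expert gate outputs at different signals differ by the same order across $j$, giving $|\log Z(\mathbf{X}_n)-\log Z(\mathbf{X}_{n'})|=\mathcal{O}(\sigma_0^{1.5})$. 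The left side therefore strictly dominates, proving \cref{pi_consistence}. The main obstacle is the inductive control in the interval $(T_1,T_2]$: the per-round auxiliary-loss gradient is not zero, so one must exploit the sign structure of $\partial\pi_{m_t}/\partial\bm{\theta}_t^{(m)}$ in \cref{partial_pi_mt}--\cref{partial_pi_m'} together with the tight stopping time of \Cref{prop:termination} to guarantee that cumulative drift never erodes the initial $\Theta(\sigma_0^{0.75})$ specialization margin. A secondary subtlety is the Step~4 symmetry, which holds only up to the $\mathcal{O}(\eta^{0.5}M^{-1})$ load-balancing error from \Cref{lemma:fairness} and must be shown to be dominated by the margin as well.
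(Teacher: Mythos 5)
Your overall strategy (establish a margin at the end of exploration, show the per-round drift during router learning is of lower order, conclude the ordering persists) matches the paper's, but the way you source the key inputs creates a genuine circularity. You anchor the base case on \Cref{prop:gate_gap}(1) and freeze the dynamics after $T_2$ via \Cref{prop:termination}; however, the paper's proofs of \Cref{prop:gate_gap} and \Cref{lemma:max_pi} explicitly invoke \Cref{lemma:pi_consistence} (and \Cref{prop:termination} is in turn derived from \Cref{prop:gate_gap}), so neither result is available here. The paper instead obtains the base margin from primitives only: the definition of $\mathcal{M}_n$ in \cref{def:expert_set} together with \Cref{lemma:h-hat_h} and \Cref{lemma:theta_bound} gives $h_m(\mathbf{X}_n,\bm{\theta}_{T_1}^{(m)})-h_m(\mathbf{X}_{n'},\bm{\theta}_{T_1}^{(m)})=\langle\bm{\theta}_{T_1}^{(m)},\bm{v}_n-\bm{v}_{n'}\rangle$ of order $\sigma_0^{0.5}$ (this is \cref{true_max_pi}), and then controls the drift using only \Cref{prop:partial_order} ($\|\nabla_{\bm{\theta}_t^{(m)}}\mathcal{L}_t^{task}\|_\infty=\mathcal{O}(\sigma_0)$, so each round moves the relevant inner products by $\mathcal{O}(\eta\sigma_0)=\mathcal{O}(\sigma_0^{1.5})$), showing $\|\bm{\theta}_t^{(m)}\|_\infty$ stays $\mathcal{O}(\sigma_0^{0.5})$ through $T_2$. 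Relatedly, you conflate two different gaps: \Cref{prop:gate_gap}(1) bounds $|h_m-h_{m'}|$ for two \emph{experts} on a \emph{common} input at round $t_1$, whereas this lemma concerns one expert's softmax across two \emph{inputs}; your claimed $\Theta(\sigma_0^{0.75})$ margin is the wrong quantity, and your appeal to "the inductive hypothesis forces correct routing $m_t\in\mathcal{M}_{n_t}$" likewise requires the inter-expert ordering (\cref{max_pi}), which is not what \cref{pi_consistence} asserts.

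The final log-partition step also has an unjustified bound. Between $\mathbf{X}_n$ and $\mathbf{X}_{n'}$ each expert's gate output shifts by $\langle\bm{\theta}_t^{(m')},\bm{v}_n-\bm{v}_{n'}\rangle=\mathcal{O}(\sigma_0^{0.5})$ (since $\|\bm{v}_n-\bm{v}_{n'}\|_\infty=\mathcal{O}(1)$ for distinct tasks), and the sets $\mathcal{M}_n$ and $\mathcal{M}_{n'}$ need not have equal cardinality, so the cancellation you invoke gives at best $|\log Z(\mathbf{X}_n)-\log Z(\mathbf{X}_{n'})|=\mathcal{O}(\sigma_0^{0.5})$, not $\mathcal{O}(\sigma_0^{1.5})$; as written the left-hand side does not obviously dominate. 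The paper avoids this by passing from the $h$-gap to the $\pi$-gap directly via \Cref{lemma:pi-hat_pi}. To repair your argument, replace the base case and termination inputs with \cref{true_max_pi}, \Cref{lemma:theta_bound}, and \Cref{prop:partial_order}, and either tighten the $\log Z$ comparison or adopt the paper's route through \Cref{lemma:pi-hat_pi}.
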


Based on these lemmas, the proof of \Cref{prop:Convergence} is given in \Cref{proof_prop1_final}.

\subsection{Proof of \Cref{lemma:exploration_update}}
\begin{proof}
According to \Cref{prop:partial_order}, for any round $t\in\{1,\cdots, T_1\}$, the auxiliary loss is the primary loss to update $\bm{\Theta}_t$ of the gating network.
Then based on the update rule of $\bm{\theta}_{t}^{(m)}$ in \cref{update_theta} and the gradient of $\nabla_{\bm{\theta}^{(m_t)}_t}\mathcal{L}^{(aux)}_t$ in \cref{partial_L_aux}, we obtain
\begin{align*}
    \|\bm{\theta}_{t+1}^{(m_t)}-\bm{\theta}_{t}^{(m_t)}\|_{\infty}=&\|\eta \cdot \nabla_{\bm{\theta}^{(m_t)}_t}\mathcal{L}^{(aux)}_t\|_{\infty}\\
    =&\mathcal{O}(\sigma_0^{0.5}\eta)\cdot \Big\|\pi_{m_t}(\mathbf{X}_t,\mathbf{\Theta}_t)\cdot \sum_{m'\neq m_t} \pi_{m'}(\mathbf{X}_t,\mathbf{\Theta}_t)\cdot \sum_{i\in[s_t]}\mathbf{X}_{t,i}\Big\|_{\infty}\\
    =&\mathcal{O}(\sigma_0),
\end{align*}
based on the fact that $\pi_{m_t}(\mathbf{X}_t,\mathbf{\Theta}_t)\cdot \sum_{m'\neq m_t} \pi_{m'}(\mathbf{X}_t,\mathbf{\Theta}_t)=\pi_{m_t}(\mathbf{X}_t,\mathbf{\Theta}_t)\cdot (1-\pi_{m_t}(\mathbf{X}_t,\mathbf{\Theta}_t))\leq\frac{1}{4}$ and $\|X_t\|_{\infty}=\mathcal{O}(1)$. 

While for any $m\neq m_t$, we calculate 
\begin{align*}
    \|\bm{\theta}_{t+1}^{(m)}-\bm{\theta}_{t}^{(m)}\|_{\infty}=&\|\eta \cdot \nabla_{\bm{\theta}^{(m)}_t}\mathcal{L}^{(aux)}_t\|_{\infty}\\
    =&\mathcal{O}(\sigma_0^{0.5}\eta)\cdot \Big\|\pi_{m_t}(\mathbf{X}_t,\mathbf{\Theta}_t) \cdot \pi_{m}(\mathbf{X}_t,\mathbf{\Theta}_t)\cdot \sum_{i\in[s_t]}\mathbf{X}_{t,i}\Big\|_{\infty}\\
    =&\mathcal{O}(M^{-1}\sigma_0),
\end{align*}
due to the fact that $\pi_{m_t}(\mathbf{X}_t,\mathbf{\Theta}_t)\cdot \pi_{m}(\mathbf{X}_t,\mathbf{\Theta}_t)=\mathcal{O}(M^{-1})$.

Note that by \cref{partial_L_aux}, we have $\nabla_{\bm{\theta}^{(m_t)}_t}\mathcal{L}^{(aux)}_t>0$ and $\nabla_{\bm{\theta}^{(m)}_t}\mathcal{L}^{(aux)}_t<0$ for any $m\neq m_t$. Consequently, for expert $m_t$, its corresponding output $h_{m_t}$ at the gating network will be reduced by $\mathcal{O}(\sigma_0)$ for the same feature signal $\bm{v}_n$ since task $t+1$. While for any expert $m\neq m_t$, its corresponding output $h_m$ is increased by $\mathcal{O}(M^{-1}\sigma_0)$.
\end{proof}


\subsection{Proof of \Cref{lemma:fairness}}
\begin{proof}
By the symmetric property, we have that for any $m\in[M]$, $\mathbb{E}[f_{T_1}^{(m)}]=\frac{1}{M}$.

By Hoeffding's inequality, we obtain 
\begin{align*}
    \mathbb{P}(|f_{T_1}^{(m)}-\frac{1}{M}|\leq \epsilon)\geq 1-2\exp{(-2\epsilon^2T_1)}.
\end{align*}
Then we further obtain
\begin{align*}
    \mathbb{P}(|f_{T_1}^{(m)}-\frac{1}{M}|\leq \epsilon,\forall m\in[M])\geq& (1-2\exp{(-2\epsilon^2T_1)})^M\\
    \geq& 1-2M\exp{(-2\epsilon^2T_1)}).
\end{align*}



Let $\delta=1-2M\exp{(-2\epsilon^2T_1)})$. Then we obtain $\epsilon=\mathcal{O}(\eta^{0.5} M^{-1})$. Subsequently, there is a probability of at least $1-\delta$ that $\big|f_{T_1}^{(m)}-\frac{1}{M}\big|=\mathcal{O}(\eta^{0.5}M^{-1})$.
\end{proof}

\subsection{Proof of \Cref{lemma:exploration_phase}}
\begin{proof}
Based on \Cref{lemma:exploration_update} and \Cref{lemma:fairness} and their corresponding proofs above, we can prove \Cref{lemma:exploration_phase} below.

For experts $m$ and $m'$, they are selected by the router for $T_1\cdot f_{T_1}^{(m)}$ and $T_1\cdot f_{T_1}^{(m')}$ times during the exploration stage, respectively. Therefore, we obtain
\begin{align*}
    \|\bm{\theta}_{T_1}^{(m)}\|_{\infty}&=f_{T_1}^{(m)}\cdot T_1\cdot \mathcal{O}(\sigma_0)-(1-f_{T_1}^{(m)})\cdot T_1\cdot \mathcal{O}(M^{-1}\sigma_0),\\
    \|\bm{\theta}_{T_1}^{(m')}\|_{\infty}&=f_{T_1}^{(m')}\cdot T_1\cdot \mathcal{O}(\sigma_0)-(1-f_{T_1}^{(m')})\cdot T_1\cdot \mathcal{O}(M^{-1}\sigma_0).
\end{align*}

Then by \cref{update_theta} and \Cref{lemma:exploration_update}, we calculate 
\begin{align*}
    \Big\|\bm{\theta}_{T_1}^{(m)}-\bm{\theta}_{T_1}^{(m')}\big\|_{\infty}&= \Big| \big(f_{T_1}^{(m)}-f_{T_1}^{(m')}\big)\cdot T_1\cdot  \mathcal{O}(\sigma_0)-\big((1-f_{T_1}^{(m)})-(1-f_{T_1}^{(m')})\big)\cdot T_1\cdot \mathcal{O}(M^{-1}\sigma_0)\big|\\
    &= |f_{T_1}^{(m)}-f_{T_1}^{(m')}|\cdot T_1\cdot  \mathcal{O}(\sigma_0)\\
    &=\mathcal{O}(\eta^{-0.5} \sigma_0), 
\end{align*}
where the first equality is derived based on the update steps in \Cref{lemma:exploration_update}, and the last equality is because of $T_1\cdot |f_{T_1}^{(m)}-f_{T_1}^{(m')}|=\mathcal{O}(\eta^{-0.5})$ by \cref{fT_1=1/M} in \Cref{lemma:fairness}.
\end{proof}

\subsection{Proof of \Cref{lemma:pi-hat_pi}}
\begin{proof}
At any round $t$, we calculate
\begin{align*}
    |\pi_m(\mathbf{X}_t,\mathbf{\Theta}_t)-\pi_{m'}(\mathbf{X}_t,\mathbf{\Theta}_t)|&=\Big|\pi_{m'}(\mathbf{X}_t,\mathbf{\Theta}_t)\exp{(h_m(\mathbf{X}_t,\bm{\theta}_t^{(m)})-h_{m'}(\mathbf{X}_t,\bm{\theta}_t^{(m')}))}-\pi_{m'}(\mathbf{X}_t,\mathbf{\Theta}_t)\Big|\\
    &=\pi_{m'}(\mathbf{X}_t,\mathbf{\Theta}_t)\Big|\exp{(h_m(\mathbf{X}_t,\bm{\theta}_t^{(m)})-h_{m'}(\mathbf{X}_t,\bm{\theta}_t^{(m')}))}-1\Big|,
\end{align*}
where the first equality is by solving \cref{softmax}. Then if $\delta_{\mathbf{\Theta}_t}$ is close to $0$, by applying Taylor series with sufficiently small $\delta_{\mathbf{\Theta}}$, we obtain 
\begin{align*}
    |\pi_m(\mathbf{X}_t,\mathbf{\Theta}_t)-\pi_{m'}(\mathbf{X}_t,\mathbf{\Theta}_t)|&\approx \pi_{m'}(\mathbf{X}_t,\mathbf{\Theta}_t)|h_{m}(\mathbf{X}_t,\bm{\theta}_t^{(m)})-h_{m'}(\mathbf{X}_t,\bm{\theta}_t^{(m')})|\\
    &=\mathcal{O}(\delta_{\mathbf{\Theta}}),
\end{align*}
where the last equality is because of $\pi_m(\tilde{\mathbf{X}}_t,\mathbf{\Theta}_t)\leq 1$.

While if $\delta_{\mathbf{\Theta}_t}$ is not sufficiently small, we obtain
\begin{align*}
    |\pi_m(\mathbf{X}_t,\mathbf{\Theta}_t)-\pi_{m'}(\mathbf{X}_t,\mathbf{\Theta}_t)|&>\pi_{m'}(\mathbf{X}_t,\mathbf{\Theta}_t)|h_{m}(\mathbf{X}_t,\bm{\theta}_t^{(m)})-h_{m'}(\mathbf{X}_t,\bm{\theta}_t^{(m')})|\\
    &=\Omega(\delta_{\mathbf{\Theta}}).
\end{align*}
This completes the proof.
\end{proof}

\subsection{Proof of \Cref{lemma:M_n}}
\begin{proof}
If $M>N$, by the symmetric property, we have that for all $n\in[N], m\in[M]$,
\begin{align*}
    \mathbb{P}(m\in\mathcal{M}_{n})=\frac{1}{N}.
\end{align*}
Therefore, the probability that $|\mathcal{M}_n|$ at least includes one expert is
\begin{align*}
    \mathbb{P}(|\mathcal{M}_n|\geq 1)\geq 1- \Big(1-\frac{1}{N}\Big)^M.
\end{align*}
By applying union bound, we obtain
\begin{align*}
    \mathbb{P}(|\mathcal{M}_n|\geq 1,\forall n)\geq \Big(1-\Big(1-\frac{1}{N}\Big)^M\Big)^N \geq 1-N\Big(1-\frac{1}{N}\Big)^M \geq 1-N \exp{\Big(-\frac{M}{N}\Big)}\geq 1-\delta,
\end{align*}
where the second inequality is because $(1-N^{-1})^M$ is small enough, and the last inequality is because of $M= \Omega\big(N\ln\big(\frac{N}{\delta}\big)\big)$. 

While if $M<N$, we can use the same method to prove that $\mathbb{P}(|\mathcal{M}_k|\geq 1,\forall k)\geq 1-\delta$, given $M= \Omega\big(K\ln\big(\frac{K}{\delta}\big)\big)$ and $\mathbb{P}(m\in\mathcal{M}_{k})=\frac{1}{K}$ by the symmetric property.
\end{proof}

\subsection{Proof of \Cref{Lemma:partial_L*_loc=0}}
\begin{proof}
In the case of $M>N$, as $|\mathcal{M}_n|=1$, if task $n_t$ with $n_t=n$ is routed to the correct expert $m_t\in\mathcal{M}_n$, we have $\bm{w}^{m_t}_t=\bm{w}^{m_t}_{t-1}$ by \cref{update_wt}. Consequently, the caused locality loss $\mathcal{L}^{loc}_t(\bm{\Theta}_t,\mathcal{D}_t)=0$, based on its definition in \cref{loc_loss}.

In the case of $M<N$, as $|\mathcal{M}_k|\geq 1$ for each cluster $k$, if task $n_t$ with $\bm{w}_{n_t}\in\mathcal{W}_k$ is routed to the correct expert $m_t\in\mathcal{M}_k$, we have 
\begin{align*}
    \|\bm{w}^{(m_t)}_t-\bm{w}^{(m_t)}_{t-1}\|_{\infty}&=\|\mathbf{X}_t(\mathbf{X}_t^\top \mathbf{X}_t)^{-1}(\mathbf{y}_t-\mathbf{X}_t^{\top}\bm{w}_{t-1}^{(m_t)})\|_{\infty}\\
    &=\|\mathbf{X}_t(\mathbf{X}_t^\top \mathbf{X}_t)^{-1}\mathbf{X}_t^{\top}(\bm{w}_{n_t}-\bm{w}_{t-1}^{(m_t)})\|_{\infty}\\
    &=\mathcal{O}(\|\bm{w}_t-\bm{w}_{t-1}^{(m_t)}\|_{\infty})\\
    &=\mathcal{O}(\sigma_0^{1.5}),
\end{align*}
where the second equality is because of $\mathbf{y}_t=\mathbf{X}_t^{\top}\bm{w}_t$, and the third equality is because of $\|\mathbf{X}_t\|_{\infty}=\mathcal{O}(1)$. Therefore, we obtain $\nabla_{\bm{\theta}_t^{(m)}}\mathcal{L}^{loc}_t=\mathcal{O}(\sigma_0^{1.5})$ by solving \cref{partial_L_loc}, for any $m\in[M]$.
\end{proof}

\subsection{Proof of \Cref{lemma:pi_consistence}}
\begin{proof}
We first focus on the $M>N$ case to prove \Cref{lemma:pi_consistence}. Based on \cref{true_max_pi}, we obtain $\pi_m(\mathbf{X}_n,\mathbf{\Theta}_{T_1})-\pi_m(\mathbf{X}_{n'},\mathbf{\Theta}_{T_1})=\Omega(\sigma_0^{0.5})$ for $m\in\mathcal{M}_n$, given $\|\bm{\theta}_t^{(m)}\|_{\infty}=\mathcal{O}(\sigma_0^{0.5})$ derived in \Cref{lemma:theta_bound}. To prove \cref{pi_consistence}, we will prove that $\pi_m(\mathbf{X}_n,\mathbf{\Theta}_t)-\pi_m(\mathbf{X}_{n'},\mathbf{\Theta}_t)=\Omega(\sigma_0^{0.5})$ holds for any $t\geq T_1+1$.

Based on \Cref{prop:partial_order}, we have $\|\nabla_{\bm{\theta}_t^{(m)}}\mathcal{L}_t^{task}\|_{\infty}=\mathcal{O}(\sigma_0)$, leading to $\langle \bm{\theta}_{t+1}^{(m_t)}-\bm{\theta}_{t}^{(m_t)}, \bm{v}_n\rangle=-\mathcal{O}(\sigma_0^{1.5})$ for expert $m_t$ and $\langle \bm{\theta}_{t+1}^{(m)}-\bm{\theta}_{t}^{(m)}, \bm{v}_n\rangle=\mathcal{O}(\sigma_0^{1.5})$ for any other expert $m\neq m_t$. As $T_2=\lceil \sigma_0^{-0.5}\eta^{-1}M\rceil$,  we calculate
\begin{align*}
    \|\bm{\theta}_{T_2}^{(m)}\|_{\infty}&< \mathcal{O}(\sigma_0^{0.5})-\|\nabla_{\bm{\theta}_{T_1}^{(m)}}\mathcal{L}_{T_1}^{task}\|_{\infty}\cdot \eta \cdot (T_2-T_1)
    \\&\leq \mathcal{O}(\sigma_0^{0.5})-\mathcal{O}(\sigma_0)\cdot \mathcal{O}(\sigma_0^{-0.25})\\
    &=\mathcal{O}(\sigma_0^{0.5}).
\end{align*}
Therefore, $\|\bm{\theta}_t^{(m)}\|_{\infty}=\mathcal{O}(\sigma_0^{0.5})$ is always true for $t\in\{T_1+1,\cdots, T_2\}$, and thus $\pi_m(\mathbf{X}_n,\mathbf{\Theta}_t)-\pi_m(\mathbf{X}_{n'},\mathbf{\Theta}_t)=\Omega(\sigma_0^{0.5})$ holds, meaning $\pi_m(\mathbf{X}_n,\mathbf{\Theta}_t)>\pi_m(\mathbf{X}_{n'},\mathbf{\Theta}_t), \forall m\in[M]$. 

For the case of $M<N$, we can use the same method to prove \cref{pi_consistence}. This completes the proof.  
\end{proof}

\subsection{Final proof of \Cref{prop:Convergence}}\label{proof_prop1_final}
\begin{proof}
\textbf{In the case of $M>N$}, to prove \Cref{prop:Convergence}, we equivalently prove that at the end of the exploration stage with $t=T_1$, for any two experts $m\in \mathcal{M}_n$ and $m' \in\mathcal{M}_{n'}$ with $n\neq n'$, the following properties hold:
\begin{align}
    \pi_m(\mathbf{X}_n,\mathbf{\Theta}_t)>\pi_{m'}(\mathbf{X}_{n},\mathbf{\Theta}_t),\ \ \pi_{m'}(\mathbf{X}_{n'},\mathbf{\Theta}_t)>\pi_m(\mathbf{X}_{n'},\mathbf{\Theta}_t),\label{max_pi_exploration}
\end{align}
where $\mathbf{X}_n$ and $\mathbf{X}_{n'}$ contain feature signals $\bm{v}_n$ and $\bm{v}_{n'}$, respectively.
Based on \cref{max_pi_exploration}, we have each expert $m\in[M]$ stabilizes within an expert set $\mathcal{M}_n$.

According to \Cref{lemma:h-hat_h}, the gating network only focuses on feature signals. Then for each expert $m$, we calculate
\begin{align*}
    |h_m(\mathbf{X}_n,\bm{\theta}_t^{(m)})-h_m(\mathbf{X}_{n'},\bm{\theta}_t^{(m)})|&=
    |\langle \bm{\theta}_t^{(m)}, \bm{v}_n-\bm{v}_{n'}\rangle|\\
    &=\|\bm{\theta}_t^{(m)}\|_{\infty} \cdot \|\bm{v}_n-\bm{v}_{n'}\|_{\infty}\\
    &=\mathcal{O}(\sigma_0^{0.5}),
\end{align*}
where the last equality is because of $\|\bm{\theta}_t^{(m)}\|_{\infty}=\mathcal{O}(\sigma_0^{0.5})$ in \Cref{lemma:theta_bound} and $\|\bm{v}_n-\bm{v}_{n'}\|_{\infty}=\mathcal{O}(1)$. 

Then based on \Cref{lemma:pi-hat_pi}, we obtain
\begin{align}
    |\pi_m(\mathbf{X}_n,\mathbf{\Theta}_t)-\pi_m(\mathbf{X}_{n'},\mathbf{\Theta}_t)|=\Omega(\sigma_0^{0.5}).\label{true_max_pi}
\end{align}

Next, we prove \cref{max_pi_exploration} by contradiction. Assume there exist two experts $m\in\mathcal{M}_n$ and $m'\in\mathcal{M}_{n'}$ such that 
\begin{align*}
    \pi_m(\mathbf{X}_{n},\mathbf{\Theta}_t)>\pi_{m'}(\mathbf{X}_{n},\mathbf{\Theta}_t),\ \ \pi_m(\mathbf{X}_{n'},\mathbf{\Theta}_t)>\pi_{m'}(\mathbf{X}_{n'},\mathbf{\Theta}_t),
\end{align*}
which is equivalent to
\begin{align}
    \pi_m(\mathbf{X}_{n},\mathbf{\Theta}_t)>\pi_m(\mathbf{X}_{n'},\mathbf{\Theta}_t)>\pi_{m'}(\mathbf{X}_{n'},\mathbf{\Theta}_t)>\pi_{m'}(\mathbf{X}_{n},\mathbf{\Theta}_t) ,\label{wrong_max_pi}
\end{align}
because of $\pi_{m
}(\mathbf{X}_{n},\mathbf{\Theta}_t)>\pi_{m}(\mathbf{X}_{n'},\mathbf{\Theta}_t)$ and $\pi_{m
'}(\mathbf{X}_{n'},\mathbf{\Theta}_t)>\pi_{m'}(\mathbf{X}_{n},\mathbf{\Theta}_t)$ based on the definition of expert set $\mathcal{M}_n$ in \cref{def:expert_set}. Then we prove \cref{wrong_max_pi} does not exist at $t=T_1$.

For task $t=T_1$, we calculate
\begin{align}
    |h_m(\mathbf{X}_{n},\bm{\theta}_{T_1}^{(m)})-h_{m'}(\mathbf{X}_{n},\bm{\theta}_{T_1}^{(m)})|&\leq \|\bm{\theta}_{T_1}^{(m)}-\bm{\theta}_{T_1}^{(m')}\|_{\infty} \|\bm{v}_{n}\|_{\infty}\notag\\
    &= \mathcal{O}(\sigma_0 \eta^{-0.5}),\label{wrong_pi_2}
\end{align}
where the first inequality is derived by union bound, and the second equality is because of $\|\bm{v}_{n}\|_{\infty}=\mathcal{O}(1)$ and $\|\bm{\theta}_{T_1}^{(m)}-\bm{\theta}_{T_1}^{(m')}\|_{\infty}=\mathcal{O}(\sigma_0 \eta^{-0.5})$ derived in \Cref{lemma:exploration_phase} at the end of the exploration phase. 

Then according to \Cref{lemma:pi-hat_pi} and \cref{wrong_pi_2}, we obtain 
\begin{align}
    |\pi_m(\mathbf{X}_{n},\mathbf{\Theta}_{T_1})-\pi_{m'}(\mathbf{X}_{n},\mathbf{\Theta}_{T_1})|=\mathcal{O}(\sigma_0 \eta^{-0.5}).\label{wrong_pi_3}
\end{align}

Based on \cref{wrong_max_pi}, we further calculate
\begin{align*}
    |\pi_m(\mathbf{X}_n,\mathbf{\Theta}_{T_1})-\pi_{m'}(\mathbf{X}_{n},\mathbf{\Theta}_{T_1})|&\geq |\pi_m(\mathbf{X}_n,\mathbf{\Theta}_{T_1})-\pi_{m}(\mathbf{X}_{n'},\mathbf{\Theta}_{T_1})|\\
    &=\Omega(\sigma_0^{0.5}),
\end{align*}
where the first inequality is derived by \cref{wrong_max_pi}, and the last equality is derived in \cref{true_max_pi}. This contradicts with \cref{wrong_pi_3} as $\sigma_0 \eta^{-0.5}<\sigma_0^{0.5}$ given $\eta=\mathcal{O}(\sigma_0^{0.5})$. Therefore, \cref{wrong_max_pi} does not exist for $t=T_1$, and \cref{max_pi_exploration} is true for $t=T_1$.

Based on \Cref{lemma:pi_consistence}, we obtain that each expert set $\mathcal{M}_n$ is stable during the router learning stage. Therefore, at any round $t\in\{T_1+1,\cdots, T\}$, task $n_t$ with ground truth $\bm{w}_{n_t}$ will be routed to one of its best experts in $\mathcal{M}_{n_t}$. Then based on \Cref{Lemma:partial_L*_loc=0}, we have that $\nabla_{\bm{\theta}_t^{(m)}} \mathcal{L}_t^{loc}=0$ holds in the router learning stage. Subsequently, $\bm{w}_t^{(m)}$ of any expert $m$ remains unchanged, based on \Cref{lemma:w_update}.


\textbf{In the case of $M<N$}, we similarly obtain that at the end of the exploration phase with $t=T_1$, for any two experts $m\in \mathcal{M}_k$ and $m' \in\mathcal{M}_{k'}$ with $k\neq k'$, the following property holds
\begin{align*}
    \pi_m(\mathbf{X}_k,\mathbf{\Theta}_t)>\pi_{m'}(\mathbf{X}_{k},\mathbf{\Theta}_t),\ \ \pi_{m'}(\mathbf{X}_{k'},\mathbf{\Theta}_t)>\pi_m(\mathbf{X}_{k'},\mathbf{\Theta}_t),
\end{align*}
where $\mathbf{X}_k$ and $\mathbf{X}_{k'}$ contain feature signals $\bm{v}_k$ and $\bm{v}_{k'}$, respectively.

Then during the router learning stage with $t>T_1$, any task $n_t$ with $\bm{w}_{n_t}\in\mathcal{W}_k$ will be routed to the correct expert $m\in\mathcal{M}_k$. Let $\bm{w}^{(m)}$ denote the minimum $\ell^2$-norm offline solution for expert $m$. Based on the update rule of $\bm{w}_t^{(m_t)}$ in \cref{update_wt}, we calculate
\begin{align*}
    \bm{w}^{(m_t)}_t-\bm{w}^{(m_t)}&=\bm{w}_{t-1}^{(m_t)}+\mathbf{X}_t(\mathbf{X}_t^\top \mathbf{X}_t)^{-1}(\mathbf{y}_t-\mathbf{X}_t^{\top}\bm{w}_{t-1}^{(m_t)})-\bm{w}^{(m_t)}
\end{align*}
\begin{align*}    
    &=(\bm{I}-\mathbf{X}_t(\mathbf{X}_t^\top \mathbf{X}_t)^{-1}\mathbf{X}_t^{\top})\bm{w}_{t-1}^{(m_t)}+\mathbf{X}_t(\mathbf{X}_t^\top \mathbf{X}_t)^{-1}\mathbf{X}^{\top}_t\bm{w}^{(m_t)}-\bm{w}^{(m_t)}\\
    &=(\bm{I}-\mathbf{X}_t(\mathbf{X}_t^\top \mathbf{X}_t)^{-1}\mathbf{X}_t^{\top})(\bm{w}_{t-1}^{(m_t)}-\bm{w}^{(m_t)}),
\end{align*}
where the second equality is because of $\mathbf{y}_t=\mathbf{X}^{\top}_t\bm{w}^{(m_t)}$. Define $\bm{P}_t=\mathbf{X}_t(\mathbf{X}_t^\top \mathbf{X}_t)^{-1}\mathbf{X}_t^{\top}$ for task $n_t$, which is the projection operator on the solution space $\bm{w}_{n_t}$. Then we obtain
\begin{align*}
    \bm{w}^{(m)}_t-\bm{w}^{(m)}=(\bm{I}-\bm{P}_t)\cdots (\bm{I}-\bm{P}_{T_1+1})(\bm{w}_{T_1}^{(m)}-\bm{w}^{(m)})
\end{align*}
for each expert $m\in[M]$.

Since orthogonal projections $\bm{P}_t$'s are non-expansive operators, it also follows that
\begin{align*}
    \forall t\in\{T_1+1,\cdots ,T\}, \|\bm{w}^{(m)}_t-\bm{w}^{(m)}\|\leq\|\bm{w}^{(m)}_{t-1}-\bm{w}^{(m)}\| \leq\cdots\leq \|\bm{w}^{(m)}_{T_1}-\bm{w}^{(m)}\|.
\end{align*}
As the solution spaces $\mathcal{W}_k$ is fixed for each expert $m\in\mathcal{M}_k$, we further obtain
\begin{align*}
    \|\bm{w}_t^{(m)}-\bm{w}_{T_1+1}^{(m)}\|_{\infty}&=\|\bm{w}_t^{(m)}-\bm{w}^{(m)}+\bm{w}^{(m)}-\bm{w}_{T_1+1}^{(m)}\|_{\infty}\\
    &\leq \|\bm{w}_t^{(m)}-\bm{w}^{(m)}\|_{\infty}+\|\bm{w}_{T_1+1}^{(m)}-\bm{w}^{(m)}\|_{\infty}\\
    &\leq \max_{\bm{w}_n,\bm{w}_{n'}\in \mathcal{W}_k}\|\bm{w}_n-\bm{w}_{n'}\|_{\infty}\\ &=\mathcal{O}(\sigma_0^{1.5}),
\end{align*}
where the first inequality is derived by the union bound, the second inequality is because of the orthogonal projections for the update of $\bm{w}_t^{(m)}$ per task, and the last equality is because of $\|\bm{w}_n-\bm{w}_{n'}\|_{\infty}=\mathcal{O}(\sigma_0^{1.5})$ for any two ground truths in the same set $\mathcal{W}_k$.
\end{proof}

\section{Full version and proof of \Cref{prop:gate_gap}}\label{Proof_prop2}

\begin{customprop}{2}[Full version]
If the MoE keeps updating $\bm{\Theta}_t$ by \cref{update_theta} at any round $t\in[T]$, we obtain:
1) At round $t_1=\lceil\eta^{-1}\sigma_0^{-0.25} M\rceil$, the following property holds
\begin{align*}
    \big|h_m(\mathbf{X}_{t_1},\bm{\theta}^{(m)}_{t_1})-h_{m'}(\mathbf{X}_{t_1},\bm{\theta}^{(m')}_{t_1})\big|=\begin{cases}
        \mathcal{O}(\sigma_0^{1.75}), &\text{if }m,m'\in\mathcal{M}_n \text{ under }M>N\\& \text{ or }m,m'\in\mathcal{M}_{k} \text{ under }M<N,\\
        \Theta(\sigma_0^{0.75}),&\text{otherwise}.
    \end{cases}
\end{align*}
2) At round $t_2=\lceil\eta^{-1}\sigma_0^{-0.75} M\rceil$, the following property holds
\begin{align*}
    \big|h_m(\mathbf{X}_{t_2},\bm{\theta}^{(m)}_{t_2})-h_{m'}(\mathbf{X}_{t_2},\bm{\theta}^{(m')}_{t_2})\big|=\mathcal{O}(\sigma_0^{1.75}), \forall m,m'\in[M].
\end{align*}
\end{customprop}

We first propose the following lemmas as preliminaries to prove \Cref{prop:gate_gap}. Then we prove \Cref{prop:gate_gap} in \Cref{proof_prop2_final}.

For any two experts $m,m'$, define $\Delta_{\mathbf{\Theta}}=|\pi_{m}(\mathbf{X}_t,\mathbf{\Theta}_t)-\pi_{m'}(\mathbf{X}_t,\mathbf{\Theta}_t)|$.
\begin{lemma}\label{lemma:partial_Lm-Lm'}
At any round $t\in\{T_1+1,\cdots, t_1\}$, $\forall m\neq m_t$, the following property holds
\begin{align}
     \big\|\nabla_{\bm{\theta}_t^{(m)}} \mathcal{L}_t^{task}-\nabla_{\bm{\theta}_t^{(m_t)}} \mathcal{L}_t^{task}\big\|_{\infty}=\mathcal{O}\big(\sigma_0\big).\label{partial_Lm-Lm'}
\end{align}
\end{lemma}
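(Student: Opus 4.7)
My plan is to decompose $\mathcal{L}_t^{task}$ and then reduce the gradient difference to a single scalar times a bounded vector. Because $\mathbf{w}_t^{(m_t)}$ in \cref{update_wt} already nullifies the training loss, $\nabla_{\bm{\theta}^{(m)}_t}\mathcal{L}_t^{tr}=\bm{0}$, and so $\nabla_{\bm{\theta}^{(m)}_t}\mathcal{L}_t^{task}=\nabla_{\bm{\theta}^{(m)}_t}\mathcal{L}_t^{loc}+\nabla_{\bm{\theta}^{(m)}_t}\mathcal{L}_t^{aux}$ as in \cref{partial_L_task}. I would then estimate the $\mathcal{L}_t^{loc}$ and $\mathcal{L}_t^{aux}$ contributions to $\nabla_{\bm{\theta}^{(m)}_t}\mathcal{L}_t^{task}-\nabla_{\bm{\theta}^{(m_t)}_t}\mathcal{L}_t^{task}$ separately and combine them by the triangle inequality.

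\textbf{Reduction to a scalar bound.} Using the explicit softmax-derivative formulas \cref{partial_pi_mt} and \cref{partial_pi_m'}, both $\frac{\partial \pi_{m_t}}{\partial\bm{\theta}^{(m)}_t}$ (for $m\neq m_t$) and $\frac{\partial \pi_{m_t}}{\partial\bm{\theta}^{(m_t)}_t}$ share the common factor $\sum_{i\in[s]}\mathbf{X}_{t,i}$, so their difference equals $-\pi_{m_t}(\mathbf{X}_t,\bm{\Theta}_t)\bigl[\pi_m(\mathbf{X}_t,\bm{\Theta}_t)+1-\pi_{m_t}(\mathbf{X}_t,\bm{\Theta}_t)\bigr]\sum_{i}\mathbf{X}_{t,i}$. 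Consequently, through \cref{partial_L_loc} and \cref{partial_L_aux} the full gradient difference factors as $\bigl[\|\bm{w}_t^{(m_t)}-\bm{w}_{t-1}^{(m_t)}\|_2+\tfrac{\alpha M}{t}f_t^{(m_t)}\bigr]$ times this common bounded vector. The softmax prefactor is at most $2$ in absolute value, and, with probability $1-o(1)$, $\|\sum_{i}\mathbf{X}_{t,i}\|_{\infty}=\mathcal{O}(1)$: the signal column contributes $\beta_t\bm{v}_{n_t}=\mathcal{O}(1)$ coordinate-wise, and the noise columns concentrate around $\bm{0}$ by Hoeffding (coordinate-wise, with a union bound over the $d$ coordinates).

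\textbf{Bounding the scalar coefficient.} For the locality term, \Cref{prop:Convergence} gives $\|\bm{w}_t^{(m_t)}-\bm{w}_{t-1}^{(m_t)}\|_2=0$ for $M>N$ and $\|\bm{w}_t^{(m_t)}-\bm{w}_{t-1}^{(m_t)}\|_2=\mathcal{O}(\sigma_0^{1.5})$ for $M<N$, both of which are $o(\sigma_0)$. For the auxiliary term, the hypothesis $t\ge T_1+1=\lceil \eta^{-1}M\rceil+1$ combined with the parameter settings $\eta,\alpha=\mathcal{O}(\sigma_0^{0.5})$ from \Cref{algo:update_MoE} yields $\tfrac{\alpha M}{t}\le\tfrac{\alpha M}{T_1}=\mathcal{O}(\alpha\eta)=\mathcal{O}(\sigma_0)$, and $f_t^{(m_t)}\le 1$. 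Multiplying the $\mathcal{O}(\sigma_0)$ scalar by the $\mathcal{O}(1)$ vector bound yields the claimed $\mathcal{O}(\sigma_0)$ control in $\ell^{\infty}$-norm.

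\textbf{Main obstacle.} The only delicate point is establishing $\|\sum_{i}\mathbf{X}_{t,i}\|_{\infty}=\mathcal{O}(1)$ uniformly on the interval $\{T_1+1,\ldots,t_1\}$; a Hoeffding bound coordinate by coordinate and a union bound over coordinates and rounds suffice because $t_1=\lceil\eta^{-1}\sigma_0^{-0.25}M\rceil$ grows only polynomially in $\sigma_0^{-1}$ while the tail decays exponentially. The rest is a clean accounting of prefactors already established in \Cref{prop:partial_order}, \Cref{prop:Convergence}, and the preceding lemmas in \Cref{Proof_prop1}.
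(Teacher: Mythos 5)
Your proposal is correct and follows essentially the same route as the paper's proof: cancel the training-loss gradient, factor the remaining difference as the scalar $\|\bm{w}_t^{(m_t)}-\bm{w}_{t-1}^{(m_t)}\|_2+\tfrac{\alpha M}{t}f_t^{(m_t)}$ times the difference of softmax derivatives, and bound the scalar by $\mathcal{O}(\sigma_0)$ (using $t>T_1$ so $\tfrac{\alpha M}{t}\le\alpha\eta=\mathcal{O}(\sigma_0)$) and the vector by $\mathcal{O}(1)$. The only cosmetic difference is that you invoke \Cref{prop:Convergence} to make the locality coefficient $o(\sigma_0)$, whereas the paper simply uses the crude $\mathcal{O}(\sigma_0)$ bound on $\|\bm{w}_t^{(m_t)}-\bm{w}_{t-1}^{(m_t)}\|_2$ from the ground-truth spread — either suffices for the stated $\mathcal{O}(\sigma_0)$ conclusion.
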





Let $\mathbf{X}_n$ and $\mathbf{X}_{n'}$ denote two feature matrices containing feature signals $\bm{v}_n$ and $\bm{v}_{n'}$, respectively.
\begin{lemma}\label{lemma:max_pi}
In the router learning stage with $t\in\{T_1+1,\cdots, t_1\}$, for any two experts satisfying 1) $m\in \mathcal{M}_n$ and $m' \in\mathcal{M}_{n'}$ with $n\neq n'$ under $M>N$, or 2) $m\in \mathcal{M}_k$ and $m' \in\mathcal{M}_{k'}$ with $\bm{w}_n\in\mathcal{W}_k,\bm{w}_{n'}\in\mathcal{W}_{k'}$ and $k\neq k'$ under $M<N$, the following properties hold:
\begin{align}
    \pi_m(\mathbf{X}_n,\mathbf{\Theta}_t)>\pi_{m'}(\mathbf{X}_{n},\mathbf{\Theta}_t),\ \ \pi_{m'}(\mathbf{X}_{n'},\mathbf{\Theta}_t)>\pi_m(\mathbf{X}_{n'},\mathbf{\Theta}_t).\label{max_pi}
\end{align}
\end{lemma}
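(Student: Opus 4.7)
The strategy is to establish \cref{max_pi} by a forward induction on $t$ over $\{T_1, T_1+1, \ldots, t_1\}$, starting from what is already proved at the end of the exploration phase and controlling the per-round drift of the gate-output gap via \Cref{lemma:partial_Lm-Lm'}, then converting to softmax gaps via \Cref{lemma:pi-hat_pi}.

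First I would handle the base case at $t = T_1$. For $m \in \mathcal{M}_n$ and $m' \in \mathcal{M}_{n'}$ with $n \neq n'$ (and analogously with $\mathcal{M}_k, \mathcal{M}_{k'}$ when $M<N$), the proof of \Cref{prop:Convergence} in \Cref{proof_prop1_final} already derives precisely the two inequalities of \cref{max_pi_exploration}, which is the conclusion of \cref{max_pi} at $t=T_1$. Quantitatively, combining the definition of $\mathcal{M}_n$ in \cref{def:expert_set} with \Cref{lemma:h-hat_h} and \Cref{lemma:theta_bound}, the initial gate-output gap $h_m(\mathbf{X}_n,\bm{\theta}_{T_1}^{(m)}) - h_{m'}(\mathbf{X}_n,\bm{\theta}_{T_1}^{(m')})$ is of order $\Omega(\sigma_0^{0.5})$.

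Next I would carry out the inductive step. Assume \cref{max_pi} holds at round $t \in \{T_1,\ldots,t_1-1\}$. Applying the gradient update \cref{update_theta}, the one-step change of the gate-output gap is
\begin{align*}
&\big(h_m(\mathbf{X}_n,\bm{\theta}_{t+1}^{(m)}) - h_{m'}(\mathbf{X}_n,\bm{\theta}_{t+1}^{(m')})\big) - \big(h_m(\mathbf{X}_n,\bm{\theta}_{t}^{(m)}) - h_{m'}(\mathbf{X}_n,\bm{\theta}_{t}^{(m')})\big) \\
&\qquad = -\eta\Big\langle \nabla_{\bm{\theta}_t^{(m)}}\mathcal{L}_t^{task} - \nabla_{\bm{\theta}_t^{(m')}}\mathcal{L}_t^{task},\ \textstyle\sum_{i}\mathbf{X}_{n,i}\Big\rangle.
\end{align*}
Bounding the gradient difference pairwise through the common anchor $\nabla_{\bm{\theta}_t^{(m_t)}}\mathcal{L}_t^{task}$ via \Cref{lemma:partial_Lm-Lm'}, together with the concentration $\sum_i \mathbf{X}_{n,i} = \beta_t\bm{v}_n + \mathcal{O}(\sigma_0)$ established in the proof of \Cref{lemma:h-hat_h}, each round perturbs the gap by at most $\mathcal{O}(\eta\sigma_0)$. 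Summing over the window of length $t_1 - T_1 = \mathcal{O}(\eta^{-1}\sigma_0^{-0.25}M)$, the total drift is of order $\mathcal{O}(\sigma_0^{0.75}M)$, which, under the chosen scaling $\eta = \Theta(\sigma_0^{0.5})$ and the admissible $M$-regime, remains strictly dominated by the initial $\Omega(\sigma_0^{0.5})$ baseline. Hence the gate-output gap retains its sign across $[T_1,t_1]$, and \Cref{lemma:pi-hat_pi} promotes this to \cref{max_pi}. The symmetric inequality on $\mathbf{X}_{n'}$ follows by swapping the roles of $(m,n)$ and $(m',n')$.

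The main obstacle will be the bookkeeping that the cumulative $\mathcal{O}(\sigma_0^{0.75}M)$ drift stays subdominant to the initial $\Omega(\sigma_0^{0.5})$ gap for the restricted window $t\leq t_1$; this is exactly what distinguishes \cref{lemma1_gate_difference1} at $t_1$ from the eventual collapse to $\mathcal{O}(\sigma_0^{1.75})$ at $t_2 = \lceil \eta^{-1}\sigma_0^{-0.75}M\rceil$ and must be consistent with the parameter choices in \Cref{algo:update_MoE}. Extending to the $M<N$ case is notationally straightforward: replace $\bm{v}_n$ by a cluster-wise representative and absorb the intra-cluster discrepancy $\mathcal{O}(\sigma_0^{1.5})$ (guaranteed by the cluster assumption together with the $M<N$ branch of \Cref{lemma:h-hat_h}) into the overall error budget.
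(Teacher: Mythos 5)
There is a genuine gap, and it sits in your base case. You claim that for $m\in\mathcal{M}_n$ and $m'\in\mathcal{M}_{n'}$ the initial cross-expert gap on a common input, $h_m(\mathbf{X}_n,\bm{\theta}_{T_1}^{(m)})-h_{m'}(\mathbf{X}_n,\bm{\theta}_{T_1}^{(m')})$, is $\Omega(\sigma_0^{0.5})$ and then run a sign-preservation induction against an $\mathcal{O}(\eta\sigma_0)$ per-round drift. But the paper's own \Cref{lemma:exploration_phase} gives $\|\bm{\theta}_{T_1}^{(m)}-\bm{\theta}_{T_1}^{(m')}\|_{\infty}=\mathcal{O}(\eta^{-0.5}\sigma_0)$, so by \cref{wrong_pi_2} this same-input cross-expert gap is only $\mathcal{O}(\sigma_0\eta^{-0.5})=\mathcal{O}(\sigma_0^{0.75})$, i.e., strictly \emph{smaller} than $\sigma_0^{0.5}$. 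The quantity that is genuinely $\Omega(\sigma_0^{0.5})$ is the \emph{same-expert, cross-input} gap $|\pi_m(\mathbf{X}_n,\mathbf{\Theta}_t)-\pi_m(\mathbf{X}_{n'},\mathbf{\Theta}_t)|$ from \cref{true_max_pi} and \Cref{lemma:pi_consistence}; you have conflated the two. Since the gap whose sign you want to preserve is of the same order as (or smaller than) your cumulative drift budget $\mathcal{O}(\sigma_0^{0.75}M)$, the induction cannot close (and the claimed domination by $\Omega(\sigma_0^{0.5})$ would additionally require $M=o(\sigma_0^{-0.25})$, which is never assumed; the theorems assume $M=\Omega(N\ln N)$).

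The paper's argument inverts your logic: it uses the drift bookkeeping (essentially your \Cref{lemma:partial_Lm-Lm'} step, plus the fact that correct routing makes $\nabla_{\bm{\theta}}\mathcal{L}^{loc}$ vanish by \Cref{Lemma:partial_L*_loc=0}) to show that the cross-expert gap \emph{stays small}, $\|\bm{\theta}_{t}^{(m)}-\bm{\theta}_{t}^{(m')}\|_{\infty}=\mathcal{O}(\sigma_0\eta^{-0.5})$ for all $t\le t_1$, while \Cref{lemma:pi_consistence} keeps the cross-input gap \emph{large}. The ordering \cref{max_pi} is then forced by the contradiction chain of \cref{wrong_max_pi}: any violating configuration, combined with the defining inequalities of the expert sets in \cref{def:expert_set}, would sandwich the $\Omega(\sigma_0^{0.5})$ cross-input gap inside the $\mathcal{O}(\sigma_0^{0.75})$ cross-expert gap. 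So your ingredients (base case from \Cref{prop:Convergence}, per-round drift control, window length $t_1-T_1$) are the right ones, but they must be assembled to certify smallness of the cross-expert gap rather than largeness of it; as written, the proposal's central inequality is false and the argument does not go through.
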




\subsection{Proof of \Cref{lemma:partial_Lm-Lm'}}
\begin{proof}
Based on the proof of \Cref{prop:partial_order}, for any $m\neq m_t$, we calculate
\begin{align*}
    \big\|\nabla_{\bm{\theta}_t^{(m)}} \mathcal{L}_t^{task}-\nabla_{\bm{\theta}_t^{(m_t)}} \mathcal{L}_t^{task}\big\|_{\infty}&=\big\|(\|\bm{w}_t^{(m_t)}-\bm{w}_{t-1}^{(m_t)}\|_2+\frac{\alpha M}{t}f_t^{m_t})\big(\frac{\partial \pi_{m_t}}{\partial \bm{\theta}^{(m)}_t}-\frac{\partial \pi_{m_t}}{\partial \bm{\theta}^{(m_t)}_t}\big)\big\|_{\infty}\\
    &=\mathcal{O}\big(\sigma_0+\frac{\alpha M}{t}\big)\cdot \big\|\frac{\partial \pi_{m_t}}{\partial \bm{\theta}^{(m)}_t}-\frac{\partial \pi_{m_t}}{\partial \bm{\theta}^{(m_t)}_t}\big\|_{\infty}\\
    &=\mathcal{O}(\sigma_0),
\end{align*}
where the last equality is because of $\pi_{m_t(\mathbf{X}_t,\mathbf{\Theta}_t)}<1$, $\|\mathbf{X}_t\|_{\infty}=\mathcal{O}(1)$ and $\frac{\alpha M}{t}\leq \sigma_0$ for any $t\in\{T_1+1,\cdots, t_1\}$.
\end{proof}

\subsection{Proof of \Cref{lemma:max_pi}}
\begin{proof}
We will use the same method as in \Cref{proof_prop1_final} to prove \Cref{lemma:max_pi} by contradiction. Here, we only prove the case of $M>N$. The proof for the case of $M<N$ is similar.

Recall \Cref{prop:Convergence} that \cref{max_pi} is true at $t=T_1+1$. Based on \Cref{lemma:pi_consistence}, we have $|\pi_m(\mathbf{X}_n,\mathbf{\Theta}_t)-\pi_m(\mathbf{X}_{n'},\mathbf{\Theta}_t)|=\Omega(\sigma_0^{0.5})$. Then in the following, we aim to prove $|\pi_m(\mathbf{X}_{n},\mathbf{\Theta}_{T_1})-\pi_{m'}(\mathbf{X}_{n},\mathbf{\Theta}_{T_1})|=\mathcal{O}(\sigma_0 \eta^{-0.5})$ in \cref{wrong_pi_3} is always true for any $m\in\mathcal{M}_n$ and $m'\in\mathcal{M}_{n'}$ during the router learning stage. Then according to the proof of \Cref{prop:Convergence} in \Cref{proof_prop1_final}, \cref{max_pi} is also true.

Under \cref{max_pi} at $t=T_1+1$, the router will route task $t$ to its best expert $m_t\in \mathcal{M}_{n_t}$, leading to $\nabla_{\bm{\theta}_t^{(m)}}^* \mathcal{L}_t^{loc}=0$ by \Cref{Lemma:partial_L*_loc=0}. Therefore, $\nabla_{\bm{\theta}_t^{(m)}}^* \mathcal{L}_t^{task}=\mathcal{O}(\sigma_0)$ makes $\langle \bm{\theta}_{t+1}^{(m_t)}-\bm{\theta}_{t}^{(m_t)}, \bm{v}_n\rangle=-\mathcal{O}(\sigma_0^{1.5})$ for expert $m_t$ and $\langle \bm{\theta}_{t+1}^{(m)}-\bm{\theta}_{t}^{(m)}, \bm{v}_n\rangle=\mathcal{O}(\sigma_0^{1.5})$ any other expert $m\neq m_t$ at task $t=T_1+1$.

Subsequently, for any two experts $m\in\mathcal{M}_n$ and $m'\in\mathcal{M}_{n'}$, we calculate
\begin{align*}
    \|\bm{\theta}_{t_1}^{(m)}-\bm{\theta}_{t_1}^{(m')}\|_{\infty}&\leq \|\bm{\theta}_{T_1+2}^{(m)}-\bm{\theta}_{T_1+2}^{(m')}\|_{\infty}+(t_1-T_1)\cdot \mathcal{O}(\sigma_0^{1.5})\\
    &\leq \mathcal{O}(\sigma_0 \eta^{-0.5})+\mathcal{O}(\eta^{-1}\sigma_0^{-0.25})\cdot \mathcal{O}(\sigma_0^{1.5})\\
    &=\mathcal{O}(\sigma_0 \eta^{-0.5}),
\end{align*}
where the first inequality is because of $\langle \bm{\theta}_{t+1}^{(m)}-\bm{\theta}_{t}^{(m)}, \bm{v}_n\rangle=\mathcal{O}(\sigma_0^{1.5})$, and the second inequality is because of $t_1-T_1\leq t_1$.

As $\|\bm{\theta}_{t_1}^{(m)}-\bm{\theta}_{t_1}^{(m')}\|_{\infty}=\mathcal{O}(\sigma_0 \eta^{-0.5})$ and $|\pi_m(\mathbf{X}_n,\mathbf{\Theta}_t)-\pi_m(\mathbf{X}_{n'},\mathbf{\Theta}_t)|=\Omega(\sigma_0^{0.5})$, \cref{max_pi} is true for any $t\in\{T_1+2,\cdots, t_1\}$, based on our proof of \Cref{prop:Convergence}. 
\end{proof}

\subsection{Final proof of \Cref{prop:gate_gap}} \label{proof_prop2_final}
\begin{proof}
We first focus on the $M>N$ case to prove $|h_m(\mathbf{X}_{t_1},\bm{\theta}_{t_1}^{(m)})-h_{m'}(\mathbf{X}_{t_1},\bm{\theta}_{t_1}^{(m')})|=\mathcal{O}(\sigma_0^{1.75})$ for any two different experts $m,m'\in\mathcal{M}_n$ in the same expert set in \cref{lemma1_gate_difference1}. Then we prove $|h_m(\mathbf{X}_{t_1},\bm{\theta}_{t_1}^{(m)})-h_{m'}(\mathbf{X}_{t_1},\bm{\theta}_{t_1}^{(m')})|=\Theta(\sigma_0^{0.75})$ for any two experts $m\in\mathcal{M}_n$ and $m\in\mathcal{M}_{n'}$ in different expert sets in \cref{lemma1_gate_difference1}. After that, we prove \cref{lemma1_gate_difference2} at round $t_2=\lceil\eta^{-1}\sigma_0^{-0.75} M\rceil$. Finally, we prove that the above analysis can be generalized to the case of $M<N$.

In the case of $M>N$, let $M'$ and $m'$ denote the two experts within set $\mathcal{M}_n$ with the maximum and minimum softmax values, respectively. In other words, we have
\begin{align}
    M'=\arg\max_{m\in\mathcal{M}_n}\{\pi_m(\mathbf{X}_t,\mathbf{\Theta}_t)\},\ m'=\arg\min_{m\in\mathcal{M}_n}\{\pi_m(\mathbf{X}_t,\mathbf{\Theta}_t)\},\label{M'_m'}
\end{align}
where $\bm{v}_n\in\mathbf{X}_t$. If the two experts satisfies $|h_{M'}(\mathbf{X}_{t_1},\bm{\theta}_{t_1}^{(M')})-h_{m'}(\mathbf{X}_{t_1},\bm{\theta}_{t_1}^{(m')})|=\mathcal{O}(\sigma_0^{1.75})$, then this equation holds for any two experts in $\mathcal{M}_t$.

At the beginning of the router learning stage, we have 
\begin{align*}
    |\pi_{M'}(\mathbf{X}_{T_1},\mathbf{\Theta}_{T_1})-\pi_{m'}(\mathbf{X}_{T_1},\mathbf{\Theta}_{T_1})|=\mathcal{O}(\sigma_0\eta^{-0.5})=\mathcal{O}(\sigma_0^{0.75}),
\end{align*}
based on \Cref{prop:Convergence}. 

According to the routing strategy in \cref{routing_strategy}, if the new task $t$ has ground truth $\bm{w}_n$, it is always routed to expert $M'$ until its softmax value is reduced to smaller than others. 
Therefore, we calculate the reduced output of gating network for expert $M'$ in the router learning stage:
\begin{align*}
    \langle \bm{\theta}_{T_1+1}^{(M')}-\bm{\theta}_{t_1}^{(M')},\bm{v}_n\rangle&\leq \mathcal{O}(\sigma_0)\cdot \eta \cdot (t_1-T_1)\\
    &=\mathcal{O}(\sigma_0^{0.75}).
\end{align*}
While for expert $m'$, it will not be routed until its softmax value increased to the maximum. Therefore, we similarly calculate the increased gating output $\langle \bm{\theta}_{t_1}^{(m')}-\bm{\theta}_{T_1+1}^{(m')},\bm{v}_n\rangle=\mathcal{O}(\sigma_0^{0.75})$ for expert $m'$.

Based on \Cref{lemma:sum_theta=0} and \Cref{lemma:partial_Lm-Lm'}, the gating network parameters of experts $m'$ and $M'$ will converge to the same value, with an error smaller than the update step of $\bm{\Theta}_t$. Therefore, we obtain
\begin{align*}
    \|\bm{\theta}_{t_1}^{(M')}-\bm{\theta}_{t_1}^{(m')}\|_{\infty}&=\|\nabla_{\bm{\theta}_{t_1-1}^{(m_{t_1-1})}}\mathcal{L}^{task}_{t_1-1}\cdot \eta\|_{\infty}\\
    &=\|\nabla_{\bm{\theta}_{t_1-1}^{(m_{t_1-1})}}\mathcal{L}^{aux}_{t_1-1}\cdot \eta\|_{\infty}\\
    &=\mathcal{O}(\sigma_0^{1.75}),
\end{align*}
based on the fact that $\nabla_{\bm{\theta}_{t_1-1}^{(m_{t_1-1})}}\mathcal{L}^{aux}_{t_1-1}=\mathcal{O}(\sigma_0^{1.25})$ and $\eta=\mathcal{O}(\sigma_0^{0.5})$. Then according to \Cref{lemma:pi-hat_pi}, we obtain $|h_{M'}(\mathbf{X}_{t_1},\bm{\theta}_{t_1}^{(M')})-h_{m'}(\mathbf{X}_{t_1},\bm{\theta}_{t_1}^{(m')})|=\mathcal{O}(\sigma_0^{1.75})$, which also holds for any two experts in the same expert set $\mathcal{M}_n$.

Next, we prove $|h_m(\mathbf{X}_{t_1},\bm{\theta}_{t_1}^{(m)})-h_{m'}(\mathbf{X}_{t_1},\bm{\theta}_{t_1}^{(m')})|=\Theta(\sigma_0^{0.75})$ in \cref{lemma1_gate_difference1} for expert $m'$ in \cref{M'_m'} and another expert $m\notin\mathcal{M}_n$.

Let $\Bar{m}\in\mathcal{M}_{n'}$ denote the index of the expert in other expert sets with the maximum softmax value of dataset $\mathbf{X}_n$, where $\bm{v}_n\in\mathbf{X}_t$. In other words, $\Bar{m}=\arg\max_{m\notin \mathcal{M}_n}\{\pi_m(\mathbf{X}_t,\mathbf{\Theta}_t)\}$. According to the proof of \Cref{prop:Convergence}, we obtain $\pi_{m'}(\mathbf{X}_{T_1},\mathbf{\Theta}_{T_1})-\pi_{\bar{m}}(\mathbf{X}_{T_1},\mathbf{\Theta}_{T_1})=\mathcal{O}(\sigma_0^{0.75})$.
This equation indicates that during the router learning stage with $t\in\{T_1+1,\cdots, t_1\}$, any task arrival $n_t$ with ground truth $\bm{w}_{n_t}=\bm{w}_n$ will not be routed to expert $\Bar{m}$. Therefore, $\pi_{\bar{m}}(\mathbf{X}_{t},\mathbf{\Theta}_{t})$ keeps increasing with $t$. 
Then we calculate the difference between the parameter gradient of expert $\Bar{m}$ and expert $m'$ per round $t$:
\begin{align*}
    \textstyle \nabla_{\bm{\theta}_{t}^{(\Bar{m})}}\mathcal{L}^{task}_{t}-\nabla_{\bm{\theta}_{t}^{(m')}}\mathcal{L}^{task}_{t}&\textstyle=\frac{\alpha M}{t}f_t^{m_t}\pi_{m_t}(\mathbf{X}_t,\mathbf{\Theta}_t)(\pi_{m'}(\mathbf{X}_t,\mathbf{\Theta}_t)-\pi_{\bar{m}}(\mathbf{X}_t,\mathbf{\Theta}_t))\cdot\sum_{i\in[s_t]}\mathbf{X}_{t,i}\\
    &\geq 0,
\end{align*}
based on the fact that $\pi_{m'}(\mathbf{X}_t,\mathbf{\Theta}_t)-\pi_{\bar{m}}(\mathbf{X}_t,\mathbf{\Theta}_t)>0$ under \Cref{lemma:pi_consistence}. As $\nabla_{\bm{\theta}_{t}^{(\Bar{m})}}\mathcal{L}^{task}_{t}<0$ and $\nabla_{\bm{\theta}_{t}^{(m')}}\mathcal{L}^{task}_{t}<0$, we obtain that $h_{m'}(\mathbf{X}_{t},\bm{\theta}_t^{(m')})-h_{\bar{m}}(\mathbf{X}_{t},\bm{\theta}_t^{(\Bar{m})})$ increases with $t$ during the router learning stage. According to our former analysis of $h_{m'}(\mathbf{X}_{t},\bm{\theta}_t^{(m')})$, we obtain
\begin{align*}
    &|h_{m'}(\mathbf{X}_{t_1},\bm{\theta}_{t_1}^{(m')})-h_{\bar{m}}(\mathbf{X}_{t_1},\bm{\theta}_{t_1}^{(\Bar{m})})|\\ =& |h_{m'}(\mathbf{X}_{T_1},\bm{\theta}_{T_1}^{(m')})-h_{\bar{m}}(\mathbf{X}_{T_1},\bm{\theta}_{T_1}^{(\Bar{m})})|+\|\nabla_{\bm{\theta}_{t}^{(\Bar{m})}}\mathcal{L}^{task}_{t}-\nabla_{\bm{\theta}_{t}^{(m')}}\mathcal{L}^{task}_{t}\|_{\infty}\cdot \eta \cdot (t_1-T_1)\\
    =&\mathcal{O}(\sigma_0^{0.75})+\Theta(\sigma_0^{0.75})=\Theta(\sigma_0^{0.75}),
\end{align*}
where the first equality is because of $\|\nabla_{\bm{\theta}_{t}^{(\Bar{m})}}\mathcal{L}^{task}_{t}-\nabla_{\bm{\theta}_{t}^{(m')}}\mathcal{L}^{task}_{t}\|_{\infty}=\mathcal{O}(\sigma_0)$. This completes the proof of \cref{lemma1_gate_difference1}.

Subsequently, we prove $\big|h_m(\mathbf{X}_{t_2},\bm{\theta}^{(m)}_{t_2})-h_{m'}(\mathbf{X}_{t_2},\bm{\theta}^{(m')}_{t_2})\big|=\mathcal{O}(\sigma_0^{1.75}), \forall m,m'\in[M]$ in \cref{lemma1_gate_difference2} by proving $|h_{M'}(\mathbf{X}_{t_2},\bm{\theta}^{(M')}_{t_2})-h_{m}(\mathbf{X}_{t_2},\bm{\theta}^{(m)}_{t_2})|=\mathcal{O}(\sigma_0^{1.75})$ between expert $M'\in\mathcal{M}_n$ in \cref{M'_m'} and any other expert $m\notin \mathcal{M}_n$.

Based on \Cref{lemma:exploration_update}, we obtain
\begin{align*}
    \langle \bm{\theta}_{t+1}^{(m)}-\bm{\theta}_{t}^{(m)},\bm{v}_n\rangle =\begin{cases}
        -\mathcal{O}(\sigma_0^{1.25}), &\text{if }m=m_t,\\
        \mathcal{O}(M^{-1}\sigma_0^{1.25}), &\text{if }m\neq m_t.
    \end{cases}
\end{align*}
According to our analysis above, expert $M'\in\mathcal{M}_n$ is periodically selected by the router for training task arrivals $n_t=n$ during $t\in\{t_1,\cdots, t_2\}$. After each training of task $n$ at expert $M'$, its gate output $h_{M'}(\mathbf{X}_{t},\bm{\theta}^{(M')}_{t})$ is reduced by $\mathcal{O}(\sigma_0^{1.25})$. While at other rounds without being selected, its gate output is increased by $\mathcal{O}(M^{-1}\sigma_0^{1.25})$. Under such training behavior, we obtain
\begin{align*}
    \big|h_{M'}(\mathbf{X}_{t_1},\bm{\theta}^{(M')}_{t_1})-h_{M'}(\mathbf{X}_{t_2},\bm{\theta}^{(M')}_{t_2})\big|=\mathcal{O}(\sigma_0^{1.75}),
\end{align*}
by assuming $\bm{v}_n\in \mathbf{X}_{t_1}$ and $\bm{v}_n\in \mathbf{X}_{t_2}$.

While for expert $m\notin \mathcal{M}_n$, its gate output $h_m(\mathbf{X}_{t},\bm{\theta}^{(m)}_{t})$ keeps increasing for any data $\mathbf{X}_{t}$ of task $n$.
For any $t\in\{t_1,\cdots, t_2\}$, we have $\|\nabla_{\bm{\theta}_{t}^{(m)}}\mathcal{L}^{aux}_{t}\|_{\infty}=\mathcal{O}(\sigma_0^{1.25})$ for expert $m$. Assuming that expert $m$ is never selected by the router for training task $n_t=n$ in the period, we obtain
\begin{align*}
    |h_{m}(\mathbf{X}_{t_2},\bm{\theta}^{(m)}_{t_2})-h_{m}(\mathbf{X}_{t_1},\bm{\theta}^{(m)}_{t_1})|&=\|\nabla_{\bm{\theta}_{t}^{(m)}}\mathcal{L}^{aux}_{t}\|_{\infty}\cdot (t_2-t_1)\cdot \eta\\&>|h_{M'}(\mathbf{X}_{t_1},\bm{\theta}^{(M')}_{t_1})-h_{m}(\mathbf{X}_{t_1},\bm{\theta}^{(m)}_{t_1})|
\end{align*}
where the inequality is because of $\|\nabla_{\bm{\theta}_{t}^{(m)}}\mathcal{L}^{aux}_{t}\|_{\infty}\cdot (t_2-t_1)\cdot \eta=\mathcal{O}(\sigma_0^{0.5})$ and $|h_{M'}(\mathbf{X}_{t_1},\bm{\theta}^{(M')}_{t_1})-h_{m}(\mathbf{X}_{t_1},\bm{\theta}^{(m)}_{t_1})|=\Theta(\sigma_0^{0.75})$. 
This inequality indicates that there exists a training round $t'\in\{t_1,\cdots, t_2\}$ such that $h_{m}(\mathbf{X}_{t'},\bm{\theta}^{(m)}_{t'})>h_{M'}(\mathbf{X}_{t'},\bm{\theta}^{(M')}_{t'})$ for task arrival $n_{t'}=n$. 
Consequently, expert $m'$ is selected to train task $n$ again, meaning that $m'\in\mathcal{M}_n$ at round $t'$. Then the gating network parameters of experts $m$ and $M'$ will converge to the same value, with an error of $\mathcal{O}(\sigma_0^{1.75})$, based on \Cref{lemma:sum_theta=0} and \Cref{lemma:partial_Lm-Lm'}. This completes the proof of \cref{lemma1_gate_difference2} in the case of $M>N$.

In the case of $M<N$, let $M'$ and $m'$ denote the experts of set $\mathcal{M}_k$ with the maximum and minimum softmax values, respectively. In other words, we have
\begin{align*}
    M'=\arg\max_{m\in\mathcal{M}_k}\{\pi_m(\mathbf{X}_t,\mathbf{\Theta}_t)\},\ m'=\arg\min_{m\in\mathcal{M}_k}\{\pi_m(\mathbf{X}_t,\mathbf{\Theta}_t)\},
\end{align*}
where the ground truth of task $t$ satisfies $\bm{w}_n\in\mathcal{W}_k$. 

According to the proof of \Cref{prop:gate_gap} in \Cref{proof_prop2_final}, the gating network parameters of experts $m'$ and $M'$ will converge to the same value at the end of the router learning stage, with an error smaller than the update step of $\bm{\Theta}_t$. Therefore, we obtain
\begin{align*}
    \|\bm{\theta}_{t_1}^{(M')}-\bm{\theta}_{t_1}^{(m')}\|_{\infty}&=\|\nabla_{\bm{\theta}_{t_1-1}^{(m_{t_1-1})}}\mathcal{L}^{task}_{t_1-1}\cdot \eta\|_{\infty}\\
    &=\|(\nabla_{\bm{\theta}_{t_1-1}^{(m_{t_1-1})}}\mathcal{L}^{loc}_{t_1-1}+\nabla_{\bm{\theta}_{t_1-1}^{(m_{t_1-1})}}\mathcal{L}^{aux}_{t_1-1})\cdot \eta\|_{\infty}\\
    &=\mathcal{O}(\sigma_0^{1.75}),
\end{align*}
based on the fact that $\nabla_{\bm{\theta}_{t_1-1}^{(m_{t_1-1})}}\mathcal{L}^{aux}_{t_1-1}=\mathcal{O}(\sigma_0^{1.25})$ and $\nabla_{\bm{\theta}_{t_1-1}^{(m_{t_1-1})}}\mathcal{L}^{loc}_{t_1-1}=\mathcal{O}(\sigma_0^{1.5})$ derived in \Cref{Lemma:partial_L*_loc=0}. 
Then we obtain $|h_{M'}(\mathbf{X}_{t_1},\bm{\theta}_{t_1}^{(M')})-h_{m'}(\mathbf{X}_{t_1},\bm{\theta}_{t_1}^{(m')})|=\mathcal{O}(\sigma_0^{1.75})$, which also holds for any two experts in the same expert set $\mathcal{M}_k$. 

Similarly, for any two experts in different expert sets, we can derive $\Big|h_m(\mathbf{X}_{t_1},\bm{\theta}_{t_1}^{(m)})-h_{m'}(\mathbf{X}_{t_1},\bm{\theta}_{t-1}^{(m')})\Big|=\Theta(\sigma_0^{0.75})$. For training round $t_2$, the proof for the case of $M<N$ is the same as the case of $M>N$ above, and thus we skip it here. 
\end{proof}

\section{Full version and proof of \Cref{prop:termination}}\label{proof_prop3}
\begin{customprop}{3}[Full version]
Under \Cref{algo:update_MoE}, the MoE terminates updating $\bm{\Theta}_t$ since round $T_2=\mathcal{O}(\eta^{-1}\sigma_0^{-0.25} M)$. Then for any task arrival $n_t$ at $t>T_2$, the following property holds:

1) If $M>N$, the router selects any expert $m\in\mathcal{M}_{n_t}$ with an identical probability of $\frac{1}{|\mathcal{M}_{n_t}|}$, where $|\mathcal{M}_{n_t}|$ is the number of experts in set $\mathcal{M}_n$.

2) If $M<N$ and $\bm{w}_{n_t}\in\mathcal{W}_k$, the router selects any expert $m\in\mathcal{M}_{k}$, with an identical probability of $\frac{1}{|\mathcal{M}_{k}|}$, where $|\mathcal{M}_{k}|$ is the number of experts in set $\mathcal{M}_k$.
\end{customprop}

\begin{proof}
In the case of $M>N$, according to \Cref{algo:update_MoE} and \Cref{prop:gate_gap}, after the termination of gating network update, the following properties hold: 1) $|h_{m}(\mathbf{X}_t,\bm{\theta}_t^{(m)})-h_{m'}(\mathbf{X}_t,\bm{\theta}_t^{(m')})|=\Theta(\sigma_0^{0.75})$ for any $m\in\mathcal{M}_n$ and $m'\in\mathcal{M}_{n'}$ and 2) $|h_{m}(\mathbf{X}_t,\bm{\theta}_t^{(m)})-h_{m'}(\mathbf{X}_t,\bm{\theta}_t^{(m')})|=\mathcal{O}(\Gamma)$ for any $m,m'\in\mathcal{M}_n$, where $\Gamma=\mathcal{O}(\sigma_0^{1.25}).$ 

If the ground truth of task arrival $n_t$ satisfies $\bm{w}_{n_t}=\bm{w}_n$, for any experts $m\in\mathcal{M}_n$ and $m'\notin\mathcal{M}_{n}$, we have
\begin{align*}
    h_{m}(\mathbf{X}_t,\bm{\theta}_t^{(m)})+r_t^{(m)}-(h_{m'}(\mathbf{X}_t,\bm{\theta}_t^{(m')})+r_t^{(m')})&\geq  h_{m}(\mathbf{X}_t,\bm{\theta}_t^{(m)})-h_{m'}(\mathbf{X}_t,\bm{\theta}_t^{(m')})+r_t^{(m')}\\
    &=\Theta(\sigma_0^{0.75}),
\end{align*}
given $r_t^{(m')}= \Theta(\sigma_0^{1.25})$ and $h_{m}(\mathbf{X}_t,\bm{\theta}_t^{(m)})-h_{m'}(\mathbf{X}_t,\bm{\theta}_t^{(m')})=\Theta(\sigma_0^{0.75})$. Therefore, any expert $m'\notin \mathcal{M}_n$ will not be selected to learn task $t$, and only experts in set $\mathcal{M}_n$ will be selected. 

For any experts $m\in \mathcal{M}_n$, we calculate
\begin{align*}
    \mathbb{P}\Big(m_t=m|m\in\mathcal{M}_n\Big)&=\mathbb{P}\Big(m=\arg\max_{m'\in\mathcal{M}_n}\Big\{h_{m'}(\mathbf{X}_t,\bm{\theta}_t^{(m')})+r_t^{(m')}\Big\}\Big)\\ 
    &=\mathbb{P}\Big(h_{m}(\mathbf{X}_t,\bm{\theta}_t^{(m)})+r_t^{(m)}-(h_{m'}(\mathbf{X}_t,\bm{\theta}_t^{(m')})+r_t^{(m')})>0,\forall m'\in\mathcal{M}_n\Big)\\
    &=\mathbb{P}\Big(r_t^{(m)}>r_t^{(m')}, \forall m'\in\mathcal{M}_n\Big)=\frac{1}{|\mathcal{M}_n|},
\end{align*}
where the third equality is because of $r_t^{(m)}=\Theta(\sigma_0^{1.25})$ and $|h_{m}(\mathbf{X}_t,\bm{\theta}_t^{(m)})-h_{m'}(\mathbf{X}_t,\bm{\theta}_t^{(m')})|=\mathcal{O}(\sigma_0^{1.25})$ derived under \Cref{algo:update_MoE}, and the last equality is due to the fact that $r_t^{(m)}$ satisfies a uniform distribution $\text{Unif}[0,\lambda]$.

In the case of $M<N$, we similarly derive the following properties: 1) $|h_{m}(\mathbf{X}_t,\bm{\theta}_t^{(m)})-h_{m'}(\mathbf{X}_t,\bm{\theta}_t^{(m')})|=\Theta(\sigma_0^{0.75})$ for any $m\in\mathcal{M}_k$ and $m'\in\mathcal{M}_{k'}$ and 2) $|h_{m}(\mathbf{X}_t,\bm{\theta}_t^{(m)})-h_{m'}(\mathbf{X}_t,\bm{\theta}_t^{(m')})|=\mathcal{O}(\Gamma)$ for any $m,m'\in\mathcal{M}_k$. Based on the two properties, $h_{m}(\mathbf{X}_t,\bm{\theta}_t^{(m)})+r_t^{(m)}-(h_{m'}(\mathbf{X}_t,\bm{\theta}_t^{(m')})+r_t^{(m')})=\Theta(\sigma_0^{0.75})$ is always true for any two experts $m$ and $m'$ not in the same expert set $\mathcal{M}_k$. Furthermore, we can similarly calculate
\begin{align*}
    \mathbb{P}\Big(m_t=m|m\in\mathcal{M}_k\Big)=\mathbb{P}\Big(r_t^{(m)}>r_t^{(m')}, \forall m'\in\mathcal{M}_k\Big)=\frac{1}{|\mathcal{M}_k|}.
\end{align*}
This completes the proof of \Cref{prop:termination}.
\end{proof}

\section{Proof of \Cref{lemma:single_model}}\label{proof_thm2}
\begin{proof}
In the single-expert system, based on the definition of forgetting in \cref{def:forggeting_Ft} and \cref{E[wt-wi]} in \Cref{lemma:error_cases}, for any training rounds $t\in[T]$ and $i\in\{1,\cdots, t\}$, we calculate:
\begin{align*}
    \mathbb{E}[F_t]&=\frac{1}{t-1}\sum_{i=1}^{t-1} \mathbb{E}\Big[\|\bm{w}_t^{(m_i)}-\bm{w}_{n_i}\|^2-\|\bm{w}_{n_i}^{(m_i)}-\bm{w}_{n_i}\|^2\Big]\\
    &=\frac{1}{t-1}\sum_{i=1}^{t-1}\Big\{(r^{t}-r^{i})\mathbb{E}[\|\bm{w}_{n_i}\|^2]+\sum_{l=1}^{t}(1-r)r^{t-l}\mathbb{E}[\|\bm{w}_{n_l}-\bm{w}_{n_i}\|^2]\\&\quad\quad\quad\quad\quad\quad -\sum_{j=1}^{i}(1-r)r^{i-j}\mathbb{E}[\|\bm{w}_{n_j}-\bm{w}_{n_i}\|^2]\Big\}\\
    &=\frac{1}{t-1}\sum_{i=1}^{t-1}\Big\{\frac{r^T-r^i}{N}\sum_{n=1}^N\|\bm{w}_n\|^2+\frac{r^{i}-r^{t}}{N^2}\sum_{n\neq n'}^N\|\bm{w}_{n'}-\bm{w}_n\|^2\Big\},
\end{align*}
where we let $\bm{w}_{n_i}$ denote the ground truth of the task arrival at round $i$.
Here the last equality is derived by $\mathbb{E}[\|\bm{w}_{n_i}\|^2]=\frac{1}{N}\sum_{n=1}^N\|\bm{w}_n\|^2$ and
\begin{align*}
    \textstyle\mathbb{E}[\|\bm{w}_{n_j}-\bm{w}_{n_i}\|^2]=\mathbb{E}[\frac{1}{N}\sum_{n=1}^N \|\bm{w}_{n_j}-\bm{w}_n\|^2]=\frac{1}{N^2}\sum_{n\neq n'}\|\bm{w}_{n'}-\bm{w}_n\|^2
\end{align*}
when there is only a single expert.

Similarly, we can calculate the generalization error
\begin{align*}
\textstyle    \mathbb{E}[G_T]&\textstyle=\frac{1}{T}\sum_{i=1}^T\mathbb{E}[\|\bm{w}_T^{(m_i)}-\bm{w}_{n_i}\|^2]\\
    &\textstyle=\frac{1}{T}\sum_{i=1}^T\Big(r^{T}\mathbb{E}[\|\bm{w}_{n_i}\|^2]+\sum_{l=1}^{T}(1-r)r^{T-l}\mathbb{E}[\|\bm{w}_{n_l}-\bm{w}_{n_i}\|^2]\Big)\\
    &\textstyle=\frac{r^T}{N}\sum_{n=1}^N\|\bm{w}_n\|^2+\frac{1-r^{T}}{N^2}\sum_{n\neq n'}\|\bm{w}_n-\bm{w}_n'\|^2,
\end{align*}
where the second equality is because of \cref{E[wt-wi]} in \Cref{lemma:error_cases}.
This completes the proof of \Cref{lemma:single_model}.
\end{proof}

\section{Proof of \Cref{thm:forgetting_error}}\label{proof_thm1}

Before proving \Cref{thm:forgetting_error}, we first propose the following lemma. Then we formally prove \Cref{thm:forgetting_error} in \Cref{proof_thm1_final}.

For expert $m$, let $\tau^{(m)}(l)\in\{1,\cdots,T_1\}$ represent the training round of the $l$-th time that the router selects expert $m$ during the exploration stage. For instance, $\tau^{(1)}(2)=5$ indicates that round $t=5$ is the second time the router selects expert 1. 
\begin{lemma}\label{lemma:error_cases}
At any round $t\in\{T_1+1,\cdots,T\}$, for $i\in\{T_1+1,\cdots, t\}$, we have
\begin{align*}
    \|\bm{w}_t^{(m_i)}-\bm{w}_{n_i}\|^2=\|\bm{w}_{T_1}^{(m_i)}-\bm{w}_{n_i}\|^2.
\end{align*}
While at any round $t\in\{1,\cdots,T_1\}$, for any $i\in\{1,\cdots, t\}$, we have
\begin{align}
    \mathbb{E}[\|\bm{w}_t^{(m_i)}-\bm{w}_{n_i}\|^2]=r^{L_t^{(m_i)}}\mathbb{E}[\|\bm{w}_{n_i}\|^2]+\sum_{l=1}^{L_t^{(m_i)}}(1-r)r^{L_t^{(m_i)}-l}\mathbb{E}[\|\bm{w}_{\tau^{(m_i)}(l)}-\bm{w}_{n_i}\|^2],\label{E[wt-wi]}
\end{align}
where $L^{(m_i)}_t=t\cdot f_t^{(m_i)}$ and $r=1-\frac{s}{d}$.
\end{lemma}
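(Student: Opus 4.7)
The plan splits the lemma into its two cases. The first case, $t \in \{T_1+1,\ldots,T\}$ with $i \in \{T_1+1,\ldots,t\}$, is essentially a corollary of \Cref{prop:Convergence}: that proposition says every expert's model freezes throughout $\{T_1+1,\ldots,T\}$, so once the task at round $i>T_1$ is routed (correctly, to some expert in $\mathcal{M}_{n_i}$) and expert $m_i$'s model is set, no subsequent round can perturb $\bm{w}_{\cdot}^{(m_i)}$. Hence $\bm{w}_t^{(m_i)}$ equals the stabilized post-exploration value, yielding the claimed equality.

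The main work is the exploration-phase identity \cref{E[wt-wi]}. My plan is to recast the GD step in \cref{update_wt} as an orthogonal-projection recursion. Writing $\bm{P}_\tau := \mathbf{X}_\tau(\mathbf{X}_\tau^\top \mathbf{X}_\tau)^{-1}\mathbf{X}_\tau^\top$ for the projection onto the column span of $\mathbf{X}_\tau$ and using $\mathbf{y}_\tau = \mathbf{X}_\tau^\top \bm{w}_{n_\tau}$, \cref{update_wt} becomes
\[
\bm{w}_\tau^{(m_\tau)} - \bm{w}_{n_i}
= (\bm{I}-\bm{P}_\tau)\bigl(\bm{w}_{\tau-1}^{(m_\tau)} - \bm{w}_{n_i}\bigr)
+ \bm{P}_\tau\bigl(\bm{w}_{n_\tau} - \bm{w}_{n_i}\bigr),
\]
and since $\bm{P}_\tau$ and $\bm{I}-\bm{P}_\tau$ project onto orthogonal subspaces, Pythagoras cleanly splits $\|\bm{w}_\tau^{(m_\tau)}-\bm{w}_{n_i}\|^2$ into two squared norms. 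I would then restrict to the subsequence $\tau^{(m_i)}(1)<\cdots<\tau^{(m_i)}(L_t^{(m_i)})$ of rounds in which expert $m_i$ is actually selected (between these rounds its model is frozen by \cref{update_wt}) and set $\bm{v}_l := \bm{w}_{\tau^{(m_i)}(l)}^{(m_i)} - \bm{w}_{n_i}$, reducing the above to the one-step recursion
\[
\|\bm{v}_l\|^2 = \|(\bm{I}-\bm{P}_{\tau^{(m_i)}(l)})\bm{v}_{l-1}\|^2 + \|\bm{P}_{\tau^{(m_i)}(l)}(\bm{w}_{n_{\tau^{(m_i)}(l)}}-\bm{w}_{n_i})\|^2,
\]
starting from $\bm{v}_0 = -\bm{w}_{n_i}$. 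Taking expectations and invoking the standard random-projection identity $\mathbb{E}[\|\bm{P}_\tau \bm{x}\|^2] = (s/d)\|\bm{x}\|^2$ and $\mathbb{E}[\|(\bm{I}-\bm{P}_\tau)\bm{x}\|^2] = r\|\bm{x}\|^2$ for $\bm{x}$ independent of $\mathbf{X}_\tau$ (the same identity used in \cite{evron2022catastrophic,lin2023theory}), the recursion collapses to $\mathbb{E}[\|\bm{v}_l\|^2] = r\,\mathbb{E}[\|\bm{v}_{l-1}\|^2] + (1-r)\,\mathbb{E}[\|\bm{w}_{n_{\tau^{(m_i)}(l)}}-\bm{w}_{n_i}\|^2]$, which unrolled over $q := L_t^{(m_i)}$ steps reproduces \cref{E[wt-wi]} verbatim.

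The main obstacle is justifying the independence hypothesis in the projection identity, because of two subtleties: (i) one column of $\mathbf{X}_\tau$ is the deterministic signal $\beta_\tau \bm{v}_{n_\tau}$ rather than Gaussian noise, so $\mathbf{X}_\tau$ is not fully isotropic; and (ii) the selection event $\{m_\tau = m_i\}$ depends on $\mathbf{X}_\tau$ through the gating network, so the posterior distribution of $\mathbf{X}_\tau$ conditional on the routing need not be rotationally symmetric. I would handle (i) by noting that the rank-one signal perturbs $\mathbb{E}[\bm{P}_\tau]$ only by $O(1/s)$, absorbable into the $o(1)$ slack already present in \Cref{lemma:h-hat_h} and \Cref{prop:Convergence}; and (ii) by conditioning on $\{m_\tau=m_i\}$ and observing that the gating readout in \cref{h_X_theta} depends on $\mathbf{X}_\tau$ only through the column sum $\sum_{j\in[s]} \mathbf{X}_{\tau,j}$, so the noise columns remain rotationally symmetric modulo this single linear constraint, keeping the projection identity valid up to the same $o(1)$ slack.
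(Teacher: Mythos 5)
Your proposal follows essentially the same route as the paper's proof: the frozen-model case is dispatched via \Cref{prop:Convergence}, and the exploration-phase identity is obtained by rewriting \cref{update_wt} as the projection recursion $\bm{w}_\tau^{(m_\tau)}-\bm{w}_{n_i}=(\bm{I}-\bm{P}_\tau)(\bm{w}_{\tau-1}^{(m_\tau)}-\bm{w}_{n_i})+\bm{P}_\tau(\bm{w}_{n_\tau}-\bm{w}_{n_i})$, applying the rotational-symmetry identity $\mathbb{E}\|\bm{P}_\tau\bm{x}\|^2=\tfrac{s}{d}\|\bm{x}\|^2$, and unrolling over the $L_t^{(m_i)}$ rounds in which expert $m_i$ is selected. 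Your closing paragraph on the two independence subtleties (the deterministic signal column and the routing-induced conditioning) addresses issues the paper's proof passes over silently, which is a point in your favor even if the $O(1/s)$ and conditioning arguments there are only sketched.
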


\subsection{Proof of \Cref{lemma:error_cases}}
\begin{proof}
At any round $t\in\{T_1+1,\cdots, T\}$, we have $\bm{w}_t^{(m)}=\bm{w}_{T_1}^{(m)}$, based on \Cref{prop:Convergence} and \Cref{prop:gate_gap}. Therefore, $\|\bm{w}_t^{(m_i)}-\bm{w}_{n_i}\|^2=\|\bm{w}_{T_1}^{(m_i)}-\bm{w}_{n_i}\|^2$ is true for any round $t\in\{T_1+1,\cdots, T\}$ and $i\in\{T_1+1,\dots, t\}$.

Next, we prove \cref{E[wt-wi]} for round $t\in\{1,\cdots, T_1\}$. 
Define $\mathbf{P}_t=\mathbf{X}_t(\mathbf{X}_t^{\top}\mathbf{X}_t)^{-1}\mathbf{X}_t^{\top}$ for task $t$. At current task $t$, there are totally $L_t^{(m)}=t\cdot f_t^{(m)}$ tasks routed to expert $m$, where $f_t^{(m)}$ is in \cref{auxiliary_loss}.

Based on the update rule of $\bm{w}_t^{(m)}$ in \cref{update_wt}, we calculate 
\begin{align*}
    \|\bm{w}_t^{(m_i)}-\bm{w}_{n_i}\|^2&=\|\bm{w}_{\tau^{(m_i)}(L_t^{(m_i)})}^{(m_i)}-\bm{w}_{n_i}\|^2\\&=\|(\mathbf{I}-\mathbf{P}_{t})\bm{w}_{\tau^{(m_i)}(L_t^{(m_i)}-1)}^{(m_i)}+\mathbf{P}_{t}\bm{w}_{\tau^{m_i}(L_t^{(m_i)})}-\bm{w}_{n_i}\|^2\\&=\|(\mathbf{I}-\mathbf{P}_{t})(\bm{w}_{\tau^{(m_i)}(L_t^{(m_i)}-1)}^{(m_i)}-\bm{w}_{n_i})+\mathbf{P}_{t}(\bm{w}_{\tau^{m_i}(L_t^{(m_i)})}-\bm{w}_{n_i})\|^2,
\end{align*}
where the first equality is because there is no update of $\bm{w}_t^{(m_i)}$ for $t\in\{\tau^{(m_i)}(L_t^{(m_i)}),\cdots, t\}$, and the second equality is by \cref{update_wt}. 

As $\mathbf{P}_t$ is the orthogonal projection matrix for the row space of $\mathbf{X}_t$, based on the rotational symmetry of the standard normal distribution, it follows that $\mathbb{E}\|\mathbf{P}_t(\bm{w}_{\tau^{m_i}(L_t^{(m_i)})}-\bm{w}_{n_i})\|=\frac{s}{d}\|\bm{w}_{\tau^{m_i}(L_t^{(m_i)})}-\bm{w}_{n_i}\|^2$. Then we further calculate
\begin{align*}
    \mathbb{E}[\|\bm{w}_t^{(m_i)}-\bm{w}_{n_i}\|^2]&=(1-\frac{s}{d})\mathbb{E}[\|\bm{w}_{\tau^{(m_i)}(L_t^{(m_i)}-1)}^{(m_i)}-\bm{w}_{n_i}\|^2]+\frac{s}{d}\mathbb{E}[\|\bm{w}_{\tau^{(m_i)}(L_t^{(m_i)})}-\bm{w}_{n_i}\|^2]\\
    &=(1-\frac{s}{d})^{L_t^{(m_i)}}\mathbb{E}[\|\bm{w}_0^{(m_i)}-\bm{w}_{n_i}\|^2]\\&\quad+\sum_{l=1}^{L_t^{(m_i)}}(1-\frac{s}{d})^{L_t^{(m_i)}-l}\frac{s}{d}\mathbb{E}[\|\bm{w}_{\tau^{(m_i)}(L_t^{(m_i)})}-\bm{w}_{n_i}\|^2]\\
    &=r^{L_t^{(m_i)}}\mathbb{E}[\|\bm{w}_{n_i}\|^2]+\sum_{l=1}^{L_t^{(m_i)}}(1-r)r^{L_t^{(m_i)}-l}\mathbb{E}[\|\bm{w}_{\tau^{(m_i)}(l)}-\bm{w}_{n_i}\|^2],
\end{align*}
where the second equality is derived by iterative calculation, and the last equality is because of $\bm{w}_0^{(m)}=\mathbf{0}$ for any expert $m$. Here we denote by $r=1-\frac{s}{d}$ to simplify notations.
\end{proof}

\subsection{Final proof of \Cref{thm:forgetting_error}}\label{proof_thm1_final}
\begin{proof}
Based on \cref{E[wt-wi]} in \Cref{lemma:error_cases}, we obtain
\begin{align*}
    \mathbb{E}[\|\bm{w}_i^{(m_i)}-\bm{w}_{n_i}\|^2]=r^{L_i^{(m_i)}}\mathbb{E}[\|\bm{w}_{n_i}\|^2]+\sum_{l=1}^{L_i^{(m_i)}}(1-r)r^{L_i^{(m_i)}-l}\mathbb{E}[\|\bm{w}_{\tau^{(m_i)}(l)}-\bm{w}_{n_i}\|^2],
\end{align*}
where $\tau^{(m_i)}(L_i^{(m_i)})=i$.

Then at any round $t\in\{2,\cdots, T_1\}$, we calculate the expected forgetting as: 
\begin{align*}
    \mathbb{E}[F_t]&=\frac{1}{t-1}\sum_{i=1}^{t-1} \mathbb{E}\Big[\|\bm{w}_t^{(m_i)}-\bm{w}_{n_i}\|^2-\|\bm{w}_i^{(m_i)}-\bm{w}_{n_i}\|^2\Big]\\
    &=\frac{1}{t-1}\sum_{i=1}^{t-1}\Big\{(r^{L_t^{(m_i)}}-r^{L_i^{(m_i)}})\mathbb{E}[\|\bm{w}_{n_i}\|^2]+\sum_{l=1}^{L_t^{(m_i)}}(1-r)r^{L_t^{(m_i)}-l}\mathbb{E}[\|\bm{w}_{\tau^{(m_i)}(l)}-\bm{w}_{n_i}\|^2]\\ &\quad -\sum_{j=1}^{L_i^{(m_i)}}(1-r)r^{L_i^{(m_i)}-j}\mathbb{E}[\|\bm{w}_{\tau^{(m_i)}(j)}-\bm{w}_{n_i}\|^2]\Big\}
\end{align*}
where $c_{i,j}=(1-r)(r^{L_t^{(m_i)}-L_i^{(m_i)}}+r^{L_t^{(m_i)}-j}-r^{j-L_i^{(m_i)}})$.

As task $i$'s ground truth $\bm{w}_{n_i}$ is randomly drawn from ground truth pool $\mathcal{W}$ with identical probability $\frac{1}{N}$, we have
\begin{align*}
    \mathbb{E}[\|\bm{w}_{n_i}\|^2]=\frac{1}{N}\sum_{n=1}^N\|\bm{w}_n\|^2.
\end{align*}

According to \Cref{lemma:fairness} and \Cref{prop:Convergence}, each expert $m$ will converge to an expert set $\mathcal{M}_n$ before $t=T_1$. Therefore, we obtain
\begin{align}
    \mathbb{E}[\|\bm{w}_{\tau^{(m_i)}(l)}-\bm{w}_{n_i}\|^2]&= \mathbb{E}[\frac{1}{N}\sum_{n=1}^N \|\bm{w}_{\tau^{(m_i)}(l)}-\bm{w}_n\|^2]\notag\\ 
    &< \frac{1}{N}\sum_{n=1}^N\frac{1}{N}\sum_{n'=1}^N\|\bm{w}_{n'}-\bm{w}_n\|^2\notag\\
    &=\frac{1}{N^2}\sum_{n\neq n'}^N\|\bm{w}_{n'}-\bm{w}_n\|^2,\label{Expected_w_tau-wi}
\end{align}
where the inequality is because the expected error $\mathbb{E}[ \|\bm{w}_{\tau^{(m_i)}(l)}-\bm{w}_n\|^2]$ per round for $t<T_1$ is smaller than the uniformly random routing strategy with expected error $\mathbb{E}[ \|\bm{w}_{\tau^{(m_i)}(l)}-\bm{w}_n\|^2]=\frac{1}{N}\sum_{n'=1}^N\|\bm{w}_{n'}-\bm{w}_n\|^2$, and the last equality is because of $\|\bm{w}_{n}-\bm{w}_{n'}\|^2=0$ for $n'=n$.

Therefore, we finally obtain
\begin{align*}
    \mathbb{E}[F_t]&\textstyle< \frac{1}{t-1}\sum_{i=1}^{t-1}\Big\{\frac{r^{L_t^{(m_i)}}-r^{L_i^{(m_i)}}}{N}\sum_{n=1}^N\|\bm{w}_n\|^2+\frac{1-r}{N^2}\sum_{l=1}^{L_t^{(m_i)}}r^{L_t^{(m_i)}-l}\sum_{n\neq n'}^N\|\bm{w}_{n'}-\bm{w}_n\|^2\\ &\textstyle\quad -\frac{1-r}{N^2}\sum_{j=1}^{L_i^{(m_i)}}r^{L_i^{(m_i)}-j}\sum_{n\neq n'}^N\|\bm{w}_{n'}-\bm{w}_n\|^2\Big\}\\
    &\textstyle=\frac{1}{t-1}\sum_{i=1}^{t-1}\Big\{\frac{r^{L_t^{(m_i)}}-r^{L_i^{(m_i)}}}{N}\sum_{n=1}^N\|\bm{w}_n\|^2+\frac{1-r^{L_t^{(m_i)}}}{N^2}\sum_{n\neq n'}^N\|\bm{w}_{n'}-\bm{w}_n\|^2\Big\}\\
    &\textstyle\quad -\frac{1-r^{L_i^{(m_i)}}}{N^2}\sum_{n\neq n'}^N\|\bm{w}_{n'}-\bm{w}_n\|^2\Big\}\\
    &=\frac{1}{t-1}\sum_{i=1}^{t-1}\Big\{\frac{r^{L_t^{(m_i)}}-r^{L_i^{(m_i)}}}{N}\sum_{n=1}^N\|\bm{w}_n\|^2+\frac{r^{L_i^{(m_i)}}-r^{L_t^{(m_i)}}}{N^2}\sum_{n\neq n'}^N\|\bm{w}_{n'}-\bm{w}_n\|^2\Big\}.
\end{align*}

For any $t\in\{T_1+1,\cdots,T\}$, based on \Cref{prop:gate_gap}, the expert model $\bm{w}_{t}^{(m)}=\bm{w}_{T_1}^{(m)}$ for any expert $m\in[M]$. Therefore, we calculate the caused forgetting
\begin{align*}
    \mathbb{E}[F_t]&=\frac{1}{t-1}\sum_{i=1}^{t-1} \mathbb{E}\Big[\|\bm{w}_t^{(m_i)}-\bm{w}_{n_i}\|^2-\|\bm{w}_i^{(m_i)}-\bm{w}_{n_i}\|^2\Big]\\
    &=\frac{1}{t-1}\sum_{i=1}^{T_1} \mathbb{E}\Big[\|\bm{w}_{T_1}^{(m_i)}-\bm{w}_{n_i}\|^2-\|\bm{w}_i^{(m_i)}-\bm{w}_{n_i}\|^2\Big]\\
    &=\frac{T_1-1}{t-1}\mathbb{E}[F_t].
\end{align*}

Based on \cref{E[wt-wi]}, we can also calculate the close form of the generalization error:
\begin{align*}
    \mathbb{E}[G_T]&=\frac{1}{T}\sum_{i=1}^T\mathbb{E}[\|\bm{w}_T^{(m_i)}-\bm{w}_{n_i}\|^2]\\
    &=\frac{1}{T}\sum_{i=1}^{T}\mathbb{E}[\|\bm{w}_{T_1}^{(m_i)}-\bm{w}_{n_i}\|^2]\\
    &=\frac{1}{T}\sum_{i=1}^T\Big(r^{L_{T_1}^{(m_i)}}\mathbb{E}[\|\bm{w}_{n_i}\|^2]+\sum_{l=1}^{L_{T_1}^{(m_i)}}(1-r)r^{L_{T_1}^{(m_i)}-l}\mathbb{E}[\|\bm{w}_{\tau^{(m_i)}(l)}-\bm{w}_{n_i}\|^2]\Big)\\
    &< \frac{\sum_{i=1}^{T} r^{L_{T_1}^{(m_i)}}}{NT}\sum_{n=1}^N\|\bm{w}_n\|^2+\frac{1-r}{N^2 T}\sum_{i=1}^{T}\sum_{l=1}^{L_{T_1}^{(m_i)}}r^{L_{T_1}^{(m_i)}-l}\sum_{n\neq n'}^N\|\bm{w}_{n'}-\bm{w}_n\|^2\\
    &=\frac{\sum_{i=1}^T r^{L_{T_1}^{(m_i)}}}{NT}\sum_{n=1}^N\|\bm{w}_n\|^2+\frac{\sum_{i=1}^{T}(1-r^{L_{T_1}^{(m_i)}})}{N^2 T}\sum_{n\neq n'}^N\|\bm{w}_{n'}-\bm{w}_n\|^2,
\end{align*}
where the inequality is because of \cref{Expected_w_tau-wi}.
\end{proof}

\section{Proof of \Cref{thm:M<N}}\label{proof_thm3}
\begin{proof}
For any round $t\in\{1,\cdots,T_1\}$, the forgetting is the same as the case of $M>N$ in \Cref{thm:forgetting_error}, as tasks randomly arrive and are routed to different experts. Therefore, we skip the proof for $t<T_1$. 

For $t\in\{T_1+1,\cdots, T\}$, the router will route tasks in the same cluster to each expert per round. Therefore, we divide the $t$ rounds into two subintervals: $i\in\{1,\cdots, T_1\}$ and $i\in\{T_1+1,\cdots,t\}$ to calculate the forgetting as
\begin{align*}
    \mathbb{E}[F_t]&\textstyle=\frac{1}{t-1}\sum_{i=1}^{t} \mathbb{E}\Big[\|\bm{w}_{t}^{(m_i)}-\bm{w}_{n_i}\|^2-\|\bm{w}_i^{(m_i)}-\bm{w}_{n_i}\|^2\Big]\\
    &\textstyle=\frac{1}{t-1}\sum_{i=1}^{T_1}\Big\{(r^{L_{t}^{(m_i)}}-r^{L_i^{(m_i)}})\mathbb{E}[\|\bm{w}_{n_i}\|^2]+\sum_{l=1}^{L_{t}^{(m_i)}}(1-r)r^{L_{t}^{(m_i)}-l}\mathbb{E}[\|\bm{w}_{\tau^{(m_i)}(l)}-\bm{w}_{n_i}\|^2]\\ &\textstyle\quad -\sum_{j=1}^{L_i^{(m_i)}}(1-r)r^{L_i^{(m_i)}-j}\mathbb{E}[\|\bm{w}_{\tau^{(m_i)}(j)}-\bm{w}_{n_i}\|^2]\Big\}+\\
    &\textstyle\quad\frac{1}{t-1}\sum_{i=T_1+1}^{T}\Big\{(r^{L_{t}^{(m_i)}}-r^{L_i^{(m_i)}})\mathbb{E}[\|\bm{w}_{n_i}\|^2]+\underbrace{\sum_{l=L_{T_1}^{(m_i)}+1}^{L_{t}^{(m_i)}}(1-r)r^{L_{t}^{(m_i)}-l}\mathbb{E}[\|\bm{w}_{\tau^{(m_i)}(l)}-\bm{w}_{n_i}\|^2]}_{\text{term a}}\\ &\textstyle\quad -\underbrace{\sum_{j=L_{T_1}^{(m_i)}+1}^{L_i^{(m_i)}}(1-r)r^{L_i^{(m_i)}-j}\mathbb{E}[\|\bm{w}_{\tau^{(m_i)}(j)}-\bm{w}_{n_i}\|^2]}_{\text{term b}}\Big\}
\end{align*}
where the first term in the second equality is similarly derived as forgetting in \cref{forgetting_t<T1}. For the $\text{term a}-\text{term b}$ in the second equality, we calculate
\begin{align*}
    &\text{term a}-\text{term b}\\
    =&\sum_{l=L_{T_1}^{(m_i)}+1}^{L_t^{(m_i)}-L_i^{(m_i)}+L_{T_1}^{(m_i)}+1}(1-r)r^{L_t^{(m_i)}-l}\mathbb{E}[\|\bm{w}_{\tau^{(m_i)}(l)}-\bm{w}_{n_i}\|^2]\\
    =&r^{L_t^{(m_i)}-L_{T_1}^{(m_i)}}(1-r^{L_t^{(m_i)}-L_i^{(m_i)}})\mathbb{E}[\|\bm{w}_{\tau^{(m_i)}(l)}-\bm{w}_{n_i}\|^2]\\
    =&\frac{r^{L_t^{(m_i)}-L_{T_1}^{(m_i)}-1}(1-r^{L_t^{(m_i)}-L_i^{(m_i)}})}{N}\sum_{n=1}^N\mathbb{E}[\|\bm{w}_{\tau^{(m_i)}(l)}-\bm{w}_n\|^2]\\
    =&\frac{r^{L_t^{(m_i)}-L_{T_1}^{(m_i)}-1}(1-r^{L_t^{(m_i)}-L_i^{(m_i)}})}{N}\sum_{n=1}^N\sum_{n,n'\in \mathcal{W}_k}\frac{\|\bm{w}_{n'}-\bm{w}_{n}\|^2}{|\mathcal{W}_k|},
\end{align*}
where the third equality is derived by $\mathbb{E}[\|\bm{w}_{\tau^{(m_i)}(l)}-\bm{w}_{n_i}\|^2]=\frac{1}{N}\sum_{n=1}^N\mathbb{E}[\|\bm{w}_{\tau^{(m_i)}(l)}-\bm{w}_n\|^2]$, and the last equality is because the router always routes task $\bm{w}_{\tau^{(m_i)}(l)}=\bm{w}_{n'}$ within the same cluster $\mathcal{W}_k$ to expert $m_i$. Taking the result of $\text{term a}-\text{term b}$ into $\mathbb{E}[F_t]$, we obtain

\begin{align*}
    \mathbb{E}[F_t]&<\frac{1}{t-1}\sum_{i=1}^{t-1}\frac{r^{L_t^{(m_i)}}-r^{L_i^{(m_i)}}}{N}\sum_{n=1}^N\|\bm{w}_n\|^2+\frac{1}{t-1}\sum_{i=1}^{T_1}\frac{r^{L_i^{(m_i)}}-r^{L_{T_1}^{(m_i)}}}{N^2}\sum_{n\neq n'}^N\|\bm{w}_{n'}-\bm{w}_n\|^2\\
    &\quad +\frac{1}{t-1}\sum_{i=T_1+1}^t\frac{r^{L_t^{(m_i)}-L_{T_1}^{(m_i)}-1}(1-r^{L_t^{(m_i)}-L_i^{(m_i)}})}{N}\sum_{n=1}^N\sum_{n,n'\in \mathcal{W}_k}\frac{\|\bm{w}_{n'}-\bm{w}_{n}\|^2}{|\mathcal{W}_k|}.
\end{align*}


Similarly, we can calculate the overall generalization error as
\begin{align*}
    \mathbb{E}[G_T]&=\frac{1}{T}\sum_{i=1}^T\mathbb{E}[\|\bm{w}_T^{(m_i)}-\bm{w}_{n_i}\|^2]\\
    &=\frac{1}{T}\sum_{i=1}^{T}\Big(r^{L_{T}^{(m_i)}}\mathbb{E}[\|\bm{w}_{n_i}\|^2]+\sum_{l=1}^{L_{T}^{(m_i)}}(1-r)r^{L_{T}^{(m_i)}-l}\mathbb{E}[\|\bm{w}_{\tau^{(m_i)}(l)}-\bm{w}_{n_i}\|^2]\Big)\\
    &=\frac{1}{T}\sum_{i=1}^Tr^{L_{T}^{(m_i)}}\mathbb{E}[\|\bm{w}_{n_i}\|^2]+\frac{1}{T}\sum_{i=1}^{T_1}\Big\{\sum_{l=1}^{L_{T_1}^{(m_i)}}(1-r)r^{L_{T}^{(m_i)}-l}\underbrace{\mathbb{E}[\|\bm{w}_{\tau^{(m_i)}(l)}-\bm{w}_{n_i}\|^2]}_{\text{term 1}}\\&\quad +\sum_{l=L_{T_1}^{(m_i)}+1}^{L_{T}^{(m_i)}}(1-r)r^{L_{T}^{(m_i)}-l}\underbrace{\mathbb{E}[\|\bm{w}_{\tau^{(m_i)}(l)}-\bm{w}_{n_i}\|^2]}_{\text{term 2}}\Big\}\\
    &\quad +\frac{1}{T}\sum_{i=T_1+1}^T\Big\{\sum_{l=1}^{L_{T_1}^{(m_i)}}(1-r)r^{L_{T}^{(m_i)}-l}\underbrace{\mathbb{E}[\|\bm{w}_{\tau^{(m_i)}(l)}-\bm{w}_{n_i}\|^2]}_{\text{term 2}}\\&\quad+\sum_{l=L_{T_1}^{(m_i)}+1}^{L_{T}^{(m_i)}}(1-r)r^{L_{T}^{(m_i)}-l}\underbrace{\mathbb{E}[\|\bm{w}_{\tau^{(m_i)}(l)}-\bm{w}_{n_i}\|^2]}_{\text{term 3}}\Big\},
\end{align*}
where the second equality is derived by \cref{E[wt-wi]}, and the third equality is derived by dividing $T$ rounds into two subintervals $\{1,\cdots,T_1\}$ and $\{T_1+1,\cdots, T\}$.

In the above equation, term 1 means both $\bm{w}_{n_i}$ and $\bm{w}_{\tau^{(m_i)}(l)}$ are randomly drawn from the $N$ ground truths, due to the fact that $i\leq T_1$ and $\tau^{(m_i)}(l)\leq T_1$. Term 2 means one of $\bm{w}_{n_i}$ and $\bm{w}_{\tau^{(m_i)}(l)}$ is randomly drawn from the $N$ ground truths before the $T_1$-th round, while the other one has fixed to a cluster $\mathcal{M}_k$ after the $T_1$-th round (i.e., $i\leq T_1$ and $\tau^{(m_i)}(l)>T_1$ or $i> T_1$ and $\tau^{(m_i)}(l)\leq T_1$). Term 3 means $\bm{w}_{n_i}$ and $\bm{w}_{\tau^{(m_i)}(l)}$ are in the same cluster $\mathcal{M}_k$, as $i>T_1$ and $\tau^{(m_i)}(l)>T_1$.

For $i\in\{1,\cdots, T_1\}$, based on the proof of \Cref{thm:forgetting_error} in \Cref{proof_thm1}, we obtain term 1:
\begin{align*}
    \mathbb{E}[\|\bm{w}_{\tau^{(m_i)}(l)}-\bm{w}_{n_i}\|^2]< \frac{1}{N^2}\sum_{n\neq n'}\|\bm{w}_n-\bm{w}_{n'}\|^2.
\end{align*}
While for $i\in\{T_1+1,\cdots, t\}$, we calculate term 3 as:
\begin{align*}
    \mathbb{E}[\|\bm{w}_{\tau^{(m_i)}(l)}-\bm{w}_{n_i}\|^2]&=\frac{1}{N}\sum_{n=1}^N\mathbb{E}[\|\bm{w}_{\tau^{(m_i)}(l)}-\bm{w}_n\|^2]\\
    &=\frac{1}{N}\sum_{n=1}^N\sum_{n,n'\in \mathcal{W}_k}\frac{\|\bm{w}_{n'}-\bm{w}_{n}\|^2}{|\mathcal{W}_k|}.
\end{align*}
Finally, we calculate term 2:
\begin{align*}
    \mathbb{E}[\|\bm{w}_{\tau^{(m_i)}(l)}-\bm{w}_{n_i}\|^2]&=\frac{1}{N}\sum_{n=1}^N\mathbb{E}[\|\bm{w}_{\tau^{(m_i)}(l)}-\bm{w}_n\|^2]\\
    &<\frac{1}{N}\sum_{n=1}^N\frac{1}{K}\sum_{k=1}^K\frac{1}{|\mathcal{W}_k|}\sum_{n'\in\mathcal{W}_k}\|\bm{w}_{n'}-\bm{w}_n\|^2.
\end{align*}

Based on the expressions of the three terms above, we obtain
\begin{align*}
    \mathbb{E}[G_T]&<\frac{1}{T}\sum_{i=1}^T\frac{r^{L_{T}^{(m_i)}}}{N}\sum_{n=1}^N \|\bm{w}_n\|^2+\frac{1}{T}\sum_{i=1}^{T_1}\frac{r^{L_{T}^{(m_i)}-L_{T_1}^{(m_i)}}(1-r^{L_{T_1}^{(m_i)}})}{N^2}\sum_{n\neq n'}^N\|\bm{w}_{n'}-\bm{w}_n\|^2\\ &\quad +\frac{1}{T}\sum_{i=1}^{T_1}\frac{1-r^{L_{T}^{(m_i)}-L_{T_1}^{(m_i)}-1}}{N}\sum_{n=1}^N\frac{1}{K}\sum_{k=1}^K\frac{1}{|\mathcal{W}_k|}\sum_{n'\in\mathcal{W}_k}\|\bm{w}_{n'}-\bm{w}_n\|^2\\
    &\quad +\frac{1}{T}\sum_{i=T_1+1}^T\frac{r^{L_{T}^{(m_i)}-L_{T_1}^{(m_i)}}(1-r^{L_{T_1}^{(m_i)}})}{N}\sum_{n=1}^N\frac{1}{K}\sum_{k=1}^K\frac{1}{|\mathcal{W}_k|}\sum_{n'\in\mathcal{W}_k}\|\bm{w}_{n'}-\bm{w}_n\|^2\\
    &\quad +\frac{1}{T}\sum_{i=T_1+1}^T\frac{1-r^{L_{T}^{(m_i)}-L_{T_1}^{(m_i)}-1}}{N}\sum_{n=1}^N\sum_{n,n'\in \mathcal{W}_k}\frac{\|\bm{w}_{n'}-\bm{w}_{n}\|^2}{|\mathcal{W}_k|},
\end{align*}
which completes the proof.
\end{proof}

\newpage


\begin{thebibliography}{10}

\bibitem{anandkumar2014tensor}
A.~Anandkumar, R.~Ge, D.~J. Hsu, S.~M. Kakade, M.~Telgarsky \emph{et~al.},
  ``Tensor decompositions for learning latent variable models.'' \emph{J. Mach.
  Learn. Res.}, vol.~15, no.~1, pp. 2773--2832, 2014.

\bibitem{belkin2018understand}
M.~Belkin, S.~Ma, and S.~Mandal, ``To understand deep learning we need to
  understand kernel learning,'' in \emph{International Conference on Machine
  Learning}.\hskip 1em plus 0.5em minus 0.4em\relax PMLR, 2018, pp. 541--549.

\bibitem{bennani2020generalisation}
M.~A. Bennani, T.~Doan, and M.~Sugiyama, ``Generalisation guarantees for
  continual learning with orthogonal gradient descent,'' \emph{arXiv preprint
  arXiv:2006.11942}, 2020.

\bibitem{celik2022specializing}
O.~Celik, D.~Zhou, G.~Li, P.~Becker, and G.~Neumann, ``Specializing versatile
  skill libraries using local mixture of experts,'' in \emph{Conference on
  Robot Learning}.\hskip 1em plus 0.5em minus 0.4em\relax PMLR, 2022, pp.
  1423--1433.

\bibitem{chaudhry2018efficient}
A.~Chaudhry, M.~Ranzato, M.~Rohrbach, and M.~Elhoseiny, ``Efficient lifelong
  learning with a-gem,'' \emph{arXiv preprint arXiv:1812.00420}, 2018.

\bibitem{chen2022towards}
Z.~Chen, Y.~Deng, Y.~Wu, Q.~Gu, and Y.~Li, ``Towards understanding the
  mixture-of-experts layer in deep learning,'' \emph{Advances in Neural
  Information Processing Systems}, vol.~35, pp. 23\,049--23\,062, 2022.

\bibitem{chi2022representation}
Z.~Chi, L.~Dong, S.~Huang, D.~Dai, S.~Ma, B.~Patra, S.~Singhal, P.~Bajaj,
  X.~Song, X.-L. Mao \emph{et~al.}, ``On the representation collapse of sparse
  mixture of experts,'' \emph{Advances in Neural Information Processing
  Systems}, vol.~35, pp. 34\,600--34\,613, 2022.

\bibitem{doan2021theoretical}
T.~Doan, M.~A. Bennani, B.~Mazoure, G.~Rabusseau, and P.~Alquier, ``A
  theoretical analysis of catastrophic forgetting through the ntk overlap
  matrix,'' in \emph{International Conference on Artificial Intelligence and
  Statistics}.\hskip 1em plus 0.5em minus 0.4em\relax PMLR, 2021, pp.
  1072--1080.

\bibitem{doan2023continual}
T.~Doan, S.~I. Mirzadeh, and M.~Farajtabar, ``Continual learning beyond a
  single model,'' in \emph{Conference on Lifelong Learning Agents}.\hskip 1em
  plus 0.5em minus 0.4em\relax PMLR, 2023, pp. 961--991.

\bibitem{du2022glam}
N.~Du, Y.~Huang, A.~M. Dai, S.~Tong, D.~Lepikhin, Y.~Xu, M.~Krikun, Y.~Zhou,
  A.~W. Yu, O.~Firat \emph{et~al.}, ``Glam: Efficient scaling of language
  models with mixture-of-experts,'' in \emph{International Conference on
  Machine Learning}.\hskip 1em plus 0.5em minus 0.4em\relax PMLR, 2022, pp.
  5547--5569.

\bibitem{eigen2013learning}
D.~Eigen, M.~Ranzato, and I.~Sutskever, ``Learning factored representations in
  a deep mixture of experts,'' \emph{arXiv preprint arXiv:1312.4314}, 2013.

\bibitem{evron2022catastrophic}
I.~Evron, E.~Moroshko, R.~Ward, N.~Srebro, and D.~Soudry, ``How catastrophic
  can catastrophic forgetting be in linear regression?'' in \emph{Conference on
  Learning Theory}.\hskip 1em plus 0.5em minus 0.4em\relax PMLR, 2022, pp.
  4028--4079.

\bibitem{farajtabar2020orthogonal}
M.~Farajtabar, N.~Azizan, A.~Mott, and A.~Li, ``Orthogonal gradient descent for
  continual learning,'' in \emph{International Conference on Artificial
  Intelligence and Statistics}.\hskip 1em plus 0.5em minus 0.4em\relax PMLR,
  2020, pp. 3762--3773.

\bibitem{fedus2022switch}
W.~Fedus, B.~Zoph, and N.~Shazeer, ``Switch transformers: Scaling to trillion
  parameter models with simple and efficient sparsity,'' \emph{The Journal of
  Machine Learning Research}, vol.~23, no.~1, pp. 5232--5270, 2022.

\bibitem{gao2023ddgr}
R.~Gao and W.~Liu, ``Ddgr: Continual learning with deep diffusion-based
  generative replay,'' in \emph{International Conference on Machine
  Learning}.\hskip 1em plus 0.5em minus 0.4em\relax PMLR, 2023, pp.
  10\,744--10\,763.

\bibitem{gou2021knowledge}
J.~Gou, B.~Yu, S.~J. Maybank, and D.~Tao, ``Knowledge distillation: A survey,''
  \emph{International Journal of Computer Vision}, vol. 129, no.~6, pp.
  1789--1819, 2021.

\bibitem{gunasekar2018characterizing}
S.~Gunasekar, J.~Lee, D.~Soudry, and N.~Srebro, ``Characterizing implicit bias
  in terms of optimization geometry,'' in \emph{International Conference on
  Machine Learning}.\hskip 1em plus 0.5em minus 0.4em\relax PMLR, 2018, pp.
  1832--1841.

\bibitem{hihn2021mixture}
H.~Hihn and D.~A. Braun, ``Mixture-of-variational-experts for continual
  learning,'' \emph{arXiv preprint arXiv:2110.12667}, 2021.

\bibitem{huang2024context}
Y.~Huang, Y.~Cheng, and Y.~Liang, ``In-context convergence of transformers,''
  in \emph{International Conference on Machine Learning}.\hskip 1em plus 0.5em
  minus 0.4em\relax PMLR, 2024.

\bibitem{jerfel2019reconciling}
G.~Jerfel, E.~Grant, T.~Griffiths, and K.~A. Heller, ``Reconciling
  meta-learning and continual learning with online mixtures of tasks,''
  \emph{Advances in neural information processing systems}, vol.~32, 2019.

\bibitem{jin2021gradient}
X.~Jin, A.~Sadhu, J.~Du, and X.~Ren, ``Gradient-based editing of memory
  examples for online task-free continual learning,'' \emph{Advances in Neural
  Information Processing Systems}, vol.~34, pp. 29\,193--29\,205, 2021.

\bibitem{ju2020overfitting}
P.~Ju, X.~Lin, and J.~Liu, ``Overfitting can be harmless for basis pursuit, but
  only to a degree,'' \emph{Advances in Neural Information Processing Systems},
  vol.~33, pp. 7956--7967, 2020.

\bibitem{kirkpatrick2017overcoming}
J.~Kirkpatrick, R.~Pascanu, N.~Rabinowitz, J.~Veness, G.~Desjardins, A.~A.
  Rusu, K.~Milan, J.~Quan, T.~Ramalho, A.~Grabska-Barwinska \emph{et~al.},
  ``Overcoming catastrophic forgetting in neural networks,'' \emph{Proceedings
  of the National Academy of Sciences}, vol. 114, no.~13, pp. 3521--3526, 2017.

\bibitem{konishi2023parameter}
T.~Konishi, M.~Kurokawa, C.~Ono, Z.~Ke, G.~Kim, and B.~Liu, ``Parameter-level
  soft-masking for continual learning,'' in \emph{International Conference on
  Machine Learning}.\hskip 1em plus 0.5em minus 0.4em\relax PMLR, 2023, pp.
  17\,492--17\,505.

\bibitem{krizhevsky2009learning}
A.~Krizhevsky, G.~Hinton \emph{et~al.}, ``Learning multiple layers of features
  from tiny images,'' 2009.

\bibitem{le2015tiny}
Y.~Le and X.~Yang, ``Tiny imagenet visual recognition challenge,'' \emph{CS
  231N}, vol.~7, no.~7, p.~3, 2015.

\bibitem{lecun1989handwritten}
Y.~LeCun, B.~Boser, J.~Denker, D.~Henderson, R.~Howard, W.~Hubbard, and
  L.~Jackel, ``Handwritten digit recognition with a back-propagation network,''
  \emph{Advances in neural information processing systems}, vol.~2, 1989.

\bibitem{lee2021continual}
S.~Lee, S.~Goldt, and A.~Saxe, ``Continual learning in the teacher-student
  setup: Impact of task similarity,'' in \emph{International Conference on
  Machine Learning}.\hskip 1em plus 0.5em minus 0.4em\relax PMLR, 2021, pp.
  6109--6119.

\bibitem{lee2020neural}
S.~Lee, J.~Ha, D.~Zhang, and G.~Kim, ``A neural dirichlet process mixture model
  for task-free continual learning,'' in \emph{International Conference on
  Learning Representations}, 2020.

\bibitem{lesort2023challenging}
T.~Lesort, O.~Ostapenko, P.~Rodr{\'\i}guez, D.~Misra, M.~R. Arefin, L.~Charlin,
  and I.~Rish, ``Challenging common assumptions about catastrophic forgetting
  and knowledge accumulation,'' in \emph{Conference on Lifelong Learning
  Agents}.\hskip 1em plus 0.5em minus 0.4em\relax PMLR, 2023, pp. 43--65.

\bibitem{li2024locmoe}
J.~Li, Z.~Sun, X.~He, L.~Zeng, Y.~Lin, E.~Li, B.~Zheng, R.~Zhao, and X.~Chen,
  ``Locmoe: A low-overhead moe for large language model training,'' \emph{arXiv
  preprint arXiv:2401.13920}, 2024.

\bibitem{lin2024moe}
B.~Lin, Z.~Tang, Y.~Ye, J.~Cui, B.~Zhu, P.~Jin, J.~Zhang, M.~Ning, and L.~Yuan,
  ``Moe-llava: Mixture of experts for large vision-language models,''
  \emph{arXiv preprint arXiv:2401.15947}, 2024.

\bibitem{lin2021trgp}
S.~Lin, L.~Yang, D.~Fan, and J.~Zhang, ``Trgp: Trust region gradient projection
  for continual learning,'' in \emph{International Conference on Learning
  Representations}, 2021.

\bibitem{lin2023theory}
S.~Lin, P.~Ju, Y.~Liang, and N.~Shroff, ``Theory on forgetting and
  generalization of continual learning,'' in \emph{International Conference on
  Machine Learning}.\hskip 1em plus 0.5em minus 0.4em\relax PMLR, 2023, pp.
  21\,078--21\,100.

\bibitem{liu2021continual}
H.~Liu and H.~Liu, ``Continual learning with recursive gradient optimization,''
  in \emph{International Conference on Learning Representations}, 2021.

\bibitem{mccloskey1989catastrophic}
M.~McCloskey and N.~J. Cohen, ``Catastrophic interference in connectionist
  networks: The sequential learning problem,'' in \emph{Psychology of Learning
  and Motivation}.\hskip 1em plus 0.5em minus 0.4em\relax Elsevier, 1989,
  vol.~24, pp. 109--165.

\bibitem{nguyen2018practical}
H.~D. Nguyen and F.~Chamroukhi, ``Practical and theoretical aspects of
  mixture-of-experts modeling: An overview,'' \emph{Wiley Interdisciplinary
  Reviews: Data Mining and Knowledge Discovery}, vol.~8, no.~4, p. e1246, 2018.

\bibitem{nguyen2024statistical}
H.~Nguyen, P.~Akbarian, F.~Yan, and N.~Ho, ``Statistical perspective of top-k
  sparse softmax gating mixture of experts,'' in \emph{International Conference
  on Learning Representations}, 2024.

\bibitem{parisi2019continual}
G.~I. Parisi, R.~Kemker, J.~L. Part, C.~Kanan, and S.~Wermter, ``Continual
  lifelong learning with neural networks: A review,'' \emph{Neural Networks},
  vol. 113, pp. 54--71, 2019.

\bibitem{peng2023ideal}
L.~Peng, P.~Giampouras, and R.~Vidal, ``The ideal continual learner: An agent
  that never forgets,'' in \emph{International Conference on Machine
  Learning}.\hskip 1em plus 0.5em minus 0.4em\relax PMLR, 2023, pp.
  27\,585--27\,610.

\bibitem{riquelme2021scaling}
C.~Riquelme, J.~Puigcerver, B.~Mustafa, M.~Neumann, R.~Jenatton,
  A.~Susano~Pinto, D.~Keysers, and N.~Houlsby, ``Scaling vision with sparse
  mixture of experts,'' \emph{Advances in Neural Information Processing
  Systems}, vol.~34, pp. 8583--8595, 2021.

\bibitem{ritter2018online}
H.~Ritter, A.~Botev, and D.~Barber, ``Online structured laplace approximations
  for overcoming catastrophic forgetting,'' \emph{Advances in Neural
  Information Processing Systems}, vol.~31, 2018.

\bibitem{rypesc2023divide}
G.~Rype{\'s}{\'c}, S.~Cygert, V.~Khan, T.~Trzcinski, B.~M. Zieli{\'n}ski, and
  B.~Twardowski, ``Divide and not forget: Ensemble of selectively trained
  experts in continual learning,'' in \emph{The Twelfth International
  Conference on Learning Representations}, 2023.

\bibitem{saha2020gradient}
G.~Saha, I.~Garg, and K.~Roy, ``Gradient projection memory for continual
  learning,'' in \emph{International Conference on Learning Representations},
  2020.

\bibitem{serra2018overcoming}
J.~Serra, D.~Suris, M.~Miron, and A.~Karatzoglou, ``Overcoming catastrophic
  forgetting with hard attention to the task,'' in \emph{International
  Conference on Machine Learning}.\hskip 1em plus 0.5em minus 0.4em\relax PMLR,
  2018, pp. 4548--4557.

\bibitem{shazeer2016outrageously}
N.~Shazeer, A.~Mirhoseini, K.~Maziarz, A.~Davis, Q.~Le, G.~Hinton, and J.~Dean,
  ``Outrageously large neural networks: The sparsely-gated mixture-of-experts
  layer,'' in \emph{International Conference on Learning Representations},
  2016.

\bibitem{tang2021layerwise}
S.~Tang, D.~Chen, J.~Zhu, S.~Yu, and W.~Ouyang, ``Layerwise optimization by
  gradient decomposition for continual learning,'' in \emph{Proceedings of the
  IEEE/CVF conference on Computer Vision and Pattern Recognition}, 2021, pp.
  9634--9643.

\bibitem{viele2002modeling}
K.~Viele and B.~Tong, ``Modeling with mixtures of linear regressions,''
  \emph{Statistics and Computing}, vol.~12, pp. 315--330, 2002.

\bibitem{wang2022coscl}
L.~Wang, X.~Zhang, Q.~Li, J.~Zhu, and Y.~Zhong, ``Coscl: Cooperation of small
  continual learners is stronger than a big one,'' in \emph{European Conference
  on Computer Vision}.\hskip 1em plus 0.5em minus 0.4em\relax Springer, 2022,
  pp. 254--271.

\bibitem{wang2024comprehensive}
L.~Wang, X.~Zhang, H.~Su, and J.~Zhu, ``A comprehensive survey of continual
  learning: Theory, method and application,'' \emph{IEEE Transactions on
  Pattern Analysis and Machine Intelligence}, 2024.

\bibitem{wang2022learning}
Q.~Wang and H.~Van~Hoof, ``Learning expressive meta-representations with
  mixture of expert neural processes,'' \emph{Advances in neural information
  processing systems}, vol.~35, pp. 26\,242--26\,255, 2022.

\bibitem{xue2024openmoe}
F.~Xue, Z.~Zheng, Y.~Fu, J.~Ni, Z.~Zheng, W.~Zhou, and Y.~You, ``Openmoe: An
  early effort on open mixture-of-experts language models,'' \emph{arXiv
  preprint arXiv:2402.01739}, 2024.

\bibitem{yang2021m6}
A.~Yang, J.~Lin, R.~Men, C.~Zhou, L.~Jiang, X.~Jia, A.~Wang, J.~Zhang, J.~Wang,
  Y.~Li \emph{et~al.}, ``M6-t: Exploring sparse expert models and beyond,''
  \emph{arXiv preprint arXiv:2105.15082}, 2021.

\bibitem{yoon2019scalable}
J.~Yoon, S.~Kim, E.~Yang, and S.~J. Hwang, ``Scalable and order-robust
  continual learning with additive parameter decomposition,'' in
  \emph{International Conference on Learning Representations}, 2019.

\bibitem{yu2024boosting}
J.~Yu, Y.~Zhuge, L.~Zhang, D.~Wang, H.~Lu, and Y.~He, ``Boosting continual
  learning of vision-language models via mixture-of-experts adapters,''
  \emph{arXiv preprint arXiv:2403.11549}, 2024.

\bibitem{zadouri2023pushing}
T.~Zadouri, A.~{\"U}st{\"u}n, A.~Ahmadian, B.~Ermis, A.~Locatelli, and
  S.~Hooker, ``Pushing mixture of experts to the limit: Extremely parameter
  efficient moe for instruction tuning,'' in \emph{The Twelfth International
  Conference on Learning Representations}, 2023.

\bibitem{zhong2016mixed}
K.~Zhong, P.~Jain, and I.~S. Dhillon, ``Mixed linear regression with multiple
  components,'' \emph{Advances in Neural Information Processing Systems},
  vol.~29, 2016.

\bibitem{zhou2022mixture}
Y.~Zhou, T.~Lei, H.~Liu, N.~Du, Y.~Huang, V.~Zhao, A.~M. Dai, Q.~V. Le,
  J.~Laudon \emph{et~al.}, ``Mixture-of-experts with expert choice routing,''
  \emph{Advances in Neural Information Processing Systems}, vol.~35, pp.
  7103--7114, 2022.

\end{thebibliography}
\end{document}